%% file: arxiv_main.tex
\documentclass[11pt]{article}
\usepackage{graphicx} % Required for inserting images

\usepackage{natbib}
\usepackage{times}
\usepackage{amsmath}
\usepackage{amssymb}
\usepackage{amsthm}
\usepackage{xcolor}
\usepackage[ruled,vlined]{algorithm2e}
\usepackage{thm-restate}
\usepackage{mathtools}
\usepackage{thmtools}
\usepackage{enumitem}
\usepackage{circledsteps}
\usepackage{subcaption}
\usepackage{booktabs}
\usepackage{tikz}
\usetikzlibrary{arrows.meta,positioning,calc}
\usepackage{ifthen}
\newboolean{isSingleColumn}

\input{commands}

\DeclareMathOperator*{\argmax}{arg\,max}
\DeclareMathOperator*{\argmin}{arg\,min}

\newtheorem{theorem}{Theorem}
\newtheorem{lemma}[theorem]{Lemma}
\newtheorem{corollary}[theorem]{Corollary}
\newtheorem{definition}[theorem]{Definition}
\newtheorem{proposition}[theorem]{Proposition}

\makeatletter
\if@twocolumn
    \setboolean{isSingleColumn}{false}
\else
    \setboolean{isSingleColumn}{true}
\fi

\newcommand{\adaptiveHeader}[1]{%
    \ifthenelse{\boolean{isSingleColumn}}{%
        \paragraph{#1} % Use paragraph style in single column
    }{%
        \textbf{#1}    % Use bold text in double column
    }%
}
\newcommand{\imgwidth}{\linewidth}
\newcommand{\tabwidth}{\linewidth}

\usepackage{fullpage}
\usepackage{authblk}

\usepackage{hyperref}
\usepackage{cleveref}

\usepackage[most]{tcolorbox}
\tcbset{
  keyresult/.style={
    colback=gray!15,
    colframe=white,
    boxrule=0pt,
    enhanced,
    arc=2pt,
    left=6pt,
    right=6pt,
    top=3pt,
    bottom=3pt,
    before skip=6pt,
    after skip=6pt,
    fontupper=\normalfont
  }
}

\title{Preference Elicitation for Policy Optimization \\ and Application to Aligning Heart Transplantation with Human Values}

\author[1]{Itai Zilberstein\thanks{Correspondence to \texttt{izilbers@cs.cmud.edu, sandholm@cs.cmu.edu}.}}
\author[1]{Ioannis Anagnostides}
\author[2]{Zachary W. Sollie}
\author[2]{Arman Kilic}
\author[1,3]{Tuomas Sandholm}

\affil[1]{Department of Computer Science, Carnegie Mellon University, Pittsburgh, PA}
\affil[2]{Department of Surgery, Division of Cardiothoracic Surgery, Medical University of South Carolina, Charleston, SC}
\affil[3]{\small{Additional affiliations: Strategy Robot, Inc., Strategic Machine, Inc., Optimized Markets, Inc.}}

\begin{document}

\maketitle
\thispagestyle{empty}

\begin{abstract}
    Preference elicitation is essential for aligning AI systems with human values. Prior approaches (\textit{e.g.}, for organ allocation) often ask stakeholders to compare the decisions of an algorithm (\textit{e.g.}, patient A vs. patient B). Such a decision-level approach conflates the \emph{means} with the \emph{ends}. Instead, we elicit preferences directly over allocation outcomes to learn a utility function for policy optimization. We construct a novel preference elicitation algorithm for linear utilities that outperforms prior techniques in practice. Our algorithm has two phases. The first phase learns cutting planes through pairwise comparisons to rapidly shrink the space of possible attribute weights and warm-starts the second phase by eliminating dominated regions. The second phase then provably converges to the user’s utility function. We apply our technique to heart transplant allocation where a policy must balance competing objectives such as post-transplant outcomes, waitlist mortality, geographic ease, and equity. Using our algorithm, we conduct a user study to learn and aggregate a community-aligned utility function, and use it to optimize heart transplant policies that are significantly better aligned with human values. Compared to the hindsight optimum, the \textit{status quo} policy achieves a competitive ratio of just $0.54$, while our method is near-optimal with a competitive ratio of $0.95$.
\end{abstract}

\clearpage
\setcounter{page}{1}

\input{text/introduction_v2}
\input{text/perliminaries}
\input{text/algorithms}

\input{text/heart_transplant}
\input{text/conclusions}
\input{text/acks}

\bibliography{refs}

\clearpage

\appendix
\input{text/appendix}

\end{document}

%% file: commands.tex
\newcommand{\E}{\mathbb{E}}

\newcommand{\R}{\mathbb{R}}
\newcommand{\Sp}{\mathbb{S}}
\newcommand{\Objset}{\Sp^{d-1}_+}

\newcommand{\obj}{d}
\newcommand{\vx}{\mathbf{x}}
\newcommand{\vw}{\mathbf{w}}
\newcommand{\vy}{\mathbf{y}}
\newcommand{\vu}{\mathbf{u}}
\newcommand{\vb}{\mathbf{b}}
\newcommand{\vz}{\mathbf{z}}
\newcommand{\va}{\mathbf{a}}
\DeclareMathOperator{\diam}{diam}

\newcommand{\vd}{\mathbf{d}}
\newcommand{\hvw}{\widehat{\vw}}

\newcommand{\ut}{u}
\newcommand{\inner}[2]{\left\langle {#1}, {#2} \right\rangle}
\newcommand{\norm}[1]{\left\lVert {#1} \right\rVert}

\newcommand\envelope{\operatorname{env}}

\newcommand{\sse}{\subseteq}
\newcommand{\ip}[1]{\langle #1 \rangle}
\newcommand\cone{\operatorname{cone}}

\newcommand{\cF}{\mathcal{F}}

\newcommand{\cK}{\mathcal{K}}

\newcommand{\cP}{\mathcal{P}}

\newcommand{\cS}{\mathcal{S}}

\newcommand{\cU}{\mathcal{U}}

\newcommand{\cW}{\mathcal{W}}
\newcommand{\cX}{\mathcal{X}}
\newcommand{\cY}{\mathcal{Y}}

\newcommand{\eps}{\epsilon}
\newcommand{\BB}{\mathbb{B}}
\newcommand{\CDfull}{conic dimension\xspace}
\newcommand{\CDmath}{\operatorname{ConicDim}}
\newcommand{\CDshort}{conic dimension\xspace}
\newcommand{\dist}{\operatorname{dist}}

%% file: text/introduction_v2.tex
\section{Introduction}

% Start with preference elicitation and then go into application. Credit Dickerson and Sandholm for means and ends separation for kidney exchange in AAAI'15. Rename version space.  

Preference elicitation is a fundamental capability for AI systems that make decisions on behalf of humans. Many decision-support systems require balancing multiple competing objectives, yet the relative importance of these objectives is often difficult to specify \textit{a priori}. Preference elicitation addresses this challenge by learning a user’s utility function through interactive queries that are selected based on the user's responses so far. The elicited preferences alone, however, are not the end goal: ultimately, an AI system must make decisions on behalf of humans. As AI is increasingly deployed in high-stakes domains, there is a growing need for methodologies that bridge preference elicitation and downstream optimization so that AI systems make decisions aligned with human values.

%Stakeholders should specify \emph{what} a decision-making system should achieve while optimization determines \emph{how} best to achieve it.

% As AI is increasingly deployed in high-stakes domains, there is a growing need for preference elicitation algorithms that are both query-efficient in practice and supported by theoretical guarantees that the learned preferences are accurate.

This distinction is especially important in organ allocation, one of the highest-stakes algorithmic problems in healthcare. For many patients suffering from end-stage organ failure, transplantation is the leading treatment option. However, across organ types, the demand for donor organs far exceeds the available supply. In the US, over 100,000 candidates are on waiting lists for solid-organ transplants~\citep{hrsa26:Stats}. 

As organ allocation shifts towards data-driven algorithms to improve patient outcomes, a crucial challenge is to balance the competing objectives of the system~\citep{OPTN_Ethics,Keswani25:Can}. 
Organ allocation is inherently multi-objective, requiring tradeoffs among post-transplant outcomes, waitlist mortality, geographic ease, equity, and other ethical and operational considerations.

\ifthenelse{\boolean{isSingleColumn}}{%
        \renewcommand{\imgwidth}{\linewidth}
    }{%
        \renewcommand{\imgwidth}{0.75\linewidth}
    }%
    
\begin{figure*}[b!]
\centering
\resizebox{\imgwidth}{!}{%
    \input{figures/pipeline_figure_v2}
}
\caption{
Our human-aligned policy optimization framework. We use preference elicitation to learn \emph{what} stakeholders want the allocation system to achieve; optimization determines \emph{how} best to achieve it.
}
\label{fig:elicitation-policy-pipeline}
\end{figure*}

Existing preference elicitation approaches commonly ask stakeholders to compare patient attributes or individual allocation decisions, such as whether one patient should be prioritized over another~\citep{Freedman20:Adapting,Cummiskey25:Understanding,Dickerson26:Who}. Such questions ask stakeholders to reason directly about the \emph{mechanism} used to allocate organs, conflating the decisions of an algorithm with the outcomes of the system. This decision-level approach was adopted to inform \emph{continuous distribution}~\cite{Papalexopoulos24:Reshaping}, an allocation policy recently deployed for lungs and under consideration for hearts in the US~\citep{OPTN_Heart_CD}. 

We argue that existing preference elicitation for organ allocation typically conflates the \textit{means} with the \textit{ends}, a distinction originally introduced by \citet{Dickerson15:Futurematch} for kidney exchange. This distinction, however, was only introduced for policy optimization, not preference elicitation. Decision-level approaches do not directly capture what stakeholders ultimately want the system to achieve and do not yield an objective for computationally optimizing a policy. We argue---and justify by the results in this paper---that rather than eliciting preferences over individual allocation decisions or scoring-rule parameters, stakeholders should express preferences over the long-run goals of the system, while optimization methods should determine how policies can best achieve those goals. 

\begin{tcolorbox}[keyresult]
We formulate outcome-level preference elicitation to directly perform downstream policy optimization.
\end{tcolorbox}

We bridge the gap between practical preference elicitation and direct policy optimization by eliciting a utility over system-level outcomes. This formulation creates an algorithmic challenge: the elicitation procedure must be query-efficient enough for human stakeholders while providing a formal guarantee that the recovered utility is accurate. The framework should also yield a policy that is near-optimal for the elicited preferences. We present a framework that meets these desiderata, depicted in \Cref{fig:elicitation-policy-pipeline}.

\subsection{Our contributions}

We first study the query complexity of preference elicitation. We show that optimizing arbitrary monotone utilities, a weaker task than fully learning them, requires an exponential number of pairwise queries in the worst case (\Cref{thm:general-utility-lower-bound}), motivating our focus on linear utilities. Linear utility models are the canonical representation for multiattribute preferences~\citep{Fishburn70:Utility,Keeney93:Decisions}. They also remain one of the most widely studied representations across preference elicitation, mechanism design, and decision support (\textit{e.g.},~\citealp{Chajewska00:Making,Boutilier04:Eliciting,Parkes05:Optimize,Conitzer07:Eliciting,Cummiskey25:Understanding,Vayanos26:Robust}). 

We then turn to practical methods for preference elicitation of a linear utility function. We construct a novel preference elicitation algorithm that provably converges to the user's utility function, and outperforms prior techniques in practice. Our algorithm has two phases. The first phase learns cutting planes through pairwise comparisons to rapidly shrink the space of possible attribute weights. It warm starts the second phase by eliminating all dominated portions of the weight space. The second phase converges to the user's utility function at a provable rate through an iterative sieving procedure~\citep{Cohen25:Combinatorial}. In simulation, our algorithm achieves a target approximation using fewer queries than the cutting plane or sieving method. Our algorithm also beats the other algorithms from the literature, and is on par with the recent algorithm by \citet{Ge24:Learning}, which has a near-optimal query complexity of $O(d\log(d/\epsilon))$. Our algorithm, however, demonstrates much better robustness to noise than it, achieving $30\%$ less error.

%Their algorithm performs binary search on each coordinate dimension resulting in asking extreme and impractical queries of the user. It also assumes noise-free answers (their noise-robust implementation asks the same query multiple times). Our algorithm requires roughly the same number of queries while incurring $30\%$ less error in noisy settings and asking more practical queries.

We apply our preference elicitation algorithm to heart transplant allocation: we conduct a user study to learn a community-aligned utility function. We feed this utility function as input to a policy optimization algorithm to learn an allocation policy that optimizes the human-aligned utility function. We show that our optimized policy nearly Pareto dominates the current US \textit{status quo} allocation policy. 

\begin{tcolorbox}[keyresult]
Compared to the hindsight optimum under the elicited stakeholder utility, the \textit{status quo} policy achieves a competitive ratio of just $\mathbf{0.54}$, while our optimized policy is near-optimal with a competitive ratio of $\mathbf{0.95}$.
\end{tcolorbox}

The current heart transplant allocation system is under review in the US with the goal to better align outcomes with human values. Prior work on policy optimization for organ allocation takes the utility function as a given; we close this loop by directly optimizing the utility function elicited from humans. Our work comes at a pivotal moment for US heart-allocation policy and offers a framework for the development and evaluation of future allocation policies that incorporate stakeholder values. More broadly, we provide a practically deployable pipeline for learning what a policy should achieve and then optimizing the policy to achieve it.

\subsection{Related work}

The historical mechanism for allocating organs has relied on explicit rule-based policies developed by domain experts. More recently, as data availability has increased and machine learning techniques have improved, there is a push to leverage data-driven approaches to optimize organ allocation policies. In the US and abroad, simulation and optimization have already been used to deploy new rules~\citep{Kamath01:Model,Gottlieb17:Lung,Allen24:Transplant,OPTN25:ContinuousDistribution}.

\emph{Continuous distribution}, an allocation framework recently adopted for lungs and under consideration for heart transplantation in the US, prioritizes patients using a \textit{composite allocation score (CAS)} which aggregates patient attributes through a linear weighted scoring rule~\citep{Cummiskey25:Understanding}. Current approaches determine these weights using the \emph{analytic hierarchy process (AHP,~\citealp{Saaty77:Scaling})}, which elicits stakeholder preferences over patient attributes. Similar decision-level preference elicitation has also been studied in kidney exchange to identify which of two patients should be prioritized~\citep{Freedman20:Adapting,Dickerson26:Who}. 

Preference elicitation has been studied extensively as a way to make high-quality decisions without requiring users to fully specify their utility functions~\citep{Chajewska00:Making,Blum04:Preference}. Early work studied the query complexity of eliciting preferences through rich query models, including value queries~\citep{Zinkevich03:Polynomial}. To impose low cognitive burden on users, pairwise comparisons are attractive compared to numerical specification~\citep{Chajewska00:Making,Conitzer07:Eliciting}.

Our work closely relates to active learning from comparison queries. Comparison-based learning has been studied from both practical and theoretical perspectives~\citep{Kane17:Active,Eric07:Active,Sadigh17:Active,Biyik18:Batch,Johnston23:Deploying}.

Eliciting preferences on policy outcomes rather than decisions has been examined indirectly in other applications to perform policy \textit{selection} from a fixed set of contender policies~\citep{Yu20:Keeping,Vayanos26:Robust}. Our approach is inverted from those prior approaches: instead of starting from a fixed set of policies, we learn a general utility function and then \textit{optimize} a policy to it. 

The distinction between policy outcomes and decisions has also been examined through the lens of preference aggregation~\citep{Conitzer15:Crowdsourcing,Conitzer16:Rules,Zhang19:Better}. Their aggregation rules could be adapted in future work to our setting to ensure that the final utility function satisfies properties other than maximizing welfare, which is what we focus on, and obtain, in this paper. 

Another key benefit of our approach to preference elicitation is that the learned utility function is compatible with existing optimization-based methods for donor-patient matching~\citep{Su04:Patient,Abraham07:Clearing,Awasthi09:Online,Dickerson12:Dynamic,Dickerson15:Futurematch,Dickerson16:Position,Berrevoets20:OrganITE,Berrevoets21:Learning,Anagnostides25:Policy,Zilberstein26:Near,Zilberstein26:Aligning,Zilberstein26:Learning}.
We provide further pointers to related work in~\Cref{sec:related}.

%% file: figures/pipeline_figure_v2.tex
\begin{tikzpicture}[
    font=\footnotesize,
    node distance=1.5cm,
    pipelinebox/.style={
        draw=black!45,
        rounded corners=2pt,
        minimum width=3.25cm,
        minimum height=2.75cm,
        text width=2.45cm,
        align=center,
        inner sep=5pt
    },
    pipelinetitle/.style={
        font=\footnotesize\bfseries,
        text width=2.25cm,
        align=center
    },
    pipelinearrow/.style={
        -{Latex[length=2.5mm,width=2.5mm]},
        line width=1.2pt,
        draw=black
    },
    stepnumber/.style={
        circle,
        fill=blue!75!black,
        text=white,
        font=\bfseries,
        minimum size=5.5mm,
        inner sep=0pt
    },
    description/.style={
        font=\scriptsize,
        text width=2.25cm,
        align=center
    }
]

% ================================================================
% STEP 1: Elicit individual utility functions
% ================================================================
\node[pipelinebox] (step1) {};

\node[stepnumber]
    at ([xshift=2mm,yshift=-2mm]step1.north west)
    {1};

\node[pipelinetitle, anchor=north]
    at ([yshift=-3mm]step1.north)
    {Elicit individual utility functions};

\begin{scope}[shift={([yshift=-20mm]step1.north)}]

    % Back card: User 3
    \draw[
        fill=white,
        draw=black!35,
        rounded corners=1pt
    ]
        (-1.15,-0.02) rectangle (-0.30,0.48);

    \node[
        font=\scriptsize\bfseries
    ]
        at (-0.725,0.23)
        {User 3};

    % Middle card: User 2
    \draw[
        fill=white,
        draw=black!42,
        rounded corners=1pt
    ]
        (-1.25,-0.27) rectangle (-0.40,0.23);

    \node[
        font=\scriptsize\bfseries
    ]
        at (-0.825,-0.02)
        {User 2};

    % Front card: User 1
    \draw[
        fill=white,
        draw=black!50,
        rounded corners=1pt
    ]
        (-1.35,-0.52) rectangle (-0.50,-0.02);

    \node[
        font=\scriptsize\bfseries
    ]
        at (-0.925,-0.27)
        {User 1};

    % Arrows from users to learned utilities
    \draw[
        -{Latex[length=1.7mm,width=1.4mm]},
        black!60,
        line width=0.6pt
    ]
        (-0.28,0.23) -- (0.18,0.23);

    \draw[
        -{Latex[length=1.7mm,width=1.4mm]},
        black!60,
        line width=0.6pt
    ]
        (-0.38,-0.02) -- (0.18,-0.02);

    \draw[
        -{Latex[length=1.7mm,width=1.4mm]},
        black!60,
        line width=0.6pt
    ]
        (-0.48,-0.27) -- (0.18,-0.27);

    % Individual utility functions
    \node[
        blue!75!black,
        font=\scriptsize\bfseries,
        anchor=west
    ]
        at (0.22,0.23)
        {$\widehat{\vu}^{(3)}$};

    \node[
        blue!75!black,
        font=\scriptsize\bfseries,
        anchor=west
    ]
        at (0.22,-0.02)
        {$\widehat{\vu}^{(2)}$};

    \node[
        blue!75!black,
        font=\scriptsize\bfseries,
        anchor=west
    ]
        at (0.22,-0.27)
        {$\widehat{\vu}^{(1)}$};

\end{scope}

\node[description, anchor=south]
    at ([yshift=3mm]step1.south)
    {};

% ================================================================
% STEP 2: Aggregate stakeholder utilities
% ================================================================
\node[pipelinebox, right=of step1] (step2) {};

\node[stepnumber]
    at ([xshift=2mm,yshift=-2mm]step2.north west)
    {2};

\node[pipelinetitle, anchor=north]
    at ([yshift=-3mm]step2.north)
    {Aggregate utility functions};

% Coordinate axes and aggregate utility direction
\begin{scope}[shift={([yshift=-20mm]step2.north)}]

    \draw[-{Latex[length=1.7mm]}, black!65]
        (-0.75,-0.48) -- (0.85,-0.48);

    \draw[-{Latex[length=1.7mm]}, black!65]
        (-0.75,-0.48) -- (-0.75,0.67);

    \draw[dashed, black!35]
        (-0.75,-0.48) -- (0.52,-0.05);

    \draw[
        -{Latex[length=2.4mm]},
        blue!75!black,
        very thick
    ]
        (-0.75,-0.48) -- (0.56,0.50);

    \node[
        blue!75!black,
        font=\small\bfseries
    ]
        at (0.68,0.56)
        {$\bar{\vu}$};

\end{scope}

\node[description, anchor=south]
    at ([yshift=3mm]step2.south)
    {};

% ================================================================
% STEP 3: Optimize policy
% ================================================================
\node[pipelinebox, right=of step2] (step3) {};

\node[stepnumber]
    at ([xshift=2mm,yshift=-2mm]step3.north west)
    {3};

\node[pipelinetitle, anchor=north]
    at ([yshift=-3mm]step3.north)
    {Optimize an allocation policy};

% Feasible policies represented as points
\begin{scope}[shift={([yshift=-20mm]step3.north)}]

    \foreach \x/\y in {
        -0.72/-0.30,
        -0.53/0.15,
        -0.30/-0.06,
        -0.06/0.31,
        0.18/-0.27,
        0.42/0.03,
        0.66/0.40
    }{
        \fill[green!55!black]
            (\x,\y) circle (1.7pt);
    }

    \draw[
        -{Latex[length=2.4mm]},
        blue!75!black,
        very thick
    ]
        (-0.72,-0.48) -- (0.67,0.55);

    \node[
        blue!75!black,
        font=\scriptsize\bfseries,
        fill=white,
        inner sep=1pt
    ]
        at (0.10,0.57)
        {maximize $\bar{\vu}$};

\end{scope}

\node[description, anchor=south]
    at ([yshift=3mm]step3.south)
    {};

% ================================================================
% STEP 4: Human-aligned outcomes
% ================================================================
\node[pipelinebox, right=of step3] (step4) {};

\node[stepnumber]
    at ([xshift=2mm,yshift=-2mm]step4.north west)
    {4};

\node[pipelinetitle, anchor=north]
    at ([yshift=-3mm]step4.north)
    {Human-aligned outcomes};

% Simplified five-axis spider plot
\begin{scope}[shift={([yshift=-20mm]step4.north)}]

    \def\radarR{0.78}

    % Five radial axes
    \foreach \angle in {90,18,-54,-126,162}{
        \draw[black!22, line width=0.45pt]
            (0,0) --
            ({\radarR*cos(\angle)},
             {\radarR*sin(\angle)});
    }

    % Nested pentagonal grid
    \foreach \scale in {0.25,0.50,0.75,1.00}{
        \draw[black!22, line width=0.45pt]
            ({\scale*\radarR*cos(90)},
             {\scale*\radarR*sin(90)})
        \foreach \angle in {18,-54,-126,162}{
            -- ({\scale*\radarR*cos(\angle)},
                {\scale*\radarR*sin(\angle)})
        }
        -- cycle;
    }

    % Achieved outcome profile: outline only
    \draw[
        green!55!black,
        very thick
    ]
        ({0.74*\radarR*cos(90)},
         {0.74*\radarR*sin(90)})
        --
        ({0.70*\radarR*cos(18)},
         {0.70*\radarR*sin(18)})
        --
        ({0.83*\radarR*cos(-54)},
         {0.83*\radarR*sin(-54)})
        --
        ({0.76*\radarR*cos(-126)},
         {0.76*\radarR*sin(-126)})
        --
        ({0.68*\radarR*cos(162)},
         {0.68*\radarR*sin(162)})
        -- cycle;

    % Outcome markers
    \foreach \angle/\scale in {
        90/0.74,
        18/0.70,
        -54/0.83,
        -126/0.76,
        162/0.68
    }{
        \fill[green!55!black]
            ({\scale*\radarR*cos(\angle)},
             {\scale*\radarR*sin(\angle)})
            circle (1.6pt);
    }

\end{scope}

\node[description, anchor=south]
    at ([yshift=3mm]step4.south)
    {};

% ================================================================
% Pipeline arrows
% ================================================================
\draw[pipelinearrow]
    (step1.east) -- (step2.west);

\draw[pipelinearrow]
    (step2.east) -- (step3.west);

\draw[pipelinearrow]
    (step3.east) -- (step4.west);

\end{tikzpicture}

%% file: text/perliminaries.tex
\section{Utility functions}
Let $\vx\in \R^\obj_{\ge 0}$ be a $\obj$-dimensional outcome vector. We assume each stakeholder has a monotone utility function $\ut : \R^\obj_{\ge0}\to\R$. For a set $\cX$, we use the notation $\diam_2(\cX)$ to denote its $\ell_2$ diameter and define the distance to a set as $\dist(\vx,\cX) = \min_{\vy \in \cX} \norm{\vx-\vy}_2$. The goal is to learn a utility function that captures a user's preferences over a set of points, $\cP$, using pairwise comparison queries. 

In heart transplant allocation, each coordinate of $\vx$ corresponds to an aggregate system-level objective of an allocation policy rather than a patient-level attribute. The set $\cP \subseteq \mathbb R^d_{\ge 0}$ represents the feasible set of \textit{outcome} profiles induced by allocation policies. The set may change as the policy class, patient population, donor supply, or clinical practice evolve. We therefore distinguish the downstream policy set $\cP$ from the domain on which preferences are elicited. We elicit preferences over a frontier-agnostic query domain $\cX= \Objset = \{ \vx \in \R^\obj_{\ge 0} \mid \norm{\vx}_2  = 1 \}$. Points in $\cX$ represent hypothetical normalized outcome profiles and need not correspond to a currently feasible allocation policy. A learned utility function can then be evaluated on a current or future feasible policy set $\cP$. 

\subsection{A motivating lower bound}
\label{sec:motivation-lower-bound}

The utility functions considered above could, in principle, be arbitrary monotone functions over the objective space. Without additional structure, eliciting enough information to optimize such utilities is intractable. The following theorem shows that welfare maximization even over just two arbitrary monotone utility functions  requires exponentially many bits of information in the number of objectives. 

\begin{restatable}[Exponential lower bound for arbitrary monotone utilities]{proposition}{GeneralUtilityLowerBound}
\label{thm:general-utility-lower-bound}
For every even $\obj$, any protocol that, for every pair of coordinate-wise monotone utilities $\ut_1,\ut_2:\R^\obj_{\ge0}\to\{0,1\}$, 
outputs an exact maximizer $\widehat{\vx} \in \argmax_{\vx\in\Objset} \left(\ut_1(\vx)+\ut_2(\vx)\right)$ with probability at least $2/3$ requires $2^{\Omega(\obj)}$ bits of communication in the worst case.
\end{restatable}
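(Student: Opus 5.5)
We reduce from two-party set disjointness on $N = \binom{d}{d/2}$ bits. Its randomized communication complexity is $\Omega(N)$ (Kalyanasundaram--Schnitger, Razborov), and this holds even under the promise that the sets intersect in at most one element. Since $N = 2^{\Omega(d)}$ by Stirling, the reduction gives the claimed bound.

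\textbf{Index set and utilities.} Let the index set be the middle layer
\[
\mathcal{L} = \{S \subseteq [d] : |S| = d/2\}.
\]
To each $S\in\mathcal{L}$ associate the point $\vx_S = \mathbf{1}_S/\sqrt{d/2}\in\Objset$. The middle layer is an antichain, so no $\vx_S$ dominates another coordinate-wise. Given Alice's set $A\subseteq\mathcal{L}$ and Bob's set $B\subseteq\mathcal{L}$, define
\[
\ut_1(\vx)=1 \iff \exists S\in A \text{ with } \vx\ge \vx_S \text{ coordinate-wise},
\]
and define $\ut_2$ from $B$ in the same way. Each is an up-closed indicator, hence monotone and $\{0,1\}$-valued, and each player can construct their own utility locally.

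\textbf{Key geometric lemma.} The crux is the following claim: if $\vx\in\Objset$ satisfies $\vx\ge\vx_S$ and $\vx\ge \vx_T$ for $S\ne T$ in $\mathcal{L}$, then $\vx\ge \vx_S\vee\vx_T$. The right-hand side is supported on $|S\cup T|>d/2$ coordinates, each equal to $1/\sqrt{d/2}$, so its norm exceeds $1$. This contradicts $\norm{\vx}_2=1$. Hence every point of the sphere lies above at most one $\vx_S$, and above it only if it lies in the region where exactly that $S$ is dominated. Consequently:
\begin{itemize}
\item $\ut_1(\vx)+\ut_2(\vx)=2$ is achievable iff some $S\in A\cap B$ exists; take $\vx=\vx_S$.
\item Otherwise the maximum value is at most $1$.
\end{itemize}

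\textbf{Reduction.} Run the maximization protocol with success probability $2/3$, obtaining $\widehat{\vx}$. Alice and Bob then spend $2$ extra bits to exchange $\ut_1(\widehat{\vx})$ and $\ut_2(\widehat{\vx})$, and declare ``intersecting'' iff the sum is $2$. The answer is correct with probability $2/3$. When the sets are disjoint the sum never equals $2$, so the error is one-sided in that direction, and standard amplification is not even needed. The lower bound for disjointness thus transfers, giving $2^{\Omega(d)}$ bits.

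\textbf{Expected obstacle.} The main care point is the geometric lemma: ensuring the points are mutually incomparable \emph{and} that no unit vector can sit above two of them. Norm-scaling on the middle layer handles both. One must also note that monotonicity is required only on $\R^d_{\ge0}$, which the up-closed indicators satisfy.
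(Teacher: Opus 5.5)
Your proposal is correct and follows essentially the same route as the paper: the same middle-layer points $\mathbf{1}_S/\sqrt{d/2}$, the same up-closed threshold utilities, and the same reduction from randomized set disjointness on $\binom{d}{d/2}$ elements, paying only a constant number of extra bits. The one difference is in the geometric step: the paper proves the stronger fact that a unit vector dominating $\mathbf{p}_S$ must equal $\mathbf{p}_S$ (since $\|\mathbf{p}_S+\vz\|_2^2 = 1 + 2\langle \mathbf{p}_S,\vz\rangle + \|\vz\|_2^2$ with $\vz\ge 0$), whereas you prove only that no unit vector dominates two distinct such points, which already suffices for the dichotomy.
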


The proof of~\Cref{thm:general-utility-lower-bound}, along with all other omitted proofs, appears in~\Cref{sec:appendix-proofs} of the Appendix.

Learning arbitrary utility functions well enough to support downstream optimization is at least as hard as finding a welfare-maximizing outcome, since an exact representation of the utilities would imply the ability to solve the welfare-maximization problem. Therefore, the result above motivates our focus on structured utility classes, and we will focus on the most prevalent utility function class in the literature, namely linear utilities.

\subsection{Linear utility functions and normalization}

A \textit{linear utility function} is specified by a weight vector $\vw \in \R^\obj_{\ge 0}$ and $\ut(\vx) = \inner{\vx}{\vw}$.  Linear utility functions are invariant to positive rescaling: for any $\lambda>0$, the weights $\vw$ and $\lambda\vw$ induce the same preference ordering over outcomes. We therefore identify a linear utility function by its positive ray and fix a canonical representative by normalizing weights. We use two equivalent normalizations of this preference direction. The simplex-normalized representative is denoted by $\vw\in\cW$, where
$ \cW = \left\{\vw\in\R^\obj_{\ge 0} \mid \norm{\vw}_1=1 \right\}$. When Euclidean normalization is more convenient, we represent the same preference direction by a vector $\vu\in\Objset$. We support both normalizations because the different preference elicitation algorithms, which we use as components of our elicitation algorithm, are designed for one or the other. These two normalizations have the same underlying preference direction and can be bijectively mapped to one another. Any algorithm that computes an $\epsilon$-approximation in one normalization can be used to generate an $\epsilon'$-approximation in the other normalization, where $\epsilon' = \epsilon \sqrt{d}$. 

%The simplex normalization is consistent with the continuous distribution framework in which attributes are aggregated through normalized weights~\citep{OPTN25:ContinuousDistribution}.

%% file: text/algorithms.tex
\section{Eliciting linear preferences}

In this section, we present three methods for learning a user’s linear utility function using pairwise comparisons. The first method uses a cutting plane approach and operates over the continuous domain $\Objset$. The cutting plane approach performs well in practice, but does not have any \textit{ex ante} finite-query convergence or solution-quality guarantees. Second, in the subsequent subsection, we adapt the recent iterative sieving algorithm of~\citet{Cohen25:Combinatorial} to construct an elicitation algorithm that provably converges to the user's utility function. Third, we present our hybrid algorithm. It combines the practical  efficiency of the cutting plane method with the provable convergence of the sieving algorithm, and empirically outperforms both of them---and the other prior elicitation algorithms, which we will describe later. 

We assume that the user has a utility function specified by a latent, true weight vector $\vw^* \in \cW$, which governs the underlying tradeoffs. We seek to learn a weight vector $\widehat{\vw}$ that approximates their true utility. 

In the noiseless ternary comparison model, a query presents two outcome profiles $\vx,\vy\in\Objset$ and receives one of three responses:
\[
\text{query response} =
\begin{cases}
    \vx\succ \vy, & \text{if } \inner{\vx-\vy}{\vw^*}>0,\\
    \vy\succ \vx, & \text{if } \inner{\vx-\vy}{\vw^*}<0,\\
    \vx\approx \vy, & \text{if } \inner{\vx-\vy}{\vw^*}=0.
\end{cases}
\]
Unless otherwise stated, our formal guarantees are stated with respect to a noiseless oracle. The exact oracle is useful for isolating the query complexity of linear preference elicitation. Human responses, however, may be noisy, especially when two profiles have nearly equal utility. In a noisy setting, we assume the user may answer with some margin of error, $\delta > 0$. Each response is assumed to be consistent with the true utility up to additive error $\delta$. We obtain the following implications from queries: $\vx \succ \vy \implies \inner{\vx - \vy}{\vw^*} \ge -\delta$ and $\vx \approx \vy \implies |\inner{\vx - \vy}{\vw^*}| \le \delta$. 

A simple packing argument shows that guaranteeing an $\epsilon$-approximation of the user's utility requires at least $\Omega( \obj \log(1/\epsilon))$ queries when $\obj\ge 2$ (\Cref{lem:linear-lower-bound} in the Appendix).

\input{text/cutting-plane}
\input{text/iterative_sieving}
\input{text/hybrid}

%, but we use the unweighted average as a natural baseline for estimating a community-level utility function. The aggregation requires fixing a common normalization, and one could perform an analogous aggregation using exclusively simplex normalized weights.

%% file: text/cutting-plane.tex
\subsection{A cutting plane method}

Cutting plane methods are a common approach to preference elicitation of linear utility functions. They maintain a space of weights that remain consistent with the observed pairwise responses. We leverage the \textit{analytic center cutting plane method (ACCPM)}~\citep{Atkinson95:Cutting}, a method that cuts the analytic center of the weight space at each iteration.  Like other cutting plane methods for preference elicitation, ACCPM does not have a bounded query complexity to reach a prescribed approximation error on the returned weight vector. However, the algorithm does provide an \textit{ex post} utility guarantee under the assumption that the user's answers are consistent up to a margin of $\delta$ with $\vw^*$. The \textit{ex post} certificate is measured by the diameter of the final weight space. We provide comprehensive details of ACCPM in~\Cref{sec:appendix-accpm}, including descriptions of the query synthesis and analysis of robustness to noisy responses.  

%% file: text/iterative_sieving.tex
\subsection{A bounded-query sieving algorithm}

The cutting plane method is effective in practice, but it does not converge in a provably bounded number of steps. When eliciting preferences from real users, we desire a guarantee that our algorithm terminates with the prescribed approximation accuracy using a number of queries that is not prohibitively large. We therefore develop for our setting an algorithm with a provable expected query complexity adapted from the recent \textit{iterative sieving} algorithm of~\citet{Cohen25:Combinatorial}. The algorithm operates on a finite cover of $\Objset$, and returns a point whose utility is approximately maximal under the user's linear utility.  In this subsection, we analyze the Euclidean-normalized representative of the user's preference direction. Throughout this subsection we write the true weight vector as
\[
    \vu^*\in\Objset
    \text{ and }
    \norm{\vu^*}_2=1.
\]
This is without loss since positive rescalings of a linear utility function induce the same ordering over outcomes. Unlike ACCPM, which maintains a weight space over weights, the sieving algorithm maintains a finite set of candidate outcome profiles. Because the maximizer over $\Objset$ is exactly $\vu^*$, eliminating suboptimal profiles also eliminates candidate preference directions.

The goal of the algorithm is to converge to a single profile in the finite set across iterations. The algorithm relies on two key concepts introduced by~\citet{Cohen25:Combinatorial}: the \emph{envelope} and the \emph{\CDfull}. Intuitively, at each iteration, after sorting a subset of profiles according to queries that the user has answered, the envelope identifies additional profiles that can be certified as suboptimal without further queries.  The \CDfull bounds how many profiles must be sorted at each iteration before many such inferences become possible. We provide the necessary background from the work of~\citet{Cohen25:Combinatorial} in~\Cref{sec:appendix-cohen}. We present the iterative sieving procedure in~\Cref{alg:linear-elicitation}.

We prove that the returned weight is an $\eps$-approximation of the Euclidean-normalized weight vector (\Cref{lem:sieving-epsilon-approximation}). Furthermore, the elicited weight vector approximates the utility for all profiles in $\Objset$ (\Cref{cor:sieving-utility-approximation}). Finally, the expected query complexity of the algorithm for our setting is $O(\obj^2 \log^2(1/\epsilon) \allowbreak \log (\obj\log (1/\epsilon))))$ (\Cref{lem:sieving-query-complexity}). All theoretical analyses pertaining to~\Cref{alg:linear-elicitation} are provided in~\Cref{sec:appendix-sieving}, along with further details on the algorithm and a new analysis of its noise robustness.

%% file: text/hybrid.tex
\subsection{A hybrid cutting plane and sieving algorithm}

The cutting plane method is effective in practice because it aggressively shrinks the weight space of plausible weights. However, it only provides an \textit{ex post} accuracy certificate through the diameter of the final weight space. When trying to obtain high-fidelity approximations, ACCPM can stall and fail to reduce the weight space's diameter. On the other hand, the sieving method provides a worst-case query bound for any level of fidelity, but it may require many comparisons when run from scratch. 

Therefore, in this subsection, we construct a hybrid approach that first runs the cutting plane algorithm for a fixed number of queries, and then uses the resulting cuts to warm start the sieving algorithm by further constraining the point removal before and during each iteration. Although the cutting plane phase maintains simplex-normalized weights and the sieving phase operates over Euclidean-normalized directions, this does not cause a conflict as positive rescaling preserves all pairwise comparisons.

The warm start uses the information learned by ACCPM to make the sieving phase more decisive. We first use the analytic center of the ACCPM weight space to choose an initial incumbent profile in $\cX$ by selecting the point that is closest directionally. Then, before and during sieving, we prune any profile that cannot beat the current incumbent under any weight vector still consistent with the accumulated cuts. In each sieving round, this test is strengthened by incorporating the ordering information from the sorted sample $\sigma$. The hybrid algorithm does not change the basic logic of sieving; it simply adds the information learned during the cutting plane phase to further remove points during each iteration. We present the hybrid approach as~\Cref{alg:hybrid-elicitation}.

\begin{algorithm}[!h]
\caption{Hybrid ACCPM and sieving preference elicitation}
\label{alg:hybrid-elicitation}
%\KwIn{target Euclidean accuracy $\eps$, cutting plane budget $T$}
%\KwOut{Euclidean-normalized preference-direction estimate $\widehat{\vu}\in\Objset$}
Run ACCPM for at most $T$ queries, obtaining $\cW_T$ and analytic center $\bar{\vw}_T$\\
\If{$\diam_2(\cW_T)\le \eps/\sqrt{\obj}$}{
    \Return{$\bar{\vw}_T/\norm{\bar{\vw}_T}_2$}
}
$\cX \coloneqq$ an $\epsilon/\sqrt{2}$-cover of $\Objset$ \\
$k \coloneqq c \obj \log (1/\eps)$ for a constant $c$  \\ 
$\vy \coloneqq \argmax_{\vx \in \cX} \inner{\vx}{\bar{\vw}_T}$ \\
Eliminate all points $\vx \in \cX \setminus \{\vy\}$ such that $\max_{\vw \in \cW_T} \inner{\vx-\vy}{\vw} \le 0$\\
\While{$|\cX|$ is more than $2k$}{
        Independently include each point of
        $\cX$ in a subset $\cY$ with probability $2k/|\cX|$ \\
        $\cY \gets \cY \cup \{ \vy\}$ \\
        $\sigma \gets$ result of sorting $\cY$ using $O(k \log k)$ expected comparisons
        \\
        $\vy \gets$ first (largest) element of $\sigma$ \\
        $\cW_{\sigma} \gets \{ \vw\in\cW_T \mid \inner{\vx_i - \vx_{i+1}}{\vw}\ge 0  \quad \forall i=1,\ldots,|\sigma|-1 \}$ \\
        Eliminate all points $\vx \in \cX \setminus \{\vy\}$ such that \\
        \Indp
        \Circled{1} $\dist(\vy - \vx, \envelope(\sigma)) < \eps^2/4$, or\\
        \Circled{2} $\max_{\vw \in \cW_\sigma} \inner{\vx-\vy}{\vw} \le 0$\\
        \Indm
}
\Return the maximum element, $\widehat{\vu}$, in $\cX$ found by a linear search
\end{algorithm}

Since we run ACCPM for a constant number of iterations, $T$, and the additional removal conditions before and during the sieving remove at least as many points as~\Cref{alg:linear-elicitation}, the hybrid approach inherits the bounded query complexity of~\Cref{alg:linear-elicitation}. Rule \Circled{1} is exactly the sieving elimination, and \Circled{2} only removes additional points without asking user queries. Therefore, the number of sieving rounds cannot be larger than in the ordinary analysis.

\begin{corollary}[Query complexity of~\Cref{alg:hybrid-elicitation}]
\label{cor:hybrid-query-complexity}
  The expected number of queries that \Cref{alg:hybrid-elicitation} asks is at most $O(\obj^2 \log^2(1/\epsilon) \log (\obj\log (1/\epsilon)))) + T$.
\end{corollary}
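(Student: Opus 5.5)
The bound splits into two terms, one for each phase of \Cref{alg:hybrid-elicitation}. The ACCPM phase asks at most $T$ queries by construction, so its cost is exactly the additive $T$. If the early return fires, no further queries are asked. If it does not, the final linear search over $\cX$ once $|\cX|\le 2k$ costs $O(k)=O(\obj\log(1/\eps))$ comparisons, which is dominated by the main term. So it remains to show that the sieving loop uses at most the expected number of queries established for \Cref{alg:linear-elicitation} in \Cref{lem:sieving-query-complexity}.

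For the sieving loop I will reuse the per-round analysis of \Cref{lem:sieving-query-complexity} essentially verbatim. That analysis rests on two per-round facts.
\begin{enumerate}
\item Each round samples $\cY$ with inclusion probability $2k/|\cX|$, so $\E|\cY|=O(k)$ (plus the one incumbent $\vy$). Sorting then costs $O(k\log k)$ expected comparisons.
\item Conditional on the current set $\cX$, rule \Circled{1} alone removes a constant fraction of $\cX$ in expectation. This is the conic-dimension/envelope argument of \citet{Cohen25:Combinatorial}.
\end{enumerate}
Together these give $O(\obj\log(1/\eps))$ rounds in expectation, since the cover has size $(1/\eps)^{O(\obj)}$ and each round removes a constant fraction. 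With $k=c\obj\log(1/\eps)$ per round at cost $O(k\log k)$, the total is $O(\obj^2\log^2(1/\eps)\log(\obj\log(1/\eps)))$.

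The key observation is that fact 2 is a statement about a single round given the current $\cX$: sample from $\cX$, sort, then eliminate by rule \Circled{1}. In the hybrid algorithm, the sampling and sorting steps are identical, and the set of points eliminated is a superset of what \Circled{1} eliminates, because \Circled{2} is an additional disjunct. The pre-loop pruning step only shrinks the initial $\cX$.

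The main obstacle is that the hybrid run is not a step-by-step coupling with \Cref{alg:linear-elicitation}, since extra pruning changes which sets $\cX$ appear later. I therefore would not argue ``removes at least as many points as the original run.'' Instead I would rely on the fact that the constant-fraction bound in fact 2 holds for \emph{every} current set $\cX$ and every incumbent $\vy$, so it holds in particular for the hybrid's sets. I will check in \Cref{sec:appendix-sieving} that the proof of \Cref{lem:sieving-query-complexity} indeed needs only this conditional guarantee and the initial size bound $|\cX|\le(1/\eps)^{O(\obj)}$. If so, a standard argument on the expected number of rounds yields the same bound for the hybrid loop. One such argument is the potential $\log|\cX|$ decreasing by a constant in expectation each round, via optional stopping or the usual multiplicative-drift lemma.

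I also need to check that inserting the ACCPM-derived incumbent $\vy$ does not break fact 2. The original algorithm likewise carries an incumbent into $\cY$, so this should be identical. Finally, \Circled{2} never asks queries, since it is an LP-type check over $\cW_\sigma$. Adding $T$ then gives the corollary.
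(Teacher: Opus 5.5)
Your proposal is correct and follows the paper's argument. The ACCPM phase contributes the additive $T$. Rule \Circled{1} is exactly the ordinary sieving rule, while rule \Circled{2} and the pre-loop pruning only delete extra points and ask no queries, so the bound of \Cref{lem:sieving-query-complexity} carries over. Your one refinement, relying on the per-round conditional halving guarantee of the Sieving Lemma instead of a pathwise ``removes at least as many points'' comparison, is what actually makes the paper's informal claim that ``the number of sieving rounds cannot be larger than in the ordinary analysis'' rigorous. That guarantee holds for any current subset of the cover and any incumbent, since inserting $\vy$ into $\sigma$ only enlarges the envelope and can only improve its first element.
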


What remains is to prove the correctness of the algorithm. 

\begin{restatable}[Weight approximation for~\Cref{alg:hybrid-elicitation}]{proposition}{HybridWeightApprox}
\label{lem:hybrid-epsilon-approximation}
Let $\vu^*\in\Objset$ be the Euclidean-normalized true weight vector, and let $\widehat{\vu}$ be the point returned by \Cref{alg:hybrid-elicitation}. Then
\[
    \norm{\widehat{\vu}-\vu^*}_2\le \eps.
\]
\end{restatable}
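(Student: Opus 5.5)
There are two return paths in \Cref{alg:hybrid-elicitation}, and I will treat them separately.

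\emph{Early-return branch.} Under the noiseless oracle, every ACCPM cut is satisfied by the true simplex-normalized weight $\vw^*$. Hence $\vw^*\in\cW_T$, and the analytic center $\bar{\vw}_T$ also lies in $\cW_T$. If $\diam_2(\cW_T)\le\eps/\sqrt{\obj}$, then $\norm{\bar{\vw}_T-\vw^*}_2\le\eps/\sqrt{\obj}$. The normalization conversion stated earlier in the paper then turns this into a Euclidean-normalized error of at most $\eps$. Concretely, the map $\vw\mapsto\vw/\norm{\vw}_2$ restricted to the simplex, where $\norm{\vw}_2\ge 1/\sqrt{\obj}$, is $\sqrt{\obj}$-Lipschitz. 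This branch therefore gives $\norm{\widehat{\vu}-\vu^*}_2\le\eps$.

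\emph{Sieving branch.} The plan is to reduce to the correctness argument for \Cref{alg:linear-elicitation} (\Cref{lem:sieving-epsilon-approximation}). I would isolate the invariant used there: the cover $\cX$ always retains some point $\vx^\star$ with $\norm{\vx^\star-\vu^*}_2\le\eps/\sqrt2$, or equivalently with near-maximal utility $\inner{\vx^\star}{\vu^*}\ge 1-\eps^2/4$. The final linear search then returns a point $\widehat{\vu}$ with $\inner{\widehat{\vu}}{\vu^*}\ge\inner{\vx^\star}{\vu^*}$. Since both vectors are unit, $\norm{\widehat{\vu}-\vu^*}_2^2=2-2\inner{\widehat{\vu}}{\vu^*}\le 2-2\inner{\vx^\star}{\vu^*}$. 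From here the same computation as in \Cref{lem:sieving-epsilon-approximation} gives the bound $\eps$.

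The real work is to show the invariant survives the two extra elimination rules. I would strengthen it slightly: at all times, either the incumbent $\vy$ or some retained point is at least as good as the best cover point $\vx^\star=\argmax_{\vx\in\cX}\inner{\vx}{\vu^*}$. Rule \Circled{1} and the incumbent update are unchanged from \Cref{alg:linear-elicitation}, so they are handled by the imported lemma.

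For Rule \Circled{2}, and for the pre-sieving pruning, I use membership of the true weight. Under noiseless answers, $\vw^*\in\cW_T$ as before. Moreover, $\sigma$ is a correct sort, so $\inner{\vx_i-\vx_{i+1}}{\vw^*}\ge0$ for every consecutive pair, which gives $\vw^*\in\cW_\sigma$. Now suppose a point $\vx$ is removed by the test $\max_{\vw\in\cW_\sigma}\inner{\vx-\vy}{\vw}\le0$. Then in particular $\inner{\vx}{\vw^*}\le\inner{\vy}{\vw^*}$, so $\vx$ is weakly dominated by the retained incumbent $\vy$ under the true utility. Positive rescaling from $\vw^*$ to $\vu^*$ preserves this inequality. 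Hence whenever $\vx^\star$ is eliminated by \Circled{2}, the incumbent $\vy$ is at least as good and can take its place in the invariant. Since $\vy$ is never eliminated and is always re-added to $\cY$, it is only ever replaced by the top of a correct sort, which has weakly higher utility.

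The main obstacle I expect is Rule \Circled{1}. The original proof may rely on a specific witness point being retained rather than on ``some point at least as good''. I therefore need to check that the envelope argument of \citet{Cohen25:Combinatorial} only uses that the retained set contains a point of utility at least $\inner{\vx^\star}{\vu^*}$, minus slack $\eps^2/4$ absorbed in the threshold. If it uses more than that, I would instead run the invariant directly on utility values. The claim would be that the maximum utility over $\cX\cup\{\vy\}$ never drops below $\max_{\text{cover}}\inner{\cdot}{\vu^*}-\eps^2/4$. This follows because \Circled{1} only removes points within $\eps^2/4$ of being certified dominated by $\vy$, and \Circled{2} only removes points truly dominated by $\vy$.
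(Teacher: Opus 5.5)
Your proposal is correct and follows essentially the same route as the paper. The early-return branch goes through $\vw^*\in\cW_T$ and the $\sqrt{\obj}$-Lipschitz Euclidean normalization on the simplex. The sieving branch shows that the pre-sieve pruning and rule~\Circled{2} only discard points weakly dominated by the incumbent (because $\vw^*\in\cW_T$ and $\vw^*\in\cW_\sigma$), after which the correctness argument of \Cref{alg:linear-elicitation} carries over.

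Of your invariants, keep the fallback one. The exact versions (a retained point within $\eps/\sqrt{2}$ of $\vu^*$, or a retained point at least as good as the best cover point) are not preserved by rule~\Circled{1}, which may discard a point that beats the incumbent by up to $\eps^2/4$. The invariant that works is: either the best cover point is still in $\cX$, or the incumbent $\vy$, whose true utility never decreases across rounds, is within $\eps^2/4$ of it. This is what the paper's ``at least as safe, hence apply \Cref{lem:sieving-epsilon-approximation}'' step relies on implicitly.
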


The pruning rule is safe because the true simplex-normalized weight remains feasible in $\cW_T$ and in every refined set $\cW_\sigma$. Thus, if $\max_{\vw\in\cW_\sigma}\inner{\vx-\vy}{\vw}\le0$, then in particular $\inner{\vx-\vy}{\vw^*}\le0$, so $\vx$ cannot improve on the incumbent under the true utility. And, as in~\Cref{cor:sieving-utility-approximation}, we also obtain the $\eps$-approximation of utility from the $\epsilon$-approximation of $\vu^*$. 

\begin{restatable}[Utility approximation for~\Cref{alg:hybrid-elicitation}]{corollary}{HybridUtilityApprox}
\label{cor:hybrid-utility-approximation}
For all $\vx\in\Objset$,
\[
    \left|
        \inner{\vx}{\widehat{\vu}}
        -
        \inner{\vx}{\vu^*}
    \right|
    \le \eps.
\]
\end{restatable}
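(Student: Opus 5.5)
The plan is to derive the statement directly from \Cref{lem:hybrid-epsilon-approximation} by Cauchy--Schwarz, exactly as \Cref{cor:sieving-utility-approximation} is derived from \Cref{lem:sieving-epsilon-approximation}.

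Fix an arbitrary $\vx\in\Objset$. By linearity of the inner product,
\[
    \inner{\vx}{\widehat{\vu}} - \inner{\vx}{\vu^*} = \inner{\vx}{\widehat{\vu}-\vu^*}.
\]
Cauchy--Schwarz then bounds the absolute value by $\norm{\vx}_2\,\norm{\widehat{\vu}-\vu^*}_2$.

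Every point of $\Objset$ has unit Euclidean norm by definition, so $\norm{\vx}_2=1$. The remaining factor is at most $\eps$ by \Cref{lem:hybrid-epsilon-approximation}. Combining these gives $\left|\inner{\vx}{\widehat{\vu}}-\inner{\vx}{\vu^*}\right|\le\eps$, and since $\vx$ was arbitrary the bound holds uniformly over $\Objset$.

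There is one point to check, which is that both quantities use the Euclidean normalization. The corollary is stated with $\widehat{\vu}$ and $\vu^*$, and \Cref{lem:hybrid-epsilon-approximation} is stated for exactly these vectors. This holds both in the early-return branch, which outputs $\bar{\vw}_T/\norm{\bar{\vw}_T}_2$, and in the sieving branch. So no conversion between the simplex and Euclidean normalizations is needed, and the $\sqrt{\obj}$ factor from switching normalizations never appears.

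All the substantive work lies in \Cref{lem:hybrid-epsilon-approximation}. Once that is available, the corollary is a one-line consequence.
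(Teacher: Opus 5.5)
Your proposal is correct and matches the paper's argument: the paper derives this corollary exactly as it derives \Cref{cor:sieving-utility-approximation}, using Cauchy--Schwarz with $\norm{\vx}_2=1$ and plugging in the bound $\norm{\widehat{\vu}-\vu^*}_2\le\eps$ from \Cref{lem:hybrid-epsilon-approximation}. Your check that both return branches output a Euclidean-normalized $\widehat{\vu}$ is a sensible addition that needs no further work, since the proposition already covers both branches.
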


\subsection{Simulation results for algorithm selection}

We evaluate the preference elicitation algorithms in simulation before conducting our user study. In addition to the algorithms presented in this paper, we evaluate a random baseline that samples query pairs uniformly at random and updates the weight-space estimate in the same way as ACCPM, the \textit{minimax regret} approach of~\citet{Boutilier04:Eliciting}, and the near-information-theoretic optimal algorithm of \citet{Ge24:Learning} which we will refer to as \textit{attribute-wise binary search (AWBS)}. We explain the relationship between regret and utility approximation, and how to adapt minimax regret to obtain the desired approximation of the weight vector in~\Cref{sec:appendix-minimax-regret}. As we will show, our hybrid reaches each target approximation efficiently and demonstrates strong noise robustness. 

We use a comparison oracle with a randomly generated weight vector. For each value of $\eps$, we run each algorithm on the same 20 independently sampled random weight vectors, and operate in $\obj=6$ dimensions, mirroring the real elicitation for heart transplantation. To ensure a fair comparison, we report all approximation targets and errors in Euclidean distance between Euclidean-normalized preference directions. We present the results in~\Cref{fig:queries_vs_epsilon}. All algorithms perform substantially better than random querying, which does not converge within a budget of 250 queries. The iterative sieving (\Cref{alg:linear-elicitation}) requires over 100 queries as $\eps$ decreases. The ACCPM algorithm (\Cref{alg:accpm-elicitation}) is much more stable and uses up to around 50 queries. Notably, our hybrid algorithm (\Cref{alg:hybrid-elicitation}) uses fewer queries than ACCPM, iterative sieving, and minimax regret, and nearly an identical number of queries as AWBS, demonstrating it is a practical algorithm that benefits from the combined techniques. 

\ifthenelse{\boolean{isSingleColumn}}{%
        \renewcommand{\imgwidth}{0.4\linewidth}
    }{%
        \renewcommand{\imgwidth}{0.6\linewidth}
    }%

\begin{figure}[!h]
    \centering
    \includegraphics[width=\imgwidth]{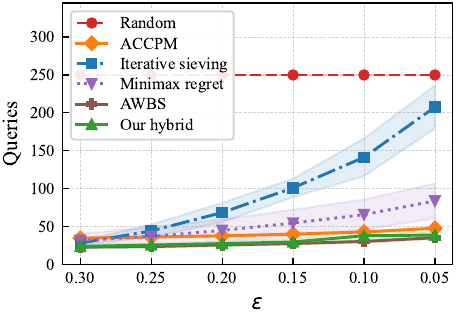}
    \caption{Queries as a function of $\epsilon$ using random weight vectors with a max budget of 250 queries. Shaded area shows standard deviation across 20 trials per $\epsilon$.}
    \label{fig:queries_vs_epsilon}
\end{figure}

\begin{figure*}[!t]
    \centering
    \includegraphics[width=0.6\linewidth]{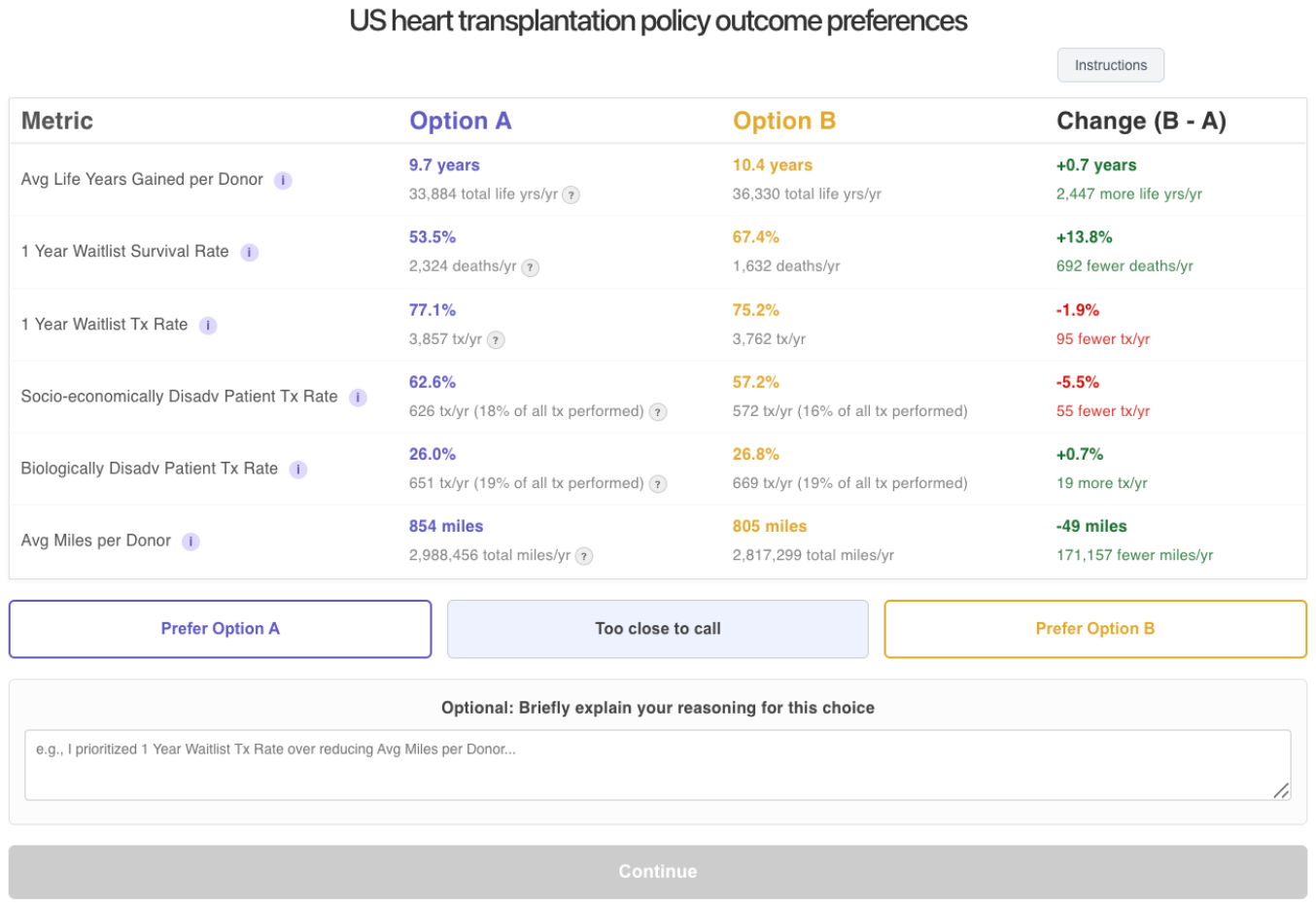}
    \caption{Elicitation interface used in our stakeholder study. Participants compare two outcome profiles and may indicate which profile they prefer or that the profiles are too close to call.}
    \label{fig:interface-query}
\end{figure*}

Next, we evaluate how our algorithms perform under noisy responses. We simulate a noisy oracle, parameterized by $\delta$, that answers a query between $\vx_i$ and $\vx_j$ by sampling a value $\zeta \in [-\delta,\delta]$ uniformly at random and reports $\vx_i\succ\vx_j$ if $\inner{\vx_i}{\vw^*}+\zeta \ge \inner{\vx_j}{\vw^*}$. We fix $\epsilon=0.05$ and $d=6$, and report the performance of each algorithm across different values of $\delta$ (\Cref{fig:noisy_results}). AWBS exhibits the largest degradation in Euclidean error as $\delta$ increases. Both ACCPM and our hybrid algorithm provide low-error approximations of the true weight vector for small values of $\delta$, and degrade gracefully as the noise increases to $0.1$, achieving $\ell_2$ error less than $0.35$ in the presence of large noise. Compared to AWBS, our hybrid algorithm has $30\%$ less error in the noisy setting. The noise robustness of the hybrid algorithm is a key property when deploying with human stakeholders as human responses may be noisy, and is one major motivating reason to use our algorithm over AWBS. Based on the experiments of this section, we will use our hybrid elicitation algorithm for the heart allocation experiments in the rest of this paper.

%% file: text/heart_transplant.tex
\section{Alignment of heart transplant allocation with human values}

We apply our preference elicitation algorithm to elicit stakeholder values to understand how humans trade off the different objectives of the heart transplant allocation system. We then use the elicited utility function for downstream policy optimization to generate an online allocation policy.

We consider $\obj=6$ objectives of the system. Each objective is an aggregate outcome of the allocation system, rather than a patient-level priority attribute. These objectives were identified and refined through interaction with clinical experts and motivated by prior work on heart transplant allocation~\citep{OPTN25:ContinuousDistribution,Papalexopoulos24:Reshaping,Anagnostides26:Position}. The six objectives are the following.
\begin{enumerate}
    \item \textit{Average life years gained per donor}: The average projected additional years of life a recipient will gain from a transplant compared to remaining on the waitlist.

    \item \textit{One year waitlist survival rate}: The percentage of waitlisted patients who do not receive a transplant that survive one year after being listed. 

    \item \textit{One year waitlist transplant rate}: The percentage of waitlisted patients who receive a transplant within one year of being listed. 

    \item \textit{Socio-economically disadvantaged patient transplant rate}: The percentage of waitlisted patients who are from a disadvantaged community that receive a transplant. A disadvantaged community is determined by a \textit{distressed community index (DCI)} $\ge 80$.

    \item \textit{Biologically disadvantaged patient transplant rate}: The percentage of waitlisted patients who are biologically disadvantaged that receive a transplant. A biologically disadvantaged patient is defined as a patient with \textit{calculated panel reactive antibody (cPRA)} $> 50\%$ or blood type O.

    \item \textit{Average miles per donor}: The average distance a donor heart travels to the recipient in miles.
\end{enumerate}

The objectives span measures of the clinical effectiveness, fairness, equity, and placement ease of the transplant system. We provide more details on these objectives in~\Cref{sec:appendix-objectives}. Each objective is first oriented so that larger values are preferred and then scaled to $[0,1]$. The elicitation algorithm operates on the corresponding normalized profiles, while the interface maps each coordinate back to its native scale for readability. Because the average miles per donor is the only objective that represents a cost to be minimized, its sign is reversed during the normalization phase so that all dimensions represent a maximization task.

\subsection{Eliciting a utility function}

We leverage~\Cref{alg:hybrid-elicitation} to elicit the users' preferences. A snapshot of the elicitation interface is shown in~\Cref{fig:interface-query}. We run our algorithm with a tolerance of $\epsilon=0.05$. At this fidelity, the finite cover $\cX$ of $\Objset$ contains millions of candidate profiles. During the warm-start phase, we run ACCPM for $T=30$ iterations. For all point-elimination steps in~\Cref{alg:hybrid-elicitation}, we use linear programs (as detailed in \Cref{sec:appendix-sieving}), which we implement using the GNU Linear Programming Kit in C++. We run these filtering programs for millions of 6-dimensional points in seconds.  Hyperparameter settings and compute details are presented in~\Cref{sec:appendix-hyperparameters}. 

We include the results of our survey, which was shared with members of anonymized academic communities, and was subject to IRB approval. With real users, the average number of queries to obtain a 0.05-approximation is roughly 40 (\Cref{tab:human-queries}), mirroring the simulated results. The algorithm converged in as few as 26 queries already during the ACCPM phase, and in the worst-case took 74 queries in total. Users who did not finish the elicitation exited after 7.6 queries on average (\Cref{tab:human-dropout}); they were removed from the study.

\ifthenelse{\boolean{isSingleColumn}}{%
        \renewcommand{\tabwidth}{0.5\linewidth}
    }{%
        \renewcommand{\tabwidth}{\linewidth}
    }%

\begin{table}[h!]
    \centering
    \resizebox{\tabwidth}{!}{
    \begin{tabular}{ccccc}
        \toprule
        \textbf{Sample size} & \textbf{Mean queries} & \textbf{Std queries} & \textbf{Max queries}  & \textbf{Min queries} \\
        \midrule
        22 & 40.9 & 12.4 & 74 & 26 \\
        \bottomrule
    \end{tabular}}
    \caption{Distribution of queries asked with real users. Values indicate the queries elicited by~\Cref{alg:hybrid-elicitation} and do not include the 10 validation queries.}
    \label{tab:human-queries}
\end{table}

After our elicitation algorithm terminates, we asked the user an additional $10$ random queries to evaluate how well the algorithm had learned the utility function. Using the validation queries, we ascertain how accurate the linear utility model is by comparing the user's responses to the utility function that was learned for them. We report the accuracy as a function of the noise tolerance, $\delta$, which is the threshold for which the utility function must agree with the user (\textit{i.e.}, if the user answers $\vx \succ \vy$, then the linear model should identify $\inner{\vx - \vy}{\vw^*} \ge -\delta$), in~\Cref{tab:human-validation}. Without any margin for error, a user's computed utility function correctly answers $82.7\%$ of validation queries on average.  With a margin of $0.1$, the linear utilities nearly perfectly model the user's decision making, with an average accuracy of $94.8\%$.

\ifthenelse{\boolean{isSingleColumn}}{%
        \renewcommand{\tabwidth}{0.4\linewidth}
    }{%
        \renewcommand{\tabwidth}{0.4\linewidth}
    }%
\begin{table}[h!]
    \centering
    \small
    \resizebox{\tabwidth}{!}{
    \begin{tabular}{l|cc}
        \toprule
        \textbf{Tolerance ($\delta$)} & \textbf{Mean accuracy} & \textbf{Std accuracy} \\
        \midrule
        Exact & 82.7\% & 13.9\% \\
        0.01 & 84.2\% & 13.2\% \\        
        0.05 & 89.6\% & 9.7\% \\      
        0.1 & 94.8\% & 7.5\% \\        
        \bottomrule
    \end{tabular}}
    \caption{Accuracy of the learned linear utility model on validation queries with varying tolerances $\delta$.}
    \label{tab:human-validation}
\end{table}

Finally, we report the simplex-normalized aggregated weights from community members in~\Cref{tab:human-weight-distribution} and~\Cref{fig:human-weights}. We use the welfare-maximizing mean aggregation method, but this is just one of many possible aggregation strategies. We provide more details on this in~\Cref{sec:appendix-weight-aggregation}. 

\ifthenelse{\boolean{isSingleColumn}}{%
        \renewcommand{\tabwidth}{0.5\linewidth}
    }{%
        \renewcommand{\tabwidth}{0.8\linewidth}
    }%
    
\begin{table}[h!]
    \centering
    \resizebox{\tabwidth}{!}{
    \begin{tabular}{lcc}
        \toprule
        \textbf{Dimension} & \textbf{Mean weight} & \textbf{Std weight} \\
        \midrule
        Avg life years gained per donor & 0.39 & 0.16 \\
        One year waitlist survival rate & 0.30 & 0.20 \\
        One year waitlist tx rate & 0.13 & 0.09 \\
        Socio-economically disadv tx rate & 0.07 & 0.07 \\
        Biologically disadv tx rate & 0.06 & 0.05 \\     
        Avg miles per donor & 0.05 & 0.04 \\   
        \bottomrule
    \end{tabular}}
    \caption{Distribution of user weights.}
    \label{tab:human-weight-distribution}
\end{table}

The average life years gained per donor\footnote{These results suggest that maximizing the life years of organ placement is a priority, and provide motivation for prior computational methods that do that~\citep{Berrevoets20:OrganITE,Berrevoets21:Learning,Anagnostides25:Policy,Zilberstein26:Near}.} was the most  prioritized objective with a weight of nearly 0.4 followed by the one year waitlist survival rate. The transplant rates for disadvantaged patients and transport efficiency were the least prioritized with weights below 0.1. 

% make footnote; These results suggest that maximizing the life years of organ placement is a priority, and provide strong motivation for prior computational methods that do such~\citep{Berrevoets20:OrganITE,Berrevoets21:Learning,Anagnostides25:Policy,Zilberstein26:Near}.

\input{text/policy_optimization}

%% file: text/policy_optimization.tex
\subsection{Closing the loop: Human-value-aligned policy optimization}

Having elicited and aggregated a utility function that captures how humans trade off the objectives of the heart transplantation system, we show that we can automatically optimize an online heart transplant allocation policy that is near-optimal for the human preferences. We intentionally treat policy optimization as a downstream consumer of the learned utility function; any optimization framework could be substituted here. We adapt the self-supervised imitation learning method of~\citet{Zilberstein26:Learning} for learning a data-driven online matching policy. This method computes, over historical training periods, the optimal hindsight allocation via integer programming given a utility function. An online policy is then learned by training a neural network to imitate the decisions of the omniscient allocation. 

We compare our optimized policy to the current US \textit{status quo} policy and the standard greedy benchmark which selects the patient to transplant with the highest predicted life years gained. We leverage a simulator that uses real, historical data from UNOS dating back to 1987. We optimize our policy using data from January to March of 2019 for training, and evaluate on the unseen, real trajectory of events from April to December 2019 (\Cref{tab:policy_optimization}). We provide more details on the simulation environment and policies in~\Cref{sec:appendix-policy-optimization}.

\ifthenelse{\boolean{isSingleColumn}}{%
        \renewcommand{\tabwidth}{0.6\linewidth}
    }{%
        \renewcommand{\tabwidth}{\linewidth}
    }%
    
\begin{table}[h!]
    \centering
    \resizebox{\tabwidth}{!}{
    \begin{tabular}{l|ccc|r}
    \toprule
    \textbf{Metric} & \textbf{Status quo} & \textbf{Greedy} & \textbf{Ours} & \textbf{Hindsight OPT} \\
    \midrule
    Avg life years gained per donor & 4.53 & 10.21 & \textbf{12.91 }& 12.85 \\
    One year waitlist survival rate & 87.3\% & 87.4\% & \textbf{87.6\%} & 88.0\% \\
    One year waitlist tx rate & 45.4\% & 47.8\% & \textbf{55.1\%} & 57.7\% \\
    Socio-economically disadv tx rate & 58.8\% & \textbf{60.8\%} & 60.4\% & 65.8\% \\
    Biologically disadv tx rate & \textbf{58.5\%} & 56.2\% & 57.8\% & 57.1\% \\
    Avg miles per donor & 610 & 745 & \textbf{603} & 353 \\
    \midrule
    Elicited utility function value & 0.278 & 0.407 & \textbf{0.495} & 0.519 \\
    Competitive ratio to OPT & 0.54 & 0.78 & \textbf{0.95} & 1.00 \\
    \bottomrule
    \end{tabular}
    }
    \caption{Results of policies in simulation from April to December 2019.}
    \label{tab:policy_optimization}
\end{table}

Our optimized policy nearly Pareto dominates both the current US \textit{status quo} and the greedy allocation policy across the 6 objectives. Our policy is also near-optimal, achieving a competitive ratio of $95\%$. In other words, our optimized online policy performs nearly as well as an optimal algorithm that has perfect foresight of future events. This competitive ratio is substantially higher than both the \textit{status quo} (54\%) and the greedy allocation (78\%). 

%% file: text/conclusions.tex
\section{Conclusions}
\label{sec:conclusions}

We studied the problem of preference elicitation for policy optimization. We developed a novel two-phase preference elicitation algorithm that combines the empirical efficiency of cutting plane methods with the theoretical guarantees of iterative sieving, that substantially reduced the number of queries required in practice. We then applied our framework to heart transplant allocation, with the conceptual distinction to elicit preferences over allocation outcomes rather than individual allocation decisions. Through a user study, we demonstrated that accurate utility functions can be learned with approximately 40 pairwise comparison queries. We then leveraged the  elicited utility function for downstream policy optimization. We learned an online allocation policy that was substantially more aligned with human-values, achieving $95\%$ of the hindsight optimal utility, compared to the current US \textit{status quo} policy's $54\%$. By separating the elicitation of stakeholder values (\textit{ends}) from the optimization of allocation policies (\textit{means}), our framework provides a practical, end-to-end path toward aligning AI decision-making systems with human values.

%% file: text/acks.tex
\section*{Acknowledgments}

Tuomas Sandholm and his PhD students Ioannis Anagnostides and Itai Zilberstein are supported by NIH award A240108S001, the Vannevar Bush Faculty Fellowship ONR N00014-23-1-2876, and National Science Foundation grant RI-2312342. Itai Zilberstein is also supported by the NSF Graduate Research Fellowship Program under grant DGE2140739. Arman Kilic is supported by NIH RO1 grant 5R01HL162882-03 which contributed to the funding for completion of this project. Arman Kilic is a speaker and consultant for Abiomed, Abbott, 3ive, and LivaNova, and founder and owner of QImetrix. All additional authors have no financial relationships to disclose. Any opinions, findings, and conclusions or recommendations expressed in this material are those of the author(s) and do not necessarily reflect the views of the funding agencies. 

%% file: text/appendix.tex
\setcounter{table}{0}
\renewcommand{\thetable}{A\arabic{table}}

\setcounter{figure}{0}
\renewcommand{\thefigure}{A\arabic{figure}}

\numberwithin{theorem}{section}
\numberwithin{lemma}{section}
\numberwithin{definition}{section}
\numberwithin{corollary}{section}
\numberwithin{proposition}{section}

\input{text/related}
\input{text/appendix/appendix-cohen-addad}

\input{text/appendix/appendix-accpm}

\input{text/appendix/appendix-sieving}
\input{text/appendix/minimax-regret}
\input{text/appendix/appendix-aggregation}
\input{text/appendix/appendix-proofs}
\input{text/appendix/appendix-heart-transplant-elicitation}
\input{text/appendix/appendix-figs-tables}

\input{text/appendix/appendix-hyperparameters}
\input{text/appendix/appendix-policy-optimization}
\input{text/appendix/irb}

%% file: text/related.tex
\section{Further related work}
\label{sec:related}

A key reference point for our work is the continuous distribution framework for organ allocation~\citep{Papalexopoulos24:Reshaping}. In continuous distribution, candidates are prioritized using a \textit{composite allocation score (CAS)}, which aggregates allocation attributes such as medical urgency, expected post-transplant benefit, placement efficiency, proximity, and pediatric priority into a linear weighted scoring rule~\citep{Cummiskey25:Understanding}.  Continuous distribution used preference elicitation to learn weights for CAS using the \emph{analytic hierarchy process (AHP)}~\citep{Saaty77:Scaling}. We include an excerpt from the official release that details AHP for continuous distribution~\citep{OPTN_Lung_CD}. 

\begin{quote}
    ``Anyone can participate in the AHP exercises for continuous distribution, which will occur for all organ types. In an AHP exercise for continuous distribution, participants are shown pairs of attributes (blood type vs. distance, for example) that will be used to prioritize candidates. The AHP participant must decide, if all else is considered equal, which of the two attributes is more important than the other when prioritizing a candidate for an organ.''
\end{quote}

For kidney exchange, recent preference elicitation approaches have focused on eliciting preferences over policies rather than decisions~\citep{Vayanos26:Robust}. This work also considers linear scoring rules and restricts the policy class to simulator-based outcomes, performing the elicitation with a current feasible set of policies rather than a frontier-agnostic domain.

Minimax regret has been used to choose robust decisions and to guide further elicitation when the utility function is only partially known~\citep{Boutilier06:Constraint,Boutilier13:Computational}. Similar ideas have been applied to auctions, where elicitation is used to learn preferences relevant to winner determination and mechanism design~\citep{Conen01:Preference,Boutilier04:Eliciting,Sandholm06:Preference}. These approaches are close in spirit to our methods as they reason about decision quality under a set of utility functions consistent with the elicited information. However, regret-based elicitation typically focuses on selecting a good alternative from a fixed set. We leverage the \textit{analytic center cutting plane method (ACCPM)}, which iteratively adds separating hyperplanes in the weight space through the analytic center of the region~\citep{Atkinson95:Cutting}. Cutting plane methods have also been used in adaptive conjoint analysis, where each query imposes a new linear constraint that shrinks the weight space~\citep{Toubia03:Fast,Toubia04:Polyhedral}. 

An alternative method is Bayesian preference elicitation, which maintains a posterior distribution over user preferences and selects queries that are informative under that posterior~\citep{Guo10:Real,Eric07:Active,Gonzalez17:Preferential,Lin22:Preference}. The empirical performance of Bayesian elicitation methods depends on modeling assumptions such as the prior, likelihood, acquisition function, and posterior approximation. In addition, Bayesian methods can become computationally demanding, and they generally provide posterior uncertainty estimates rather than deterministic worst-case guarantees. More recently, model-free techniques were proposed~\citep{Martin23:Model}. Unlike much of the Bayesian and regret-based preference elicitation literature, which typically focuses on selecting a good alternative from a fixed decision set, our goal is to recover a reusable preference function that can be applied to downstream optimization over a changing frontier of heart allocation policies.

%\paragraph{LLM-driven preference elicitation}
%There is emerging research on using \textit{large language models (LLMs)} as proxies for human preferences, and we build on some of this recent work. LLMs successfully served as simulated economic agents or human-subject proxies in some experimental settings~\citep{Horton23:Large}. \citet{Huang25:Accelerated} study accelerated preference elicitation with LLM-based proxies, using language models to reduce the number of human queries required. In allocation and matching settings, \citet{Soumalias24:Machine} use pairwise comparisons for course allocation, illustrating the usefulness of comparison-based elicitation in complex allocation environments.

%Our LLM-driven method follows this general direction. While our human preference elicitation learns a linear utility function, we leverage LLMs to account for non-linear utility functions. After observing a small number of pairwise comparisons and textual explanations from the user, the model acts as a proxy oracle for additional preference queries or as a direct value oracle over outcome profiles. The LLM is not intended to replace human value judgments entirely. Instead, it serves to amortize the cost of elicitation by extrapolating from a limited set of human-provided comparisons and rationales. 

%% file: text/appendix/appendix-cohen-addad.tex
\section{Additional background from \citet{Cohen25:Combinatorial} }
\label{sec:appendix-cohen}

In this section, we provide the required definitions and results from the work of \citet{Cohen25:Combinatorial}. Their results for linear optimization with comparison oracles are concerned with finding the \textit{minimum} element in a set, whereas we seek to find the \textit{maximum}. This discrepancy does not change any of the results, and all their definitions and theorems can be applied to our setting. 
 
\subsection{Envelope and conic dimension}
We start with the definitions of the \emph{envelope} and \emph{\CDfull} of a point set. The envelope leverages an ordering of the points $\vx_1, \ldots, \vx_k$ induced by $\vw^*$, denoted $\sigma$. The envelope captures all the implications of the ordering. That is, all implications of ordering without asking further queries. 

\begin{definition}
\label{def:envelope}
For a sequence of points $\sigma = (\vx_1, \ldots, \vx_k) \in (\R^d)^k$:
\begin{enumerate}
    \item The \emph{envelope} of $\sigma$ is
    \[
        \envelope(\sigma) := \cone(\{\vx_j - \vx_i\}_{1 \le i < j \le k})
    \]
    \[
        = \cone(\{\vx_{i+1} - \vx_i\}_{1 \le i < k}).
    \]
    \item The points are \emph{conically independent} if, for each $t \in \{2, \ldots, k\}$,
    \[
        \vx_t - \vx_1 \notin \envelope(\sigma^{1:t-1}),
    \]
    where $\sigma^{1:t}$ denotes the prefix consisting of the first $t$ elements of $\sigma$.
\end{enumerate}
\end{definition}

\begin{definition}
  \label{defn:conic-dimension}
  The \emph{\CDfull} of a set $\cY \subseteq \R^d$, denoted
  $\CDmath(\cY)$, is the largest integer $k$ for which there exists a
  length-$k$ conically independent sequence
  $\sigma = (\vx_1, \dots, \vx_k) \in \cY^k$.
\end{definition}

Intuitively, the \CDfull is the maximum sequence length, $t$, in which one cannot infer the ordering of points $\vx_i$ for $i < t$.

\subsection{Basic subsequences}
\label{sec:basic-subsequences}

The notion of basic subsequence is necessary for the proof of the query complexity of~\Cref{alg:linear-elicitation}. Given a set $\cY\subseteq \R^d$ with $\CDmath(\cY)=k$ and $\sigma = (\vx_1, \dots, \vx_n) \in \cY^n$, we define the process that maintains a subsequence $\pi_t$ of $\sigma$ for $t \in [0,n]$. We define $\pi_0 = \ip{}$, and at step $t+1$, the process updates $\pi_{t+1}$ using
\[
\pi_{t+1} = 
\begin{cases}
	\pi_t \circ \vx_{t+1}, & \text{if } \vx_{t+1} - \vx_1 \notin \envelope(\pi_t) \\
	\pi_t, & \text{otherwise}.
\end{cases}
\]
See that $\pi_{t+1}$ always contains $\pi_t$ as a prefix and $\pi_{t+1}$ contains at most one more element, $|\pi_n| \leq k$. The second claim follows from the definition of \CDfull. We define the ``basis'' of $\sigma$ as $B(\sigma) := \pi_n$, which is the subsequence of $\sigma$ obtained by this process.

\begin{lemma}
	\label{lem:basis-cone-property}
	Given a sequence $\sigma = (\vx_1, \dots, \vx_n) \in \cY^n$, we have:
	\begin{align}
		\cone(\{\vx_j - \vx_1\}_{1 \leq j \leq n}) \subseteq \envelope(B(\sigma)) \subseteq \envelope(\sigma).
	\end{align}
\end{lemma}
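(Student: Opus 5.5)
We must prove Lemma \ref{lem:basis-cone-property}: for $\sigma=(\vx_1,\dots,\vx_n)$,
\[
\cone(\{\vx_j-\vx_1\}_{1\le j\le n})\subseteq \envelope(B(\sigma))\subseteq\envelope(\sigma).
\]
The plan is to treat the two inclusions separately, proving the first by induction along the process that builds $\pi_t$.

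\textbf{Second inclusion.} This one is easy. $B(\sigma)=\pi_n$ is a subsequence $(\vx_{i_1},\dots,\vx_{i_m})$ of $\sigma$ with $i_1<\dots<i_m$. By Definition~\ref{def:envelope}, $\envelope(B(\sigma))$ is generated by the differences $\vx_{i_b}-\vx_{i_a}$ with $a<b$. Each such generator is $\vx_j-\vx_i$ with $i<j$ in $\sigma$, hence lies in $\envelope(\sigma)$. Since a cone contains the cone generated by any subset of its elements, the inclusion follows.

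\textbf{First inclusion.} It suffices to show that each generator $\vx_j-\vx_1$ lies in $\envelope(B(\sigma))$, since that set is a convex cone. I would prove by induction on $t$ that $\vx_j-\vx_1\in\envelope(\pi_t)$ for all $j\le t$, and that $\envelope(\pi_t)\subseteq\envelope(\pi_{t+1})$.

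The monotonicity claim is immediate, because $\pi_t$ is a prefix of $\pi_{t+1}$ and the generators of the former are generators of the latter.

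For the step at index $t+1$, there are two cases.
\begin{itemize}
\item If $\vx_{t+1}-\vx_1\in\envelope(\pi_t)$, then $\pi_{t+1}=\pi_t$ and the claim holds.
\item Otherwise $\vx_{t+1}$ is appended. The first element of every nonempty $\pi_t$ is $\vx_1$: at $t=0$ the envelope of the empty sequence is $\{0\}$, and $\vx_1-\vx_1=0$ lies in it, so care is needed here. Assuming $\vx_1\in\pi_t$, the pair ($\vx_1$, $\vx_{t+1}$) appears in $\pi_{t+1}$ with $\vx_1$ earlier, so $\vx_{t+1}-\vx_1$ is a generator of $\envelope(\pi_{t+1})$.
\end{itemize}
The earlier indices $j\le t$ are handled by monotonicity. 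Taking $t=n$ finishes the argument.

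\textbf{Main obstacle.} The delicate point is the base case, namely that $\vx_1$ is actually the first element of $B(\sigma)$. Under the literal rule, $\vx_1-\vx_1=0\in\envelope(\langle\rangle)=\cone(\emptyset)=\{0\}$, which would skip $\vx_1$. I would resolve this by adopting the convention, implicit in the definition of conic independence and in the bound $|\pi_n|\le k$, that the first element is always included. Equivalently, I would interpret the envelope of a sequence of length at most one as the empty set for the purposes of the test. Then $\pi_1=\langle \vx_1\rangle$, and from that point on the induction above goes through.

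If instead $\vx_1$ were omitted, the inclusion would fail: for a later element $\vx_s$ placed first in $\pi$, the difference $\vx_s-\vx_1$ would not be a generator. So this convention must be stated explicitly at the start of the proof.
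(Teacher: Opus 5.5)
Your proof is correct. There is no route to compare against: the paper restates this lemma as background from Cohen et al.\ and gives no proof, noting only that $\pi_t$ is a prefix of $\pi_{t+1}$. Your argument therefore has to stand on its own, and it does. The second inclusion is exactly your observation that every generator of the envelope of a subsequence is also a generator of $\envelope(\sigma)$. The first inclusion follows from your induction once $\vx_1$ heads every nonempty $\pi_t$. An appended $\vx_{t+1}$ contributes $\vx_{t+1}-\vx_1$ as a generator, a skipped one already satisfies $\vx_{t+1}-\vx_1\in\envelope(\pi_t)$, and envelopes only grow along prefixes.

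The subtlety you flag is genuine. With the standard convention $\cone(\emptyset)=\{0\}$, the update rule as written finds $\vx_1-\vx_1=0\in\envelope(\langle\,\rangle)$ and so rejects $\vx_1$, and the lemma then fails. For $\vx_1=0$, $\vx_2=e_1$, $\vx_3=e_2$ in $\R^2$, the literal process outputs $B(\sigma)=(\vx_2,\vx_3)$. Its envelope is $\cone(\{e_2-e_1\})$, which does not contain $\vx_2-\vx_1=e_1$. The paper's justification of $|\pi_n|\le k$ via conic independence silently relies on the same convention, since conic independence of $\pi_n$ is measured against its own first element.

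One small correction: your ``equivalently'' is slightly loose. Declaring the envelope of a length-one sequence empty in the test also forces $\vx_2$ into $\pi_2$ even when $\vx_2=\vx_1$. Both readings make the lemma true, but the cleanest fix is to set $\pi_1:=\langle\vx_1\rangle$ and keep $\cone(\emptyset)=\{0\}$. This also disposes of the case $j=1$ trivially, because $0$ lies in every envelope.
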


\subsection{Iterative sieving algorithm}

Next, we provide the algorithm for finding a linear optimizer using a small number of queries. 

\begin{algorithm}[h!]
  \caption{Iterative Sieving Algorithm}
  \label{alg:sample-remove-recurse}
  \While{$|\cP|$ is more than $O(k)$}{
    \label{item:alg-step-1} Independently include each point of
    $\cP$ in a subset $\cY$ with probability $2k/N$ \\
    $\sigma \gets$ result of sorting $\cY$ using $O(k \log k)$ expected comparisons
    \\
    $\vy \gets$ first (smallest) element of $\sigma$ \\
    \label{item:alg-step-2} Eliminate all points
    $\vx \in \cP \setminus \{\vy\}$ such that
    $\vx - \vy \in \envelope(\sigma)$
  }
  \label{item:alg-step-3} Sort $\cP$ using a brute-force algorithm.
\end{algorithm}

To prove the query complexity of the iterative sieving algorithm, it suffices to show that the number of points in $\cP$ is halved in expectation during each iteration. We include their proof of this claim.

\begin{restatable}[Sieving Lemma]{lemma}{Sieve}
  \label{lem:alg-progress}
  The expected number of points that are eliminated from $\cP$ at
  step~\ref{item:alg-step-2} is at least $|\cP|/2$.
\end{restatable}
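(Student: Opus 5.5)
We need to prove the Sieving Lemma: the expected number of points eliminated from $\cP$ at step~\ref{item:alg-step-2} is at least $|\cP|/2$. Write $N=|\cP|$ and $\CDmath(\cP)=k$.

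\textbf{Reduction to a per-point event.} I would follow the standard backward-analysis argument. Fix the true order on $\cP$ induced by the objective, which is the minimization order in the convention of \citet{Cohen25:Combinatorial}; ties are broken consistently. For each $\vx\in\cP$, consider the event that $\vx$ survives step~\ref{item:alg-step-2}. Then $\vx$ survives only if either $\vx=\vy$, or $\vx\in\cY$, or $\vx\notin\cY$ but $\vx-\vy\notin\envelope(\sigma)$. The goal is to bound the expected number of survivors by $N/2$.

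\textbf{Key coupling.} Consider the augmented sample $\cY'=\cY\cup\{\vx\}$, sorted as $\sigma'$. By \Cref{lem:basis-cone-property} applied to $\sigma'$,
\[
\vx-\vy'\in \envelope(B(\sigma'))\subseteq\envelope(\sigma')
\]
whenever $\vx$ is not the first element $\vy'$ of $\sigma'$. The survival of $\vx\notin\cY$ therefore means that $\vx$ is \emph{not} implied by $B(\sigma')\setminus\{\vx\}$. In the basis process, this forces $\vx$ to belong to $B(\sigma')$ itself, or to equal $\vy'$. Indeed, if $\vx\notin B(\sigma')$, then every element of $B(\sigma')$ lies in $\sigma$, so
\[
\vx-\vy\in\envelope(B(\sigma'))\subseteq\envelope(\sigma).
\]
Here one must check that the first elements coincide, $\vy=\vy'$, because $\vx$ is not the minimum. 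Hence $\vx$ survives only if $\vx\in B(\cY\cup\{\vx\})$.

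\textbf{Counting via exchangeability.} Since $|B(\cdot)|\le k$, I would bound
\[
\sum_{\vx}\Pr[\vx\notin\cY,\ \vx\in B(\cY\cup\{\vx\})].
\]
I would write $\Pr[\vx\notin\cY]=\tfrac{1-p}{p}\Pr[\vx\in\cY]$ with $p=2k/N$. Conditioned on the remaining coins, the events $\{\vx\in B(\cY\cup\{\vx\})\}$ and $\{\vx\in\cY,\ \vx\in B(\cY)\}$ correspond. This gives
\[
\sum_{\vx}\Pr[\cdot]=\tfrac{1-p}{p}\,\E|B(\cY)|\le \tfrac{k}{p}=\tfrac{N}{2}.
\]
The survivors that lie in $\cY$, together with $\vy$, number at most $\E|\cY|+1\approx 2k$. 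Taking the constant in the while-condition large, so that $N\gg k$, absorbs this term, or else one uses $(1-p)/p$ slack to absorb it.

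\textbf{Main obstacle.} The main obstacle is the coupling step: showing that removing a non-basis element from $\sigma'$ leaves the envelope implication intact. This requires that the basis process for $\sigma'$ restricted to $\sigma$ produces the same basis, and that the anchor $\vx_1$ is unchanged. I would handle the case where $\vx$ is the minimum of $\cY'$ separately, since then $\vx$ counts as a basis element.
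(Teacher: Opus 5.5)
Your leave-one-out argument takes a different route from the paper, and its core is sound. The paper replaces the algorithm by a more lenient one that eliminates $\vx_t$ only when $\vx_t-\vy\in\envelope(B(\sigma_{t-1}))$, so the fate of $\vx_t$ depends only on earlier coins. The identity $|B(\sigma_t)|-|B(\sigma_{t-1})|=(|E_t|-|E_{t-1}|)\,\mathbf{1}[\text{coin at }t]$ then gives $\E|E_N|=\frac{N}{2k}\E|B(\sigma)|\le N/2$ for \emph{all} survivors at once, including $\vy$ and sampled points.

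You avoid the lenient algorithm and work with $\envelope(\sigma)$ directly, and both of your key steps hold. The coupling step is right: if $\vx\notin\cY$, $\vx$ is not the minimum of $\cY\cup\{\vx\}$, and $\vx\notin B(\sigma')$, then the anchor is still $\vy$. Also $B(\sigma')$ is a subsequence of $\sigma$, so \Cref{lem:basis-cone-property} gives $\vx-\vy\in\envelope(\sigma)$. You do not need $B(\sigma')$ restricted to $\sigma$ to equal $B(\sigma)$, only the subsequence property. The exchangeability step is also right, because the event $\{\vx\in B(\cY_{-\vx}\cup\{\vx\})\}$ depends only on the other coins. The price of this route is that it controls only the unsampled survivors, so the sampled ones must be counted separately.

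That separate count is where the gap is. Your bound on unsampled survivors is $\frac{1-p}{p}\E|B(\cY)|\le\frac{1-p}{p}\,k=\frac{N}{2}-k$, so the available slack is exactly $k$. You then charge about $\E|\cY|+1\approx 2k+1$ for survivors in $\cY$, which gives a total of about $N/2+k+1>N/2$. The $(1-p)/p$ slack cannot absorb this. Taking $N\gg k$ only shows that at least $N/2-O(k)$ points are eliminated, which is weaker than the stated lemma.

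The missing observation is that no sampled point other than $\vy$ survives. For $\vx\in\cY\setminus\{\vy\}$, the vector $\vx-\vy$ is one of the generators $\vx_j-\vx_1$ of $\envelope(\sigma)$ in \Cref{def:envelope}, so step~\ref{item:alg-step-2} eliminates it. The only sampled survivor is therefore $\vy$, contributing at most $1\le k$. Here $k\ge1$ because any single point is a conically independent sequence. The expected number of survivors is then at most $N/2-k+1\le N/2$, which is exactly the claimed bound.
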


\begin{proof}[Proof of~\Cref{lem:alg-progress}]
	For the analysis, consider the sequence $(\vx_1, \dots, \vx_N)$ such
	that $\langle \vw^*, \vx_i \rangle \leq \langle \vw^*, \vx_{i+1} \rangle$
	for all $1\leq i\leq N-1$. Note that $\sigma$ is distributed as a
	random subsequence of $(\vx_1, \dots, \vx_N)$, where each element is
	selected independently with probability $2k/N$.
	
	Consider a slightly modified, more lenient version of the algorithm
	that eliminates a point $\vx_t \in \cP \setminus \{\vy\}$ in
	Step~\ref{item:alg-step-2} only if
	$\vx_t - \vy \in \envelope(B(\sigma_{t-1}))$ where $\sigma_t$ is the
	prefix of $\sigma$ restricted to elements $\{\vx_i\}_{1\leq i\leq t}$;
	recall the definition of the basic subsequence $B(\cdot)$ from
	\Cref{sec:basic-subsequences}. This modification ensures that
	elimination of $\vx_t$ depends only on the coin tosses at indices in
	$[t-1]$ and the sequence $(\vx_1,\dots,\vx_t)$. This modified algorithm
	eliminates no more points than the original algorithm because
	$\envelope(B(\sigma_{t-1}))\sse \envelope(\sigma_{t-1})\sse
	\envelope(\sigma)$---each inequality just uses that the cone of a
	smaller set of vertices is smaller. Hence, it suffices to prove that
	the expected number of points eliminated by the modified algorithm
	is at least $N/2$.
	
	We now use the principle of deferred randomness to analyze this
	version of the algorithm. Let $E_t$ be the ``evolving set'' of elements in $\{\vx_i\}_{1\leq i\leq t}$ that are not eliminated by the modified algorithm after it has seen the elements $\vx_1,\vx_2\dots,\vx_t$ in that order. Remember that $\sigma_t$ is the subsequence of $(\vx_1,\dots,\vx_t)$ corresponding to elements that are sampled. We can simulate the process of generating the sets $E_t$ and $\sigma_t$ in the following way:
	\begin{enumerate}
		\item Start with $E_0, \sigma_0=\ip{}$.
		\item At step $1\leq t\leq N$, toss a biased coin with success probability $2k/N$. Update 
		\begin{align*}
			\sigma_{t} &= 
			\begin{cases}
				\sigma_{t-1} \circ \vx_{t}, & \text{if coin flip succeeds at } $t$ \\
				\sigma_{t-1}, & \text{otherwise}.
			\end{cases} 
			\intertext{and}
			E_{t} &= 
			\begin{cases}
				E_{t-1} \circ \vx_{t}, & \text{if } \vx_{t} - \vy \notin \envelope(B(\sigma_{t-1})) \\
				E_{t-1}, & \text{otherwise}.
			\end{cases}
		\end{align*}
		where $\vy$ is the first element sampled.
	\end{enumerate}
	Observing that 
	\begin{align*}
		|B(\sigma_{t})|-|B(\sigma_{t-1})| &=\mathbf{1}[\vx_t-\vy \notin \envelope(B(\sigma_{t-1})) \\
		&\land \textnormal{coin flip succeeds at }t] \\
		&=(|E_t|-|E_{t-1}|)\cdot\mathbf{1}[\textnormal{coin flip succeeds at }t].\\
		\intertext{
			Taking expectations on both sides gives 
		}
		\E[|E_t|-|E_{t-1}|]&=\frac{N}{2k}\cdot \E[|B(\sigma_{t})|-|B(\sigma_{t-1})|].
		\intertext{Summing this over $1\leq t\leq N$ gives}
		\E[|E_{N}|]&=\frac{N}{2k}\cdot \E[|B(\sigma)|]\leq
		\frac{N}{2k}\cdot k =\frac{N}{2}.  
	\end{align*}
\end{proof}

\subsection{$\eps$-Approximation for bounded sets}
% \subsubsection{$\eps$-Approximate Solutions for Bounded Sets}
% \label{sec:bounded-sets}
The \CDfull is hard to bound in the exact sense for many point sets. However, it is possible to bound an approximation of the \CDfull for the relevant sets $\cP \subset \BB_d = \{x \in \R^d; \norm{x}_2\leq 1\}$. Using the approximate \CDfull produces an algorithm that returns an $\eps$-approximation of the minimizer. Given a set $S \subset \R^d$ and a point $\vx \in \R^d$, define $\dist(\vx,S) = \min_{\vy \in S} \norm{\vx-\vy}_2$.

\begin{definition}

	\label{defn:approx-conic-dimension}
	The \emph{$\eps$-approximate \CDfull} of a \emph{set}
	$\cY \subseteq \mathbb{R}^d$ denoted by $\CDmath_\eps(\cY)$ is the largest integer $k$ for which
	there exists a length-$k$ sequence
	$\sigma = (\vx_1, \dots, \vx_{k}) \in (\cY)^k$ of points from $\cY$, such
	that for each $t \in \{2, \ldots, k\}$ we have
	\begin{align}
		\dist(\vx_t - \vx_1, \envelope(\sigma^{1:t-1})) \geq \eps.
	\end{align}
\end{definition}

The previous concepts naturally extend to the approximation of the \CDshort. 
For example, the basic subsequence $B_\eps(\sigma)$ is modified to append an element if its distance to the set of elements is at least $\eps$. 

Let $k = \CDmath_\eps(\cY)$, then a variation of  Algorithm \ref{alg:sample-remove-recurse} can find an approximate minimizer. This variation has two subtle changes. The first always includes the minimum element of $\sigma$ from the prior iteration in $\cY$. The second removes all points such that $\dist(\vx-\vy, \envelope(\sigma)) < \eps$ instead of removing based on pure membership. 

\begin{corollary}
\label{cor:epsilon-query}
	If $\norm{\vw^*} \leq 1$ and $k = \CDmath_\eps(\cY)$, the variant of Algorithm \ref{alg:sample-remove-recurse} described above returns a point $\hat \vx$ such that $\langle \vw^*, \hat \vx \rangle \leq \langle \vw^*,  \vx \rangle + \eps$ for all $\vx \in \cP$ using $O(k \log k \log |\cP|)$ comparisons.
\end{corollary}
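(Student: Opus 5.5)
The statement to prove is the final $\eps$-approximation guarantee of the iterative-sieving variant stated just above, \Cref{cor:epsilon-query}. The plan has two parts, correctness and query count, adapting the proof of the Sieving Lemma (\Cref{lem:alg-progress}) to the approximate \CDfull.

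\emph{Correctness.} The invariant to maintain is that the current incumbent $\vy$ (the minimum of the sorted sample, always carried into the next sample) satisfies
\[
\langle \vw^*,\vy\rangle\le \langle\vw^*,\vx\rangle+\eps
\]
for every point $\vx$ eliminated so far. Suppose $\vx$ is removed because $\dist(\vx-\vy,\envelope(\sigma))<\eps$. Then there is $\vz\in\envelope(\sigma)$ with $\norm{\vx-\vy-\vz}_2<\eps$.

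Each generator $\vx_j-\vx_i$ ($i<j$) of the envelope has $\langle\vw^*,\vx_j-\vx_i\rangle\ge0$, since $\sigma$ is sorted in increasing $\vw^*$-value. Hence $\langle\vw^*,\vz\rangle\ge0$. Cauchy--Schwarz with $\norm{\vw^*}\le1$ then gives
\[
\langle\vw^*,\vx-\vy\rangle\ge -\eps .
\]
Incumbents only improve, because the old incumbent is included in the next sample. So the bound holds against the final incumbent, and errors do not accumulate: each eliminated point is compared against one incumbent, which is at least as good as the final one. The terminal brute-force sort over the $O(k)$ survivors returns an exact minimizer among them. Combining these two facts gives the claimed guarantee for all $\vx\in\cP$.

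\emph{Query count.} Repeat the deferred-randomness argument of \Cref{lem:alg-progress} with the basis $B_\eps$. Order $\cP$ by $\vw^*$-value. The lenient rule eliminates $\vx_t$ when $\dist(\vx_t-\vy,\envelope(B_\eps(\sigma_{t-1})))<\eps$, which is weaker than the actual rule because the envelope can only grow. The survivors $E_t$ increase exactly on the steps where $B_\eps$ would grow if the coin at $t$ succeeded.

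The main obstacle is the reference point. The basis is defined relative to $\vx_1$ (the first sampled element), while elimination is relative to $\vy$, which may be the forced-in previous incumbent. I would handle this by taking $\vy$ itself as the first element of every basis sequence, which is legitimate since $\vy$ is the smallest element. Then $|B_\eps(\sigma)|\le \CDmath_\eps+1$ directly from \Cref{defn:approx-conic-dimension}. Summing expectations gives $\E|E_N|\le \frac{N}{2k}(k+1)$; the sampling constant $c$ is adjusted slightly so this is at most $N/2$ up to the additive extra point.

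So $|\cP|$ halves in expectation each round, giving $O(\log|\cP|)$ rounds. Each round sorts $O(k)$ sampled points with $O(k\log k)$ expected comparisons. The total is $O(k\log k\log|\cP|)$, with the final brute-force sort on $O(k)$ points adding $O(k\log k)$.
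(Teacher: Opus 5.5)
Your proposal is correct and follows the route the paper relies on. The paper gives no proof of this corollary itself and imports it from \citet{Cohen25:Combinatorial}, but its proof of the Sieving Lemma is exactly the template you adapt: the lenient rule with $B_\eps$, deferred randomness, and halving per round, combined with your Cauchy--Schwarz safety check and the incumbent carried between rounds.

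One small tightening. Since $\vy$ is the minimum of $\cY$, it is already the first element of $\sigma$, so $B_\eps(\sigma)$ is itself an $\eps$-approximately conically independent sequence and $|B_\eps(\sigma)|\le k$ with no $+1$. The forced inclusion of the old incumbent only turns the expectation identity at that one index into an inequality in your favor, so you do not need to change the sampling rate.
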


The last result we require is a bound on the approximate \CDshort of the relevant point sets. 

\begin{lemma}
	\label{lem:approx-conic-dim-upperbound}
	The $\eps$-approximate \CDfull of any set of points $\cP \subset \BB_d$ is the smallest $k$ such that
	$2^k (\eps/2)^d > (2k+1)^d$.
\end{lemma}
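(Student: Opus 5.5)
The plan is a volume (packing) argument in the style of \citet{Cohen25:Combinatorial}. Suppose $\sigma=(\vx_1,\dots,\vx_k)\in\cP^k$ witnesses $\CDmath_\eps(\cP)\ge k$, so that $\dist(\vx_t-\vx_1,\envelope(\sigma^{1:t-1}))\ge\eps$ for every $t\ge2$. I will build $2^k$ points, show they are pairwise $\eps$-separated, and show they all lie in a ball of radius $O(k)$. Comparing the volumes of the disjoint $\eps/2$-balls around them with the volume of a slightly enlarged ball gives $2^k(\eps/2)^d\le(2k+1)^d$. Consequently, any $k$ with $2^k(\eps/2)^d>(2k+1)^d$ upper-bounds the conic dimension, which is the content of \Cref{lem:approx-conic-dim-upperbound}.

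\textbf{Step 1: a separating functional.} Write $\vd_t=\vx_t-\vx_1$, so $\norm{\vd_t}_2\le 2$ because $\cP\subset\BB_d$. Let $C_t=\envelope(\sigma^{1:t-1})$, a closed convex cone. Let $\vb_t$ be the Euclidean projection of $\vd_t$ onto $C_t$, and set $\va_t=(\vd_t-\vb_t)/\norm{\vd_t-\vb_t}_2$. Standard projection facts onto a cone then give
\[
\inner{\va_t}{\vd_t}=\dist(\vd_t,C_t)\ge\eps \quad\text{and}\quad \inner{\va_t}{\vz}\le0 \text{ for all } \vz\in C_t .
\]
Every $\vd_i$ with $i<t$ lies in $C_t$, and so does every consecutive difference $\vx_{j+1}-\vx_j$ with $j+1<t$. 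Hence $\va_t$ is nonpositive on everything generated by the first $t-1$ points.

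\textbf{Step 2: the $2^k$ points.} For each $S\subseteq[k]$ I take $\vz_S=\sum_{i\in S}\vy_i$, where the $\vy_i$ are chosen from $\sigma$ so that $\norm{\vz_S}_2\le k$. This is where the factor $(2k+1)^d$ comes from: the radius $k$ is enlarged to $k+\tfrac12$, using $\eps\le1$. To separate $\vz_S$ from $\vz_T$ for $S\ne T$, let $t$ be the largest index in $S\triangle T$, and assume without loss of generality that $t\in S$. Pairing $\vz_S-\vz_T$ with $\va_t$, the $t$-th term contributes at least $\eps$. The terms indexed by $T\setminus S$ contribute $-\inner{\va_t}{\cdot}\ge0$, which is good. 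This gives $\norm{\vz_S-\vz_T}_2\ge\inner{\va_t}{\vz_S-\vz_T}\ge\eps$, provided the remaining terms have the right sign.

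\textbf{Main obstacle.} The terms indexed by $(S\setminus T)\cap[t-1]$ enter with a plus sign, and $\va_t$ is nonpositive on them, so the naive choice $\vy_i=\vd_i$ does not immediately work. My first attempt will be to choose the $\vy_i$ so that the earlier terms appearing with a plus sign in $\vz_S-\vz_T$ are exactly the ones lying in $-C_t$ rather than $C_t$. Concretely, I will use telescoping combinations of the consecutive differences $\vx_{i+1}-\vx_i$, with signs arranged so that whatever survives below index $t$ is a nonnegative combination of vectors in $-C_t$.

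If that sign bookkeeping fails, the fallback is a counting argument instead of exact separation. I will round all $2^k$ subset sums to a grid of mesh $\eps/2$ inside the radius-$(k+\tfrac12)$ ball. I will then show that two subsets landing in the same cell would produce a point of $C_t$ within distance $<\eps$ of $\vd_t$ for the largest differing index $t$. The pigeonhole principle then again yields $2^k(\eps/2)^d\le(2k+1)^d$.
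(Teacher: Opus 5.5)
\textbf{There is a genuine gap.} The pairwise $\eps$-separation of the $2^k$ points is the whole content of the proof, and your proposal leaves it open.

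The paper gives no proof of its own here; it defers to Section A.3 of \citet{Cohen25:Combinatorial}. The shape of the bound, $(2k+1)^d$ against $(\eps/2)^d$, is what a packing of $2^k$ subset sums of $k$ difference vectors of norm at most $2$ produces.

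Your primary plan is to make the plus-sign survivors below index $t$ lie in $-C_t$. This cannot succeed with fixed summands $\vy_i$. A given $i<t$ lies in $S\setminus T$ for some pairs and in $T\setminus S$ for others. So you would need each such $\vy_i$ to lie in both $C_t$ and $-C_t$ (or at least be orthogonal to $\va_t$), which fails in general.

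Your fallback is not an independent route. Two subset sums landing in the same grid cell is exactly a failure of separation. Converting that near-collision into a point of $C_t$ close to $\vx_t-\vx_1$ runs into the identical sign problem.

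The radius bookkeeping is also off. Difference vectors have norm up to $2$, so the natural radius is $2k$. The comparison of $2k+\eps/2\le 2k+1$ against $\eps/2$ is where the lemma's $(2k+1)^d$ comes from. A radius-$(k+\tfrac12)$ count would give the stronger $2^k\eps^d\le(2k+1)^d$, which this construction does not deliver.

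\textbf{The missing idea} is to use consecutive differences and let the telescoping identity absorb the bad terms, instead of trying to sign them. Start from a sequence $(\vx_1,\ldots,\vx_{k+1})$. Set $\vy_i=\vx_i-\vx_{i-1}$ for $2\le i\le k+1$, and $\vz_S=\sum_{i\in S}\vy_i$ for $S\subseteq\{2,\ldots,k+1\}$. Take $S\ne T$ with $t=\max(S\triangle T)\in S$. Using $\vx_t-\vx_1=\sum_{i=2}^{t}\vy_i$,
\[
(\vx_t-\vx_1)-(\vz_S-\vz_T)=\sum_{i=2}^{t-1}\vy_i-\sum_{i\in S\setminus T,\ i<t}\vy_i+\sum_{i\in T\setminus S}\vy_i .
\]
The subtracted indices form a sub-collection of $\{2,\ldots,t-1\}$. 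So this is a combination of $\vy_2,\ldots,\vy_{t-1}$ with coefficients in $\{0,1,2\}$, and hence lies in $C_t=\envelope(\sigma^{1:t-1})$. Therefore $\norm{\vz_S-\vz_T}_2\ge\dist(\vx_t-\vx_1,C_t)\ge\eps$.

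Your Step 1 is correct but becomes unnecessary. In terms of your functional, the plus-sign survivors satisfy $\sum_{i\in S\setminus T,\,i<t}\inner{\va_t}{\vy_i}\ge\inner{\va_t}{\vx_{t-1}-\vx_1}$. This exactly cancels the amount by which $\inner{\va_t}{\vy_t}$ exceeds $\inner{\va_t}{\vx_t-\vx_1}$.

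For the volume count, $\norm{\vz_S}_2\le 2k$, and take $\eps\le 2$. The disjoint open $\eps/2$-balls around the $2^k$ points then fit in a ball of radius $2k+1$, giving $2^k(\eps/2)^d\le(2k+1)^d$. Hence no length-$(k+1)$ sequence exists when the lemma's inequality holds. This is the upper bound that the loosely worded statement actually asserts.
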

We refer the reader to Section A.3 of \citet{Cohen25:Combinatorial} for a proof of this claim.

%% file: text/appendix/appendix-accpm.tex
\section{Further details on ACCPM}
\label{sec:appendix-accpm}

The heuristic elicitation method maintains a weight space of weights that remain consistent with the observed pairwise responses. We use the same noise parameter $\delta$ from the $\delta$-tolerant response model above. If all observed responses are $\delta$-consistent with $\vw^*$, then $\vw^*$ remains in the weight space throughout the algorithm.

Let $\cS^+$ denote the set of oriented cut normals induced by strict preference responses after $t$ queries. If the user reports $\vx_i\succ\vx_j$, we add $\vx_i-\vx_j$ to $\cS^+$; if the user reports $\vx_j\succ\vx_i$, we add $\vx_j-\vx_i$ to $\cS^+$. Likewise, let $\cS^0$ denote the set of normals $\vx_i-\vx_j$ induced by indifference responses. Given a tolerance $\delta \ge 0$, the weight space at iteration $t$ is $\cW_t = \{ \vw\in\R^\obj_{\ge 0} \mid \norm{\vw}_1=1, \inner{\va}{\vw}\ge -\delta \;\forall \va\in\cS^+, \left|\inner{\va}{\vw}\right|\le \delta \;\forall \va\in\cS^0 \}$.
Because the algorithm may query any pair of profiles in $\Objset$, it can synthesize a rich family of cuts of the weight space. We leverage the \textit{analytic center cutting plane method (ACCPM)}, a heuristic method that cuts the analytic center of the weight space at each iteration. For a weight space with nonempty relative interior, define the logarithmic barrier
\begin{align*}
    \Phi_t(\vw)
    ={}&
    -\sum_{i=1}^{\obj}\log w_i
    -\sum_{\va\in\cS^+}\log\left(\inner{\va}{\vw}+\delta\right) \\
    &
    -\sum_{\va\in\cS^0}\log\left(\delta-\inner{\va}{\vw}\right)
    -\sum_{\va\in\cS^0}\log\left(\delta+\inner{\va}{\vw}\right).
\end{align*}
The barrier above is written for the case $\delta>0$. In the noiseless setting in which $\delta=0$, responses $\vx_i\approx\vx_j$ are treated as affine constraints $\inner{\va}{\vw}=0$. The analytic center is any solution of
\begin{equation}
\label{eq:analytic-center}
    \bar{\vw}_t
    \in
    \argmin_{\vw\in \operatorname{relint}(\cW_t)} \Phi_t(\vw).
\end{equation}
When $\cS^+=\cS^0=\emptyset$, this reduces to the analytic center of the simplex, namely the uniform weight vector. We can solve \Cref{eq:analytic-center} efficiently using Newton's method online.

Each query induces a cut normal of the form $\vx_i-\vx_j$ for some $\vx_i,\vx_j\in\Objset$. Our algorithm chooses queries by  perturbing the current analytic center. Let $\vb_t=\frac{\bar{\vw}_t}{\|\bar{\vw}_t\|_2}\in\Objset$
be the Euclidean-normalized analytic center. We choose a unit direction $\vd_t$ satisfying $\inner{\vd_t}{\bar{\vw}_t}=0$ and query two profiles of the form
\begin{equation}\label{eq:accpm-query}
    \vx^+
    =
    \frac{\vb_t+\tau\vd_t}
         {\|\vb_t+\tau\vd_t\|_2}
    \text{ and }
    \vx^-
    =
    \frac{\vb_t-\tau\vd_t}
         {\|\vb_t-\tau\vd_t\|_2}.
\end{equation}
Because $\vd_t$ is orthogonal to $\bar{\vw}_t$, the induced cut is balanced around the analytic center. We choose $\vd_t$ as the direction of smallest local curvature of the analytic-center barrier. The step size $\tau$ is chosen to target a prescribed query separation, $\gamma > 0$, while ensuring that both profiles remain in the nonnegative orthant. We provide further details on the query synthesis at the end of this section. The iterative ACCPM procedure is outlined in~\Cref{alg:accpm-elicitation}.

\begin{algorithm}[t]
\caption{ACCPM preference elicitation}
\label{alg:accpm-elicitation}
Initialize $\cS^+\gets\emptyset$, $\cS^0\gets\emptyset$, and $\cW_0\gets\cW$\\
\For{$t=1,\dots,T$}{
    Compute the analytic center $\bar{\vw}_{t-1}$ of $\cW_{t-1}$ from~\eqref{eq:analytic-center} \\
    \If{$\diam_2(\cW_{t-1})\le \eps$}{
        \Return{$\bar{\vw}_{t-1}$}\;
    }
    Choose a query $(\vx_i,\vx_j)$ according to~\eqref{eq:accpm-query}\\
    Query user preference of $\vx_i$ and $\vx_j$ \\
    Update $\cS^+$ or $\cS^0$ based on the user's response and compute $\cW_t$\\
}
\Return any \(\widehat{\vw}_T\in \cW_T\)
\end{algorithm}

The ACCPM algorithm does not have a bounded query complexity to reach a prescribed weight space diameter. However, the algorithm does provide an \textit{ex post} utility guarantee under the assumption that the user's answers are consistent up to a margin of $\delta$ with $\vw^*$. The \textit{ex post} certificate is measured by the diameter of the final weight space. 

\begin{restatable}[\textit{Ex post} weight approximation for~\Cref{alg:accpm-elicitation}]{proposition}{AccpmWeightApprox}
\label{lem:accpm-weight-approximation}
If Algorithm~\ref{alg:accpm-elicitation} terminates with $\diam_2(\cW_t)\le\eps$, then $\norm{\vw^*-\widehat{\vw}}_2\le\eps$.
\end{restatable}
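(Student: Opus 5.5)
The argument is short: the true weight stays in the final weight space, and the returned estimate does too. Assume, as in the proposition, that every response is $\delta$-consistent with $\vw^*$. I first show by induction on $t$ that $\vw^*\in\cW_t$ for all $t$. For the base case, $\vw^*\in\cW=\cW_0$ because $\vw^*$ is simplex-normalized. For the step, $\cW_t$ is obtained from $\cW_{t-1}$ by adding exactly one constraint, determined by the user's answer to the query $(\vx_i,\vx_j)$:
\begin{itemize}
\item If the user reports $\vx_i\succ\vx_j$, the normal $\va=\vx_i-\vx_j$ is added to $\cS^+$. The $\delta$-tolerant response model gives $\inner{\va}{\vw^*}\ge-\delta$, which is exactly the new constraint.
\item If the user reports $\vx_j\succ\vx_i$, the argument is symmetric.
\item If the user reports $\vx_i\approx\vx_j$, the normal is added to $\cS^0$. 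The model gives $|\inner{\va}{\vw^*}|\le\delta$; when $\delta=0$ this is the affine constraint $\inner{\va}{\vw^*}=0$.
\end{itemize}
In each case $\vw^*$ satisfies the new constraint, so $\vw^*\in\cW_t$.

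Next I show the returned vector $\widehat{\vw}$ also lies in the terminal set, which I call $\cW_t$. There are two ways Algorithm~\ref{alg:accpm-elicitation} can return.
\begin{itemize}
\item \textbf{Early return.} If the diameter test fires at iteration $t$, the algorithm returns the analytic center $\bar{\vw}_{t}$. By \eqref{eq:analytic-center} this point lies in $\operatorname{relint}(\cW_{t})\subseteq\cW_{t}$.
\item \textbf{Return after $T$ queries.} Otherwise the algorithm returns some $\widehat{\vw}_T\in\cW_T$ by construction. The hypothesis $\diam_2(\cW_T)\le\eps$ then covers this case.
\end{itemize}

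With both points in $\cW_t$, the definition of $\ell_2$ diameter gives
\[
\norm{\vw^*-\widehat{\vw}}_2\le\sup_{\vw,\vw'\in\cW_t}\norm{\vw-\vw'}_2=\diam_2(\cW_t)\le\eps.
\]

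The only step needing care is the feasibility invariant. It must hold for each of the three response types, including the $\delta=0$ indifference case where the constraint becomes an equality. It also uses that the set is closed: the barrier in $\Phi_t$ only defines the center, while $\cW_t$ itself is given by the non-strict inequalities listed above.
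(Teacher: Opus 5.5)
Your proof is correct and follows the same route as the paper's. The paper also argues that $\vw^*$ satisfies every $\delta$-tolerant cut and so remains in the terminal weight space, that the returned estimate lies in that same set, and that the diameter bound then finishes the argument; you just spell out the induction over response types and the two termination cases that the paper's short proof leaves implicit.
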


Moreover, we obtain an \textit{ex post} guarantee on the utility function. 

\begin{restatable}[\textit{Ex post} utility approximation for~\Cref{alg:accpm-elicitation}]{corollary}{AccpmUtilityApprox}
\label{cor:accpm-utility-approximation}
If Algorithm~\ref{alg:accpm-elicitation} terminates with $\diam_2(\cW_t)\le\eps$, then 
$$\sup_{\vx \in \Objset} |\inner{\vx}{\vw^*} - \inner{\vx}{\hvw}| \le \epsilon.$$
\end{restatable}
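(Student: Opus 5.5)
This is \Cref{cor:accpm-utility-approximation}, so the plan is to reduce it to \Cref{lem:accpm-weight-approximation} and then apply Cauchy--Schwarz. When the algorithm terminates with $\diam_2(\cW_t)\le\eps$, \Cref{lem:accpm-weight-approximation} gives $\norm{\vw^*-\hvw}_2\le\eps$. Here $\hvw$ is the returned analytic center $\bar{\vw}_{t}$, which lies in $\cW_t$.

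The reason this holds is that every response is $\delta$-consistent with $\vw^*$. Each cut $\inner{\va}{\vw}\ge-\delta$ or $|\inner{\va}{\vw}|\le\delta$ is therefore satisfied by $\vw^*$, so by induction over the queries $\vw^*\in\cW_t$. Two points of a set with diameter at most $\eps$ are within distance $\eps$ of each other, which gives $\norm{\vw^*-\hvw}_2\le\diam_2(\cW_t)\le\eps$.

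For the utility bound, fix any $\vx\in\Objset$. By linearity and Cauchy--Schwarz,
\[
|\inner{\vx}{\vw^*}-\inner{\vx}{\hvw}| = |\inner{\vx}{\vw^*-\hvw}| \le \norm{\vx}_2\,\norm{\vw^*-\hvw}_2 \le 1\cdot\eps .
\]
The last step uses $\norm{\vx}_2=1$ for every $\vx\in\Objset$. Since this bound is uniform in $\vx$, taking the supremum over $\Objset$ gives the claim.

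The only step needing care is the containment $\vw^*\in\cW_t$. It rests on the modeling assumption that all answers are $\delta$-consistent, together with the fact that the simplex normalization of $\vw^*$ matches the one defining $\cW_t$. Both are already part of the hypotheses of \Cref{lem:accpm-weight-approximation}, so no further obstacle remains.
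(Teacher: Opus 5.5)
Your proposal is correct and matches the paper's proof: you invoke the \emph{ex post} weight approximation $\norm{\vw^*-\hvw}_2\le\eps$, which follows from $\vw^*,\hvw\in\cW_t$ and the diameter bound. You then apply Cauchy--Schwarz with $\norm{\vx}_2=1$ uniformly over $\Objset$, and your justification of $\vw^*\in\cW_t$ via $\delta$-consistency just unpacks the paper's proof of the preceding proposition.
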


We implement the stopping condition by maintaining a bounding-box over the weight space, which gives an easily computable upper bound on $\diam_2(\cW_t)$. To obtain a Euclidean-normalized preference direction, terminating with $\diam_2(\cW_t)\le \frac{\eps}{\sqrt{\obj}}$ would give an $\eps$-approximation of the Euclidean-normalized weight. 

\paragraph{Query synthesis}
Although the continuous formulation permits arbitrary queries in $\Objset$, in practice we avoid extreme profiles such as coordinate vectors. Instead, each query is a local perturbation of the
current analytic center. Let
\[
    \vb_t
    =
    \frac{\bar{\vw}_t}{\norm{\bar{\vw}_t}_2}
    \in\Objset
\]
denote the Euclidean-normalized analytic center at iteration $t$. After computing $\bar{\vw}_t$, we choose a trade-off direction $\vd\in\R^\obj$ satisfying
\[
    \inner{\vd}{\bar{\vw}_t}=0 \text{ and }
    \norm{\vd}_2=1.
\]
The query presented to the user is
\[
    \vx^+
    =
    \frac{\vb_t+\tau\vd}
         {\norm{\vb_t+\tau\vd}_2}
    \text{ and }
    \vx^-
    =
    \frac{\vb_t-\tau\vd}
         {\norm{\vb_t-\tau\vd}_2}.
\]

The step size $\tau$ is chosen so that the two query profiles have a prescribed Euclidean separation. If the desired query distance is $\gamma \in(0,2)$, then
\[
    \tau = \frac{\gamma}{\sqrt{4-\gamma^2}},
\]
which gives
\[
    \norm{\vx^+-\vx^-}_2 = \gamma
\]
before enforcing nonnegativity. To ensure that both profiles remain in the positive orthant, we also compute
\[
    \tau_{\max}
    =
    \min_{\ell \mid d_\ell\neq 0}
    \frac{b_{t,\ell}}{|d_\ell|},
\]
and use
\[
    \tau
    \leftarrow
    \min\left\{
        \frac{\gamma}
             {\sqrt{4-\gamma^2}},
        0.98\tau_{\max}
    \right\}.
\]
The factor $0.98$ is a fixed margin that prevents the query from reaching the boundary of the positive orthant.

The direction is chosen using the local curvature of the analytic center barrier. Let
\[
    H_t
    =
    \nabla^2\Phi_t(\bar{\vw}_t)
\]
denote the Hessian of the logarithmic barrier at the analytic center.
We restrict the query normal to the balanced subspace, $\left\{ \vd\in\R^\obj \mid  \inner{\vd}{\bar{\vw}_t}=0 \right\}$, so that the two profiles have equal utility under the current analytic center.
We choose $\vd$ to be the unit eigenvector corresponding to the smallest
eigenvalue of $H_t$. Equivalently,
\[
    \vd
    \in
    \argmin_{\norm{\vz}_2=1,\;
             \inner{\vz}{\bar{\vw}_t}=0}
    \vz^\top H_t\vz.
\]
The direction of smallest curvature corresponds to the direction of greatest uncertainty around the analytic center, so querying along this direction heuristically produces cuts that reduce the uncertainty of the remaining weight space.

%% file: text/appendix/appendix-sieving.tex
\section{Further details on iterative sieving}
\label{sec:appendix-sieving}
In this section, we provide further details on the iterative sieving method, including noise robustness.

The cutting plane method is effective in practice, but it does not provide a worst-case query complexity. When eliciting preferences from real users, we desire a guarantee that our algorithm terminates with the prescribed approximation accuracy. We therefore consider an algorithm with a provable expected query-complexity bound adapted from the \textit{iterative sieving} algorithm of~\citet{Cohen25:Combinatorial}. The algorithm operates on a finite cover of $\Objset$, and returns a point whose utility is approximately maximal under the user's linear utility.  In this subsection, we analyze the Euclidean-normalized representative of the user's preference direction. Throughout this subsection we write the true weight vector as
\[
    \vu^*\in\Objset \text{ and }
    \norm{\vu^*}_2=1.
\]
This is without loss since positive rescalings of a linear utility function induce the same ordering over outcomes. By Cauchy--Schwarz, the unique maximizer of $\vx\mapsto \inner{\vx}{\vu^*}$ over $\Objset$ is $\vx=\vu^*$ (\textit{i.e.}, the maximizing element is equivalent to the weight). Unlike ACCPM, which maintains a weight space over weights, the sieving algorithm maintains a finite set of candidate outcome profiles. Because the maximizer over $\Objset$ is exactly $\vu^*$, eliminating suboptimal profiles also eliminates candidate preference directions.

The algorithm relies on the \emph{envelope} and \emph{\CDfull} of a set, denoted $\envelope(\sigma)$ and $\CDmath(\cX)$ for a set $\cX$ and a sorted subset $\sigma$. The key idea of the envelope is to leverage an ordering of points induced by $\vu^*$, and remove all points that can be implied as suboptimal from this ordering. The  \CDfull is a measure of how many points are needed to derive such inferences, and is similar in many ways to the \emph{inference dimension}~\citep{Kane17:Active}. Intuitively, once a subset $\cY$ has been sorted to obtain an ordering $\sigma$, some comparisons involving points outside $\sigma$ can be inferred without querying them directly. The envelope $\envelope(\sigma)$ encodes the cone of pairwise differences certified by the order $\sigma$, and $\CDmath(\cX)$ controls how large a random sorted subset must be before many points can be inferred to be suboptimal. We provide the necessary background from the work of~\citet{Cohen25:Combinatorial} in~\Cref{sec:appendix-cohen}. We present our iterative sieving procedure in~\Cref{alg:linear-elicitation}.

\begin{algorithm}[t]
\caption{Iterative sieving preference elicitation}
  \label{alg:linear-elicitation}
    $\cX \coloneqq$ an $\epsilon/\sqrt{2}$-cover of $\Objset$ \\
    $\vy \coloneqq \bot$, $k \coloneqq c \obj \log (1/\eps)$ for constant $c$  \\ 
    \While{$|\cX|$ is more than $2k$}{
        Independently include each point of
        $\cX$ in a subset $\cY$ with probability $2k/|\cX|$ \\
        $\cY \gets \cY \cup \{ \vy\}$ \\
        $\sigma \gets$ result of sorting $\cY$ using $O(k \log k)$ expected comparisons
        \\
        $\vy \gets$ first (largest) element of $\sigma$ \\
        Eliminate all points
        $\vx \in \cX \setminus \{\vy\}$ such that
        $\dist(\vy - \vx, \envelope(\sigma)) < \eps^2/4$
  }
  \Return the maximum element, $\widehat{\vu}$, in $\cX$ found by a linear search
\end{algorithm}

Pairwise queries are asked during the sorting phase and the final search. We use the \textit{mergesort} algorithm for sorting. The algorithm guarantees that the returned weight is an $\eps$-approximation of the Euclidean-normalized weight vector.

\begin{restatable}[Weight approximation for~\Cref{alg:linear-elicitation}]{proposition}{SievingWeightApprox}
\label{lem:sieving-epsilon-approximation}
Let $\vu^*\in\Objset$ be the Euclidean-normalized true weight vector, and
let $\widehat{\vu}$ be the point returned by
\Cref{alg:linear-elicitation}. Then
\[
    \norm{\widehat{\vu}-\vu^*}_2\le \eps.
\]
\end{restatable}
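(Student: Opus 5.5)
The plan is to reduce the weight-approximation claim to a utility-approximation claim on the cover, and then convert utility loss on the sphere into Euclidean distance. Since $\vu^*\in\Objset$ is the unique maximizer of $\inner{\cdot}{\vu^*}$ over $\Objset$, for any $\vx\in\Objset$ we have
\[
\norm{\vx-\vu^*}_2^2 = 2 - 2\inner{\vx}{\vu^*} = 2\bigl(1-\inner{\vx}{\vu^*}\bigr).
\]
It therefore suffices to show that the returned point satisfies $\inner{\widehat{\vu}}{\vu^*}\ge 1-\eps^2/2$.

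First, the cover. Because $\cX$ is an $\eps/\sqrt{2}$-cover of $\Objset$, there is some $\vx^\circ\in\cX$ with $\norm{\vx^\circ-\vu^*}_2\le \eps/\sqrt 2$. By the identity above, $\inner{\vx^\circ}{\vu^*}\ge 1-\eps^2/4$. This leaves a utility budget of $\eps^2/4$ for the sieving phase.

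Second, the sieving phase. Here I invoke the approximate version of the Cohen et al.\ guarantee (\Cref{cor:epsilon-query}), with approximation parameter $\eps^2/4$ and elimination threshold $\dist(\vy-\vx,\envelope(\sigma))<\eps^2/4$; note $\norm{\vu^*}_2\le1$. The step I expect to be the main obstacle is checking the per-point error and making sure it does not accumulate across rounds.

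For a single elimination: if $\vx$ is removed in a round, then $\vy-\vx$ lies within $\eps^2/4$ of some $\vz\in\envelope(\sigma)$. Every generator of $\envelope(\sigma)$ is a difference of a higher-ranked and a lower-ranked element of $\sigma$, oriented so that $\inner{\vz}{\vu^*}\ge0$. Cauchy--Schwarz then gives
\[
\inner{\vx}{\vu^*}\le\inner{\vy}{\vu^*}+\eps^2/4 .
\]

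To avoid accumulation across rounds, I follow the structure of \Cref{cor:epsilon-query}. The incumbent $\vy$ is always re-inserted into $\cY$, so the incumbent's utility is nondecreasing over rounds. I then track a single witness, namely $\vx^\circ$ or whichever point eliminates it. Once $\vx^\circ$ is eliminated by an incumbent $\vy$, that $\vy$ (or a better successor) survives to the final linear search. This holds because the incumbent is never eliminated and only improves. The final brute-force search is exact, so its output has utility at least that of every survivor, and in particular at least that of the incumbent. Hence
\[
\inner{\widehat{\vu}}{\vu^*}\ge\inner{\vx^\circ}{\vu^*}-\eps^2/4\ge 1-\eps^2/2 ,
\]
with only one additive loss of $\eps^2/4$ rather than one per round.

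Finally, plugging into the identity gives $\norm{\widehat{\vu}-\vu^*}_2^2\le \eps^2$, which is the claim.
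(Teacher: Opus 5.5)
Your proposal is correct and matches the paper's proof. The $\eps/\sqrt{2}$-cover costs $\eps^2/4$ in utility, \Cref{cor:epsilon-query} applied at scale $\eps^2/4$ costs another $\eps^2/4$, and the unit-sphere identity $\norm{\vx-\vu^*}_2^2 = 2-2\inner{\vx}{\vu^*}$ turns the total loss of $\eps^2/2$ into $\norm{\widehat{\vu}-\vu^*}_2\le\eps$. The only addition is that you explicitly verify, via the monotone incumbent and a single witness, that the elimination error does not accumulate across rounds, which the paper simply takes from the corollary.
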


Furthermore, the elicited weight vector approximates the utility for all profiles in
$\Objset$.

\begin{restatable}[Utility approximation for~\Cref{alg:linear-elicitation}]{corollary}{SievingUtilityApprox}
\label{cor:sieving-utility-approximation}
For all $\vx\in\Objset$,
\[
    \left|
        \inner{\vx}{\widehat{\vu}}
        -
        \inner{\vx}{\vu^*}
    \right|
    \le \eps.
\]
\end{restatable}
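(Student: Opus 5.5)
The statement is \Cref{cor:sieving-utility-approximation}, and I would derive it directly from \Cref{lem:sieving-epsilon-approximation} by a single Cauchy--Schwarz step. No further facts about the sieving procedure are needed.

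Fix any $\vx\in\Objset$. By linearity of the inner product in its second argument,
\[
    \left|\inner{\vx}{\widehat{\vu}}-\inner{\vx}{\vu^*}\right|
    = \left|\inner{\vx}{\widehat{\vu}-\vu^*}\right|.
\]
Cauchy--Schwarz bounds the right-hand side by $\norm{\vx}_2\,\norm{\widehat{\vu}-\vu^*}_2$. Every point of $\Objset$ has unit Euclidean norm by definition, so $\norm{\vx}_2=1$. \Cref{lem:sieving-epsilon-approximation} gives $\norm{\widehat{\vu}-\vu^*}_2\le\eps$, and therefore
\[
    \left|\inner{\vx}{\widehat{\vu}}-\inner{\vx}{\vu^*}\right|\le\eps.
\]
Since $\vx$ was arbitrary, the bound holds uniformly over $\Objset$.

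There is no real obstacle here, but two details need care. First, the comparison must be between Euclidean-normalized representatives: the returned $\widehat{\vu}$ lies in the cover, which is contained in $\Objset$, and $\vu^*$ is Euclidean-normalized, which is exactly the setting of \Cref{lem:sieving-epsilon-approximation}. Second, the bound uses only $\norm{\vx}_2\le 1$, so it also holds on the whole unit ball $\BB_d$. Nonnegativity of $\vx$ plays no role.
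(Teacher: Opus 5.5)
Your proof is correct and matches the paper's argument exactly: rewrite the difference as $\inner{\vx}{\widehat{\vu}-\vu^*}$, apply Cauchy--Schwarz with $\norm{\vx}_2=1$, and invoke \Cref{lem:sieving-epsilon-approximation}. Your side remark is also valid: the argument uses only $\norm{\vx}_2\le 1$, so the bound extends to the whole unit ball.
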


Finally, we can bound the query complexity of the algorithm.

\begin{restatable}[Query complexity of \Cref{alg:linear-elicitation}]{proposition}{SievingQueryComplexity}
\label{lem:sieving-query-complexity}
  The expected number of queries that are asked by \Cref{alg:linear-elicitation} is at most $O(\obj^2 \log^2(1/\epsilon) \log (\obj\log (1/\epsilon))))$.
\end{restatable}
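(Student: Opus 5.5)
The plan is to instantiate the generic bound of \Cref{cor:epsilon-query}, namely $O(k\log k\log|\cP|)$ comparisons, with the parameters of \Cref{alg:linear-elicitation}: ground set $\cP=\cX$ (the $\eps/\sqrt2$-cover of $\Objset$) and approximation parameter $\eps^2/4$, the elimination threshold used in the algorithm. It then remains to bound $\log|\cX|$ and $k=\CDmath_{\eps^2/4}(\cX)$ separately and multiply.

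\emph{Cover size.} Since $\Objset$ is a subset of the unit sphere in $\R^\obj$, a standard volumetric argument gives an $\eps/\sqrt2$-cover with $|\cX|\le (C/\eps)^{\obj}$ for an absolute constant $C$. Hence $\log|\cX| = O(\obj\log(1/\eps))$.

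\emph{Approximate conic dimension.} Because $\cX\subset\BB_\obj$, \Cref{lem:approx-conic-dim-upperbound} says $k$ is the smallest integer with $2^k(\eps^2/8)^{\obj} > (2k+1)^{\obj}$. Equivalently,
\[
k > \obj\log_2\!\bigl(8(2k+1)/\eps^2\bigr).
\]
Take $k=c\,\obj\log(1/\eps)$. The right-hand side is then $2\obj\log_2(1/\eps)+\obj\log_2(8(2k+1))$, and $\log(2k+1)=O(\log\obj+\log\log(1/\eps))$. So a sufficiently large constant $c$ makes the inequality hold, at least for $\eps$ below a constant; for larger $\eps$ I would enlarge $c$ or replace $\log(1/\eps)$ by $\log(2/\eps)$. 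This justifies the choice $k=c\,\obj\log(1/\eps)$ in the algorithm, and it gives $\log k=O(\log(\obj\log(1/\eps)))$.

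\emph{Per-round cost and number of rounds.} In each round $|\cY|$ is one plus a $\mathrm{Bin}(|\cX|,2k/|\cX|)$ variable. Mergesort uses at most $|\cY|\lceil\log_2|\cY|\rceil$ comparisons. Using $|\cY|\log|\cY|\le |\cY|\log(4k)+|\cY|^2/(4k)$ together with $\E|\cY|^2\le (2k+1)^2+2k$, the expected cost of a round is $O(k\log k)$, and this holds conditionally on the history.

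For the number of rounds, I would invoke the approximate version of the Sieving Lemma (\Cref{lem:alg-progress}, as used in \Cref{cor:epsilon-query}). It says that, conditional on the current $\cX$, the expected size after elimination is at most $|\cX|/2+1$; the extra $1$ accounts for the retained incumbent $\vy$. Iterating gives $\E|\cX_r|\le |\cX_0|2^{-r}+2$. By Markov's inequality, the probability that more than $r$ rounds occur is at most $\min\{1,(|\cX_0|2^{-r}+2)/(2k)\}$. Summing over $r$ shows that the expected number of rounds is $O(\log|\cX|)$.

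\emph{Combining.} Since each round's expected cost is $O(k\log k)$ conditionally on the past, a Wald-type argument bounds the expected total sorting cost by $O(k\log k)\cdot\E[\#\text{rounds}] = O(k\log k\log|\cX|)$. The final linear search over at most $2k$ points adds only $O(k)$ queries. Substituting the bounds on $k$ and $\log|\cX|$ yields
\[
O\bigl(\obj\log(1/\eps)\cdot\log(\obj\log(1/\eps))\cdot \obj\log(1/\eps)\bigr),
\]
which is the claimed bound.

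The step I expect to be the main obstacle is transferring the Sieving Lemma to exactly the variant run in \Cref{alg:linear-elicitation}. The proof in \Cref{sec:appendix-cohen} is stated for exact envelope membership, minimization, and no carried-over incumbent. For the variant here I must check three things: the eliminations of the modified algorithm still dominate those of a lenient $B_{\eps^2/4}$-basis procedure; the basis length is still bounded by $\CDmath_{\eps^2/4}$; and forcing $\vy$ into $\cY$ only adds an $O(1)$ term. The sign convention $\vy-\vx$ versus $\vx-\vy$ must also be flipped consistently between minimization and maximization. I would redo the deferred-randomness computation with $B_{\eps^2/4}$ in place of $B$, treating the forced element as the first sampled point.
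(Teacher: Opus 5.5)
Your top-level route is the paper's: bound $\log|\cX|=O(\obj\log(1/\eps))$, bound $\CDmath_{\eps^2/4}(\cX)$ via \Cref{lem:approx-conic-dim-upperbound}, and plug both into \Cref{cor:epsilon-query}. The paper treats that corollary, including its transfer to maximization, as a black box, so your round-counting paragraph and the obstacles you list are extra care it does not take. Your conic-dimension step, however, does not go through as stated. The lemma defines $k$ as the smallest integer with $k>\obj\log_2\bigl(8(2k+1)/\eps^2\bigr)$. Any such $k$ exceeds $\obj$, and substituting back gives $k>\obj\log_2(2\obj+1)$, so the lemma's value is $\Theta(\obj\log(\obj/\eps))$, not $O(\obj\log(1/\eps))$. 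Your $k=c\,\obj\log(1/\eps)$ therefore satisfies the inequality for an absolute constant $c$ only when $\log\obj=O(\log(1/\eps))$, e.g.\ $\eps\le 1/\obj$. ``$\eps$ below a constant'' is the wrong condition, and enlarging $c$ or switching to $\log(2/\eps)$ does not help when $\obj$ grows with $\eps$ fixed. The paper's proof asserts the same $O(\obj\log(1/\eps))$ bound without justification, so this gloss is shared with the paper. It still needs repair, either by assuming $\log\obj=O(\log(1/\eps))$ (harmless for $\obj=6$, $\eps=0.05$) or by replacing one factor $\log(1/\eps)$ in the bound with $\log(\obj/\eps)$. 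The assumption matters for the algorithm itself too. The halving in the sieving lemma needs $k\ge\CDmath_{\eps^2/4}(\cX)$, and the lemma certifies the hard-coded $k=c\,\obj\log(1/\eps)$ only in that regime.

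Your round count also has an error. From $\E|\cX_r|\le|\cX_0|2^{-r}+2$, Markov gives tails $\min\{1,(|\cX_0|2^{-r}+2)/(2k)\}$, which tend to $1/k$ rather than $0$. So $\sum_r\Pr[\text{more than } r \text{ rounds}]$ diverges, and you do not obtain $O(\log|\cX|)$. Use instead that a round runs only when $|\cX|>2k$, so $|\cX|/2+1\le(\tfrac12+\tfrac1{2k})|\cX|\le\tfrac34|\cX|$ for $k\ge2$. Applying this to $Y_r=|\cX_r|\,\mathbf{1}[\text{round } r \text{ ran}]$ gives $\E Y_r\le(3/4)^r|\cX_0|$ and $\Pr[\text{round } r+1\text{ runs}]\le(3/4)^r|\cX_0|/(2k)$, which sums to $O(\log|\cX_0|)$. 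With that repair, your $O(k\log k)$ per-round bound and the Wald step are fine. Alternatively, simply cite \Cref{cor:epsilon-query} as the paper does and drop this paragraph.
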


The proof relies on the fact that the approximate conic dimension of $\cX$ is $O(\obj \log (1/\eps))$. The query complexity of~\Cref{alg:linear-elicitation} is larger than the information-theoretic lower bound in~\Cref{lem:linear-lower-bound}. However, in contrast to the heuristic cutting plane method, it provides an explicit approximation guarantee and an expected query-complexity bound. Another benefit of the sieving procedure is that it can be run on arbitrary finite point sets, not only on an $\eps$-cover. In that case, the guarantee becomes a maximization guarantee over the supplied point set. This is useful when the points correspond to medically meaningful profiles or to outcomes generated by policy simulations. Under noisy responses, the point removal step can introduce error that increases with $\delta$. We give a precise analysis in~\Cref{sec:appendix-sieving}, along with further details on the iterative sieving algorithm.

\subsection{Computing envelope membership}

Given $\sigma$ which is a sorted ordering of points $\vx_1,\ldots \vx_m$ where $\vx_1 = \vy$ is the maximizing value, we can determine if a point $\vx$ is dominated by $\vy$ according to an unknown weight vector $\vu^*$ using a \textit{linear program (LP)}. While we can theoretically allow for a margin of $\epsilon$ distance to the envelope, for computational efficiency we compute membership. Exact membership is a sufficient condition for the approximate envelope test and preserves the validity of every elimination. However, it may eliminate fewer points. We did not find this to be the case in our setting, and the same volume of points were removed, likely due to  $\epsilon$ being near-zero. We also compute the membership with respect to a larger envelope than~\Cref{def:envelope} by making use of the monotonicity. 

The computation is equivalent to determining whether $\vy-\vx$ lies in the cone generated by the envelope directions. We know due to the ordering of $\sigma$ that $\inner{\vx_i}{\vu^*} \ge \inner{\vx_{i+1}}{\vu^*} ~\forall ~i\in [1,m-1]$. Therefore, the difference vectors $ \vd_i = \vx_i - \vx_{i+1}$ must satisfy $\inner{\vd_i}{\vu^*} \ge 0$. We can also deduce that $\inner{\vx_1}{\vu^*} \ge \inner{\vx}{\vu^*} \iff \inner{\vy - \vx}{\vu^*} \ge 0$.

To determine if $\vy - \vx$ is in the envelope, we check if there exists a solution to the following linear program. Because we only care about the existence of a valid coefficient vector $\alpha = (\alpha_1, \ldots, \alpha_{m-1})$, we only care about finding a feasible solution.
$$
\begin{aligned}
\text{find} \quad & \alpha_1, \alpha_2, \ldots, \alpha_{m-1} \\
\text{subject to} \quad & \sum_{i=1}^{m-1} \alpha_i \vd_i \le \vy - \vx\\
& \alpha_i \ge 0, \quad \forall i \in [1, m-1]
\end{aligned}
$$

The linear program searches for a conic combination of the difference vectors that is bounded from above by the difference $\vy - \vx$. It is clear to see that if the linear program finds a feasible solution, it guarantees that for every nonnegative weight vector $\vu \ge \mathbf{0}$ satisfying the ordering constraints of $\sigma$, the utility of $\vy$ remains at least the utility of $\vx$. Consider that the LP returned a feasible $\alpha$ and let $\vu^*$ be the true weight vector. We know from the constraints of the LP that 
$$ 
    \sum_{i=1}^{m-1} \alpha_i \vd_i \le \vy - \vx.\\
$$
Taking the inner product with $\vu^*$ of both sides,
\begin{align*}
    \inner{\sum_{i=1}^{m-1} \alpha_i \vd_i }{\vu^*} & \le \inner{\vy - \vx}{\vu^*} \\
    \sum_{i=1}^{m-1} \alpha_i \inner{\vd_i}{\vu^*} & \le \inner{\vy - \vx}{\vu^*}.
\end{align*}
Since $\inner{\vd_i}{\vu^*} \ge 0$ and $\alpha_i \ge 0$, we get $0 \le \inner{\vy - \vx}{\vu^*}$. 

On the other hand, if the program is infeasible, it means that there exists at least one valid weight vector $\vu \ge \mathbf{0}$ where $\inner{\vx}{\vu} > \inner{\vy}{\vu}$.

\subsubsection{Noisy envelope}
\label{sec:noisy-envelope}

The envelope-based elimination rule is exact in the noiseless setting
because a sorted subsequence certifies a set of linear inequalities that the true weight vector must satisfy. In the noisy setting, these
certificates become approximate. As a result, conic combinations of many comparison constraints can amplify the comparison error.

Let
\[
    \sigma=(\vx_1,\ldots,\vx_m)
\]
be a subsequence sorted from largest to smallest according to the noisy
comparison oracle, and define
\[
    \vd_i=\vx_i-\vx_{i+1},
    \qquad i\in\{1,\ldots,m-1\}.
\]
In the noiseless setting, the ordering implies
\[
    \inner{\vd_i}{\vu^*}\ge 0
    \qquad
    \forall i\in\{1,\ldots,m-1\}.
\]
In the noisy setting, we assume that the sorted sequence satisfies
\[
    \inner{\vd_i}{\vu^*}\ge -\delta
    \qquad
    \forall i\in\{1,\ldots,m-1\}.
\]
Here, $\delta$ denotes the noise tolerance with respect to the Euclidean-normalized preference direction $\vu^*$. A point $\vx$ is eliminated in the noiseless setting when $\vy-\vx$ lies in, or is sufficiently close to, the envelope generated
by the sorted sequence. In the noisy setting, we search for nonnegative
coefficients $\alpha_1,\ldots,\alpha_{m-1}$ and a nonnegative slack
vector $\va\in\R^\obj_{\ge 0}$ such that
\[
    \vy-\vx
    =
    \sum_{i=1}^{m-1}\alpha_i \vd_i+\va.
\]
The coefficients $\alpha_i$ determine how much the noise can be amplified. Since each inequality $\inner{\vd_i}{\vu^*}\ge 0$ is relaxed to $\inner{\vd_i}{\vu^*}\ge -\delta$, a certificate with large $\norm{\alpha}_1$ may accumulate a large error.

For this reason, the noisy version of the envelope test should use bounded certificates. For a parameter $B\ge 0$, define the $B$-bounded envelope of $\sigma$ by
\[
    \envelope_B(\sigma)
    =
    \left\{
        \sum_{i=1}^{m-1}\alpha_i\vd_i+\va \mid
        \alpha_i\ge 0,\;
        \norm{\alpha}_1\le B,\;
        \va\in\R^\obj_{\ge 0}
    \right\}.
\]
The robust elimination rule removes $\vx$ only if
\[
    \dist(\vy-\vx,\envelope_B(\sigma))<\eps_{\mathrm{env}},
\]
where $\eps_{\mathrm{env}}$ is the tolerance of the approximate envelope.

The following proposition states the resulting utility degradation.

\begin{proposition}[Noisy envelope degradation]
\label{lem:noisy-envelope-degradation}
Suppose the sorted subsequence $\sigma=(\vx_1,\ldots,\vx_m)$ satisfies 
\[
    \inner{\vx_i-\vx_{i+1}}{\vu^*}\ge -\delta
    \qquad
    \forall i\in\{1,\ldots,m-1\}.
\]
Let $\vx$ be a point satisfying
\[
    \dist(\vy-\vx,\envelope_B(\sigma))
    \le
    \eps_{\mathrm{env}}.
\]
Then
\[
    \inner{\vx}{\vu^*}
    \le
    \inner{\vy}{\vu^*}
    +
    \eps_{\mathrm{env}}
    +
    B\delta.
\]
\end{proposition}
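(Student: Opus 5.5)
We argue directly from the decomposition in the definition of $\envelope_B(\sigma)$, using only the noisy ordering constraints, the nonnegativity of $\vu^*$, and Cauchy--Schwarz with $\norm{\vu^*}_2=1$. Throughout, $\vy=\vx_1$ is the first element of $\sigma$, as in the elimination rule.

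\textbf{Step 1: pick a nearby envelope element.} Since $\dist(\vy-\vx,\envelope_B(\sigma))\le\eps_{\mathrm{env}}$, choose $\vz\in\envelope_B(\sigma)$ with $\norm{\vy-\vx-\vz}_2\le\eps_{\mathrm{env}}$. The set $\envelope_B(\sigma)$ is closed: it is the Minkowski sum of a compact polytope (the image of the bounded simplex-like set of $\alpha$) and the closed orthant. Hence the minimum in $\dist$ is attained. If one prefers to avoid this, take an approximate minimizer within $\eps_{\mathrm{env}}+\eta$ and let $\eta\to0$ at the end. Write
\[
\vz=\sum_{i=1}^{m-1}\alpha_i\vd_i+\va, \qquad \alpha_i\ge0,\quad \norm{\alpha}_1\le B,\quad \va\ge\mathbf{0}.
\]

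\textbf{Step 2: lower-bound $\inner{\vz}{\vu^*}$.} Because $\vu^*\in\Objset$ is nonnegative, $\inner{\va}{\vu^*}\ge0$. By hypothesis each $\inner{\vd_i}{\vu^*}\ge-\delta$, and since $\alpha_i\ge0$ this gives $\alpha_i\inner{\vd_i}{\vu^*}\ge-\alpha_i\delta$. Summing,
\[
\inner{\vz}{\vu^*}\ge-\delta\norm{\alpha}_1\ge-B\delta.
\]
This is the only place the bounded-certificate condition enters, and it is what controls the amplification of noise.

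\textbf{Step 3: transfer the bound to $\vy-\vx$.} By Cauchy--Schwarz with $\norm{\vu^*}_2=1$,
\[
\inner{\vy-\vx}{\vu^*}\ge\inner{\vz}{\vu^*}-\norm{\vy-\vx-\vz}_2\ge -B\delta-\eps_{\mathrm{env}}.
\]
Rearranging gives the claim.

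None of these steps presents a real obstacle; the argument is a direct chain of three inequalities. The only point needing care is the attainment of the distance in Step 1, handled above either by closedness or by the limiting argument.
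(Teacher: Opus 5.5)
Your proposal is correct and matches the paper's proof: the paper also writes $\vy-\vx=\sum_i\alpha_i\vd_i+\va+r$ with $\norm{\alpha}_1\le B$, $\va\ge 0$, $\norm{r}_2\le\eps_{\mathrm{env}}$, and bounds the three inner products with $\vu^*$ just as you do. Your observation that $\envelope_B(\sigma)$ is closed, so the distance is attained, is a small justification the paper leaves implicit.
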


\begin{proof}
Since
\[
    \dist(\vy-\vx,\envelope_B(\sigma))
    \le
    \eps_{\mathrm{env}},
\]
there exist coefficients $\alpha_i\ge 0$ with
$\norm{\alpha}_1\le B$, a vector $\va\in\R^\obj_{\ge 0}$, and a residual
$r$ satisfying $\norm{r}_2\le \eps_{\mathrm{env}}$ such that
\[
    \vy-\vx
    =
    \sum_{i=1}^{m-1}\alpha_i\vd_i+\va+r.
\]
Taking the inner product with $\vu^*$ gives
\[
\begin{aligned}
    \inner{\vy-\vx}{\vu^*}
    &=
    \sum_{i=1}^{m-1}
        \alpha_i\inner{\vd_i}{\vu^*}
    +
    \inner{\va}{\vu^*}
    +
    \inner{r}{\vu^*}.
\end{aligned}
\]
We know from the assumption that
\[ \inner{\vd_i}{\vu^*}\ge-\delta \qquad \forall i. \]
In addition, since $\va\ge 0$ and $\vu^*\ge 0$,
\[
    \inner{\va}{\vu^*}\ge 0.
\]
Finally, since $\norm{\vu^*}_2=1$, Cauchy--Schwarz gives
\[
    \inner{r}{\vu^*}
    \ge
    -\norm{r}_2\norm{\vu^*}_2
    \ge
    -\eps_{\mathrm{env}}.
\]
Combining these bounds,
\[
    \inner{\vy-\vx}{\vu^*}
    \ge
    -\delta\sum_{i=1}^{m-1}\alpha_i
    -
    \eps_{\mathrm{env}}
    \ge
    -B\delta-\eps_{\mathrm{env}}.
\]
Rearranging gives
\[
    \inner{\vx}{\vu^*}
    \le
    \inner{\vy}{\vu^*}
    +
    \eps_{\mathrm{env}}
    +
    B\delta.
\]
\end{proof}

The parameter $B$ therefore controls the robustness of the noisy envelope rule. Without bounding $\norm{\alpha}_1$, an arbitrarily small comparison error can be amplified by a certificate with very large conic coefficients. Imposing $\norm{\alpha}_1\le B$ prevents this amplification and gives a utility degradation proportional to $B\delta$. 

This modification may reduce the number of points eliminated in each sieving round. The set $\envelope_B(\sigma)$ is smaller than the unrestricted envelope. In this case, the original query-complexity bound does not automatically apply to the noisy envelope. A query-complexity analysis could be obtained by replacing the approximate conic dimension with a $B$-restricted approximate conic dimension.

%% file: text/appendix/minimax-regret.tex
\section{Discussion of minimax regret}
\label{sec:appendix-minimax-regret}

An alternative approach to preference elicitation is based on \emph{minimax regret}~\citep{Boutilier04:Eliciting}. Given a weight space of feasible weight vectors $\cW_t$ and a feasible set of outcomes $\cP$, the maximum regret of an outcome
$\vx\in\cP$ is
\[
    \operatorname{MR}(\vx; \cW_t,\cP)
    =
    \max_{\vw\in\cW_t}
    \left[
        \max_{\vy\in\cP}\inner{\vy}{\vw}
        -
        \inner{\vx}{\vw}
    \right].
\]
Minimax regret selects an outcome that minimizes the worst-case utility loss and has been widely studied for decision support and preference elicitation~\citep{Boutilier06:Constraint}. Like ACCPM, the approach of~\citet{Boutilier04:Eliciting} provides an \emph{ex post} bound on the regret, but does not provide an \emph{ex ante} bound on the number of queries required to
reach a prescribed bound.

Our objective differs from the standard minimax-regret setting. Rather than selecting a single robust outcome, we seek to recover each stakeholder's utility vector so that the resulting utilities can be aggregated and reused across changing policy frontiers. A regret bound obtained for one feasible set does not generally transfer to another, whereas an $\epsilon$-approximation of the utility vector does.

Minimax regret can nevertheless be used to obtain such a weight-approximation guarantee if regret is defined over the domain $\Objset$. Let
\[
    \cU_t
    =
    \left\{
        \frac{\vw}{\norm{\vw}_2}
        ~\middle|~
        \vw\in\cW_t
    \right\}
\]
denote the space of Euclidean-normalized preference directions. For a candidate weight $\widehat{\vu}\in\Objset$, we define the maximum regret over $\Objset$ as
\[
\begin{aligned}
    \operatorname{MR}
    (\widehat{\vu};\cU_t)
    &=
    \max_{\vu\in\cU_t}
    \left[
        \max_{\vx\in\Objset}\inner{\vx}{\vu}
        -
        \inner{\widehat{\vu}}{\vu}
    \right] \\
    &=
    \max_{\vu\in\cU_t}
    \left[
        1-\inner{\widehat{\vu}}{\vu}
    \right].
\end{aligned}
\]
The second equality follows because
\[
    \max_{\vx\in\Objset}\inner{\vx}{\vu}=1
\]
for every $\vu\in\Objset$. Since both
$\widehat{\vu}$ and $\vu$ have unit Euclidean norm,
\[
    1-\inner{\widehat{\vu}}{\vu}
    =
    \frac{1}{2}
    \norm{\widehat{\vu}-\vu}_2^2.
\]
As a result, we obtain the equality
\[
    \operatorname{MR}
    (\widehat{\vu};\cU_t)
    =
    \frac{1}{2}
    \max_{\vu\in\cU_t}
    \norm{\widehat{\vu}-\vu}_2^2.
\]

Reducing the minimax regret over the sphere to $\epsilon^2/2$ yields the same weight-approximation guarantee considered in our other algorithms. In contrast, a small regret bound over a particular feasible policy set does not imply that the underlying utility vector has been accurately learned.

%% file: text/appendix/appendix-aggregation.tex
\section{One method for aggregating linear preferences}
\label{sec:appendix-weight-aggregation}

After eliciting individual utility functions, we aggregate stakeholder preferences into a single community-level utility function. Let $m$ denote the number of stakeholders, and let $\widehat{\vu}^{(i)}\in\Objset$ be the Euclidean-normalized preference direction elicited from stakeholder $i\in\{1,\ldots,m\}$. We then define the aggregate stakeholder weight vector by the arithmetic mean
\[
    \bar{\vu}
    =
    \frac{1}{m}
    \sum_{i=1}^m
    \widehat{\vu}^{(i)}.
\]

This aggregation rule has a simple welfare interpretation. If stakeholder $i$ has linear utility
$
    \widehat{\ut}^{(i)}(\vx)
    =
    \inner{\vx}{\widehat{\vu}^{(i)}},
$
then the average stakeholder welfare of an outcome $\vx$ is
\[
    \frac{1}{m}\sum_{i=1}^m
    \widehat{\ut}^{(i)}(\vx)
    =
    \frac{1}{m}
    \sum_{i=1}^m
    \inner{\vx}{\widehat{\vu}^{(i)}}
    =
    \inner{\vx}{\bar{\vu}}.
\]
So, optimizing the aggregate utility function $\bar{\ut}(\vx) = \inner{\vx}{\bar{\vu}}$ is equivalent to maximizing utilitarian social welfare over the elicited stakeholder utilities. The resulting vector $\bar{\vu}$ can therefore be used as the objective for downstream allocation-policy optimization. 

The $\eps$-approximation guarantee on the individual utilities also transfers under aggregation. If each elicited weight vector satisfies $\norm{\widehat{\vu}^{(i)} - \vu^{*(i)}}_2 \le \epsilon$, then $\norm{\bar{\vu}-\bar{\vu}^*}_2 \le \eps$. And, for every $\vx\in\Objset$, $\left| \inner{\vx}{\bar{\vu}} - \inner{\vx}{\bar{\vu}^*} \right| \le \epsilon$. 

Other aggregation rules, such as fairness-aware or strategy-proof social choice rules, could also be leveraged. The aggregation requires fixing a common normalization, and one could perform an analogous aggregation using exclusively simplex normalized weights.

%% file: text/appendix/appendix-proofs.tex
\section{Omitted proofs}
\label{sec:appendix-proofs}

We include the formal proofs of all claims in the paper. We begin with the proof of~\Cref{thm:general-utility-lower-bound}.

\GeneralUtilityLowerBound*

\begin{proof}
We write
\[
    \cF
    =
    \left\{
        S\subseteq [\obj] : |S|= d/2 
    \right\},
\]
so that
\[
    |\cF|=\binom{\obj}{\obj/2}.
\]
For each $S\in\cF$, if $\mathbf{1}_S\in\{0,1\}^{\obj}$ denotes the indicator vector of $S$, we define
\[
    \mathbf{p}_S
    =
    \frac{\mathbf{1}_S}{\sqrt{d/2}} \in \Objset.
\]
Now, for every family $A\subseteq\cF$, we define a utility function $\ut_A:\R^\obj_{\ge0}\to\{0,1\}$ by
\[
    \ut_A(\vx)=1
    \quad\Longleftrightarrow\quad
    \exists S\in A
    \text{ s.t. }
    \vx\ge \mathbf{p}_S
    \text{ coordinate-wise}.
\]
Each $\ut_A$ is coordinate-wise monotone: if $\vx\le \vy$ coordinate-wise and $\ut_A(\vx)=1$, then $\vx\ge \mathbf{p}_S$ for some $S\in A$, so $\vy\ge\mathbf{p}_S$ and $\ut_A(\vy)=1$.

We next observe that, on $\Objset$, the only points at which $\ut_A$ equals $1$ are the threshold points themselves. Specifically, for any $\vx\in\Objset$,
\[
    \ut_A(\vx)=1
    \quad\Longleftrightarrow\quad
    \vx=\mathbf{p}_S
    \text{ for some } S\in A.
\]
The reverse implication is immediate. For the forward implication, suppose $\ut_A(\vx)=1$. Then there exists $S\in A$ such that $\vx\ge \mathbf{p}_S$ coordinate-wise. Write $\vx=\mathbf{p}_S+\vz$. For $\vz\in\R^\obj_{\ge0}$, since $\mathbf{p}_S,\vz\ge0$ coordinate-wise,
\[
    \norm{\vx}_2^2
    =
    \norm{\mathbf{p}_S+\vz}_2^2
    =
    \norm{\mathbf{p}_S}_2^2
    +
    2\inner{\mathbf{p}_S}{\vz}
    +
    \norm{\vz}_2^2.
\]
Both $\vx$ and $\mathbf{p}_S$ lie in $\Objset$, so
\[
    \norm{\vx}_2^2
    =
    \norm{\mathbf{p}_S}_2^2
    =
    1.
\]
Thus,
\[
    2\inner{\mathbf{p}_S}{\vz}
    +
    \norm{\vz}_2^2
    =
    0.
\]
Since both terms are nonnegative, $\vz=\mathbf{0}$, and hence $\vx=\mathbf{p}_S$.

We now reduce from two-party set disjointness on the universe $\cF$. Alice receives a subset $A\subseteq\cF$, Bob receives a subset $B\subseteq\cF$, and they must decide whether
\[
    A\cap B=\varnothing.
\]
The universe size is
\[
    N=|\cF|=\binom{\obj}{\obj/2}.
\]
A standard fact is that set disjointness on a universe of size $N$ requires at least $\Omega(N)$ bits of communication to solve with probability at least $2/3$~\citep{Kalyanasundaram92:Probabilistic,Razborov90:Distributional}.

Given a set-disjointness instance $(A,B)$, we construct a welfare-maximization instance by setting $\ut_1=\ut_A$ and $\ut_2=\ut_B$. For any $\vx\in\Objset$, the characterization above implies
\[
    \ut_A(\vx)=\ut_B(\vx)=1
    \quad\Longleftrightarrow\quad
    \vx=\mathbf{p}_S
    \text{ for some } S\in A\cap B.
\]
Therefore,
\[
    \max_{\vx\in\Objset}
    \left(
        \ut_A(\vx)+\ut_B(\vx)
    \right)
    =
    \begin{cases}
        2, & \text{if } A\cap B\ne\varnothing,\\
        \le 1, & \text{if } A\cap B=\varnothing.
    \end{cases}
\]

Now, if there is a protocol using $c$ bits of communication that always outputs an exact maximizer of $\ut_1(\vx)+\ut_2(\vx)$ over $\Objset$ for every pair of monotone utilities, Alice and Bob can solve set disjointness as follows. They run the welfare-maximization protocol on the utilities $\ut_A$ and $\ut_B$ and obtain an optimal point $\widehat{\vx}\in\Objset$. Alice then sends Bob one additional bit, namely $\ut_A(\widehat{\vx})$. Bob computes $\ut_B(\widehat{\vx})$ and declares
\[
    A\cap B\ne\varnothing
    \quad\Longleftrightarrow\quad
    \ut_A(\widehat{\vx})=\ut_B(\widehat{\vx})=1.
\]
This decision rule is correct. If $A\cap B\ne\varnothing$, then the optimal welfare is $2$, so every exact optimizer $\widehat{\vx}$ satisfies
\[
    \ut_A(\widehat{\vx})+\ut_B(\widehat{\vx})=2,
\]
and hence both utilities equal $1$. If $A\cap B=\varnothing$, then no point in $\Objset$ can make both utilities equal $1$.

Thus, a $c$-bit exact welfare-maximization protocol gives a $(c+1)$-bit protocol for set disjointness on a universe of size $N$. Since deterministic set disjointness requires at least $\Omega(N)$ bits, we have $c \geq 2^{\Omega(d)}$.

Finally, a ternary comparison-based elicitation protocol with $Q$ queries has at most $3^Q$ possible response transcripts, and hence communicates at most $Q\log_2 3$ bits of information through the responses. Therefore, any such protocol that always solves the exact welfare-maximization task must also use
\[
    Q
    =
    2^{\Omega(\obj)}
\]
queries in the worst case.
\end{proof}

As a result, any ternary comparison-based elicitation protocol that solves this task also requires exponentially many queries in the worst case. This lower bound holds even for a restricted subclass of monotone utilities: binary-valued threshold utilities over a finite subset of $\Objset$. Monotonicity alone is not sufficient to obtain query-efficient guarantees. The linear model studied in the rest of the paper makes preference elicitation tractable and provides interpretability through objective weights.

We move on to the proof of~\Cref{lem:linear-lower-bound}.

\begin{restatable}[Lower bound on linear query complexity]{proposition}{LinearLowerBound}
\label{lem:linear-lower-bound}
  Any deterministic or zero-error randomized algorithm asking pairwise comparisons of profiles in $\Objset$ that outputs a weight $\widehat{\vw}$ such that $\sup_{\vx \in \Objset} | \inner{\vx}{\vw^*} - \inner{\vx}{\hvw} | \le \epsilon$ for every $\vw^*\in\cW$ must in the worst case ask $\Omega( \obj \log (1/ \epsilon))$ queries. 
\end{restatable}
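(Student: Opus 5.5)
The plan is a packing-plus-counting (information-theoretic) argument. First, I convert the utility-approximation guarantee into a weight-approximation guarantee. For $\vw^*,\hvw\in\cW$, taking $\vx=(\vw^*-\hvw)_+/\norm{(\vw^*-\hvw)_+}_2$ (or the negative part) in $\Objset$ shows that $\sup_{\vx\in\Objset}|\inner{\vx}{\vw^*-\hvw}|$ is at least $\max(\norm{(\vw^*-\hvw)_+}_2,\norm{(\vw^*-\hvw)_-}_2)\ge \norm{\vw^*-\hvw}_2/\sqrt{2}$. Hence an output satisfying the guarantee pins $\vw^*$ down to a Euclidean ball of radius $\sqrt{2}\eps$. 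If $\hvw$ is allowed outside $\cW$, I instead use the triangle inequality: two targets $\vw_1,\vw_2$ that can share an output must satisfy $\sup_{\vx}|\inner{\vx}{\vw_1-\vw_2}|\le 2\eps$, hence $\norm{\vw_1-\vw_2}_2\le 2\sqrt{2}\eps$.

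Second, I construct a large packing in $\cW$. The simplex is a $(\obj-1)$-dimensional convex body of diameter $\sqrt2$ with $(\obj-1)$-volume $\sqrt{\obj}/(\obj-1)!$. A standard volumetric argument, or more concretely the grid $\{\vw\in\cW : \vw\in (c\eps)\mathbb{Z}^\obj\}$ for a suitable constant $c$, gives a set $\cK\subseteq\cW$ whose points are pairwise more than $2\sqrt2\eps$ apart and whose size is $|\cK|\ge (c'/\eps)^{\obj-1}$ in the regime of small $\eps$. Any $\eps\le c_0/\obj$ suffices to suppress the combinatorial factors, and for larger $\eps$ the bound $\Omega(\obj\log(1/\eps))$ follows from the small-$\eps$ case by monotonicity, up to constants. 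By the first step, the algorithm's output must differ across distinct elements of $\cK$, since no single $\hvw$ is valid for two of them.

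Third, I count transcripts. A deterministic algorithm with $Q$ ternary queries is a ternary decision tree with at most $3^Q$ leaves, and the output is a function of the leaf. So $3^Q\ge|\cK|$, giving $Q\ge \log_3|\cK| = \Omega((\obj-1)\log(1/\eps))=\Omega(\obj\log(1/\eps))$ when $\obj\ge2$. For zero-error randomized algorithms, I fix any setting of the random seed: the resulting deterministic algorithm is still always correct, so its worst case is at least this bound. For expected-query bounds, I apply the same argument to the worst-case running length, or use Yao's principle with the uniform distribution on $\cK$ together with the entropy bound $\E[Q]\log 3\ge \log|\cK|$ for prefix-free ternary transcripts.

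I expect the main obstacle to be the packing count with clean constants. The two issues are choosing the grid spacing so that the separation exceeds $2\sqrt2\eps$ while keeping the count at $(1/\eps)^{\Omega(\obj)}$, and handling the $\obj$-dependence. My first attempt would be the grid $\{\vw\in\mathbb{Z}_{\ge0}^\obj/M : \sum_i w_i=1\}$ with $M=\lfloor 1/(3\eps)\rfloor$. It has $\binom{M+\obj-1}{\obj-1}\ge (M/(\obj-1))^{\obj-1}$ points with pairwise distance at least $\sqrt2/M>2\sqrt2\eps$. This yields $\Omega(\obj\log(1/(\obj\eps)))$, which equals $\Omega(\obj\log(1/\eps))$ once $\eps\le\obj^{-2}$. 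If needed, I would switch to the volumetric bound on the simplex to remove the $\obj$ in the logarithm.
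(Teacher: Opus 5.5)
Your argument is the paper's argument: the positive/negative-part bound $\sup_{\vx\in\Objset}|\inner{\vx}{\vz}|\ge\norm{\vz}_2/\sqrt{2}$, a packing $\cK$ of the simplex, and the $3^Q$ transcript count after fixing the randomness. The paper fixes a seed that is correct on the finite set $\cK$, which is the careful version of your ``any seed''. Your explicit grid is more careful than the paper, which asserts $|\cK|\ge c(1/\eps)^{\obj-1}$ with an absolute constant $c$ and so hides the $\obj$-dependence you noticed.

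Where your proposal goes wrong is the attempt to remove that dependence. The monotonicity step runs backwards. A larger $\eps$ is an easier task, so a lower bound proved at a small $\eps_0$ gives no lower bound at $\eps>\eps_0$. The volumetric bound does not rescue it either. Comparing the $(\obj-1)$-volume $\sqrt{\obj}/(\obj-1)!$ of $\cW$ with that of a $(\obj-1)$-ball of radius $\Theta(\eps)$ gives about $(C/(\sqrt{\obj}\,\eps))^{\obj-1}$ points. So the $\obj$ inside the logarithm improves to $\sqrt{\obj}$ but does not disappear, and you get $\Omega(\obj\log(1/\eps))$ only for $\eps\le\obj^{-1/2-c}$.

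No proof can do better, because with an absolute constant the statement is false for larger $\eps$. Since $\inner{\mathbf{1}}{\vw^*}=1$, the threshold test ``$\sum_{i\in A}w^*_i\ge\tau$'' is the sign of $\inner{\mathbf{1}_A-\tau\mathbf{1}}{\vw^*}$. This vector is a positive multiple of $\vx-\vy$ for suitable $\vx,\vy\in\Objset$: add a common nonnegative vector to its positive and negative parts to equalize their norms, then rescale. With such tests:
\begin{itemize}
\item binary splitting finds all coordinates with $w^*_i\ge\eps^2/2$ using $O(\eps^{-2}\log\obj)$ queries;
\item binary search estimates each of these coordinates to within $\eps^2/2$;
\item the resulting vector, projected onto $\cW$ if required, has squared error below $\eps^2/2+\eps^2/2$, so it is within Euclidean distance, hence utility distance, $\eps$ of $\vw^*$.
\end{itemize}
That is $O(\eps^{-2}\log(\obj/\eps))$ queries, which is $o(\obj)$ for constant $\eps$. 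You should drop the monotonicity sentence and state the result for $\eps\le\obj^{-1/2-c}$, or, with your grid, for $\eps\le\obj^{-1-c}$. The paper's proof needs the same restriction implicitly.
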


\begin{proof}
Let
\[
    D(\vw_1,\vw_2)
    =
    \sup_{\vx\in\Objset}
    \left|
        \inner{\vx}{\vw_1-\vw_2}
    \right|
\]
denote the induced utility-error metric. We first observe that $D$
dominates Euclidean distance up to a constant on $\cW$. Let
$\vz=\vw_1-\vw_2$. Since $\vw_1,\vw_2\in\cW$, we have
$\sum_{\ell=1}^{\obj} z_\ell=0$. Let $\vz_+$ and $\vz_-$ denote the
positive and negative parts of $\vz$. Then
\[
    \max\{\norm{\vz_+}_2,\norm{\vz_-}_2\}
    \ge
    \frac{\norm{\vz}_2}{\sqrt{2}}.
\]
Since the normalized positive or negative part belongs to $\Objset$, it
follows that
\[
    D(\vw_1,\vw_2)
    \ge
    \frac{1}{\sqrt{2}}\norm{\vw_1-\vw_2}_2 .
\]
Therefore, after changing constants, any Euclidean packing of the simplex
also gives a packing under $D$.

Now construct a $2\epsilon$-separated packing set
$\cK\subset\cW$ such that for any distinct
$\vw_1,\vw_2\in\cK$,
\[
    D(\vw_1,\vw_2)>2\epsilon.
\]
Since $\cW$ is an $(\obj-1)$-dimensional simplex, there exists such a
packing with
\[
    |\cK|\ge c(1/\epsilon)^{\obj-1}
\]
for a constant $c>0$.

The following transcript-counting argument is immediate for deterministic algorithms. It also applies to randomized algorithms that succeed with probability one for every $\vw^*\in\cW$. Since the packing set $\cK$ is finite, there exists a realization of the algorithm's randomness under which the algorithm succeeds for every $\vw^*\in\cK$. We can fix such a realization, making the algorithm deterministic on $\cK$.

Each pairwise comparison yields at most three outcomes:
$\succ$, $\prec$, and $\approx$. Hence, an adaptive algorithm asking
$Q$ queries can induce at most $3^Q$ distinct transcripts. If
\[
    3^Q < |\cK|,
\]
then by the pigeonhole principle there must exist two distinct target
weights $\vw_1,\vw_2\in\cK$ that lead to the same transcript. The
algorithm must therefore output the same estimate $\widehat{\vw}$ for
both possible true weights.

By the triangle inequality for $D$,
\[
    2\epsilon
    <
    D(\vw_1,\vw_2)
    \le
    D(\vw_1,\widehat{\vw})
    +
    D(\vw_2,\widehat{\vw}).
\]
Thus,
\[
    \max\{
        D(\vw_1,\widehat{\vw}),
        D(\vw_2,\widehat{\vw})
    \}
    >
    \epsilon.
\]
Therefore, the algorithm fails to achieve error at most $\epsilon$ for
at least one of the two possible true weights. Consequently, any
algorithm that succeeds for all $\vw^*\in\cW$ must satisfy
\[
    3^Q\ge |\cK|.
\]
Taking logarithms gives
\[
    Q
    \ge
    \log_3 |\cK|
    =
    \Omega((\obj-1)\log(1/\epsilon)).
\]
\end{proof}

We continue with the proofs of correctness for the ACCPM algorithm.

\AccpmWeightApprox*

\begin{proof}
The true weight vector $\vw^*$ satisfies all generated pairwise constraints across all iterations. Therefore, $\vw^*$ must be contained within $\cW_t$. The final estimate $\widehat{\vw}$ is in the same set. It follows from the termination condition that  $\norm{\vw^* - \widehat{\vw}}_2 \le \epsilon$.
\end{proof}

And the utility function approximation guarantee follows.

\AccpmUtilityApprox*

\begin{proof}
For any $\vx \in \Objset$ we can write $|\inner{\vx}{\vw^*} - \inner{\vx}{\hvw}| = |\inner{\vx}{\vw^* - \hvw}|.$ By Cauchy-Schwarz, 
$$|\inner{\vx}{\vw^* - \hvw}| \le \norm{\vw^* - \widehat{\vw}}_2 \cdot \norm{\vx}_2.$$

Since $\norm{\vx}_2 = 1$, we can apply~\Cref{lem:accpm-weight-approximation} to complete the proof.

\end{proof}

Next, we prove the correctness of the iterative sieving algorithm.

\SievingWeightApprox*

\begin{proof}
Let $\vx_u\in\cX$ be the point closest to $\vu^*$. Since $\cX$ is an
$\eps/\sqrt{2}$-cover of $\Objset$,
\[
    \norm{\vx_u-\vu^*}_2
    \le
    \frac{\eps}{\sqrt{2}}.
\]
Because both $\vx_u$ and $\vu^*$ lie on the unit sphere,
\[
    \norm{\vx_u-\vu^*}_2^2
    =
    2-2\inner{\vx_u}{\vu^*}.
\]
Therefore,
\[
    1-\inner{\vx_u}{\vu^*}
    =
    \frac{1}{2}\norm{\vx_u-\vu^*}_2^2
    \le
    \frac{\eps^2}{4}.
\]
Since $\inner{\vu^*}{\vu^*}=1$, this implies
\[
    \inner{\vu^*}{\vu^*}
    -
    \inner{\vx_u}{\vu^*}
    \le
    \frac{\eps^2}{4}.
\]

By applying \Cref{cor:epsilon-query} with 
$\eps^2/4$, the returned point $\widehat{\vu}$ satisfies
\[
    \inner{\widehat{\vu}}{\vu^*}
    \ge
    \inner{\vx_u}{\vu^*}
    -
    \frac{\eps^2}{4}.
\]
Therefore,
\[
    \inner{\vu^*}{\vu^*}
    -
    \inner{\widehat{\vu}}{\vu^*}
    \le
    \frac{\eps^2}{2}.
\]
Since $\widehat{\vu},\vu^*\in\Objset$,
\[
    \norm{\widehat{\vu}-\vu^*}_2^2
    =
    2-2\inner{\widehat{\vu}}{\vu^*}
    \le
    \eps^2.
\]
Taking square roots gives
\[
    \norm{\widehat{\vu}-\vu^*}_2\le \eps.
\]
\end{proof}

The proof of~\Cref{cor:sieving-utility-approximation} follows from the application of the Cauchy--Schwarz inequality.

\SievingUtilityApprox*

\begin{proof}
Let $\vx\in\Objset$. Then
\[
    \left|
        \inner{\vx}{\widehat{\vu}}
        -
        \inner{\vx}{\vu^*}
    \right|
    =
    \left|
        \inner{\vx}{\widehat{\vu}-\vu^*}
    \right|.
\]
By Cauchy--Schwarz,
\[
    \left|
        \inner{\vx}{\widehat{\vu}-\vu^*}
    \right|
    \le
    \norm{\vx}_2\norm{\widehat{\vu}-\vu^*}_2.
\]
Since $\vx\in\Objset$, $\norm{\vx}_2=1$, and the result follows from
\Cref{lem:sieving-epsilon-approximation}.
\end{proof}

We move on to the proof of the query complexity of~\Cref{alg:linear-elicitation}, leveraging the results of~\citet{Cohen25:Combinatorial}.

\SievingQueryComplexity*

\begin{proof}
    The size of $\cX$ is $|\cX| \le  C\left(1/\eps\right)^\obj$. Taking the logarithm yields $\log |\cX| = O(\obj \log (1/\eps))$. From \Cref{lem:approx-conic-dim-upperbound}, we know that $\CDmath_{\eps^2/4}(\cX) = O(\obj \log (1/\eps))$ as $\cX \subset \BB^\obj$ (denoting the $d$-dimensional unit ball). The result then follows from \Cref{cor:epsilon-query}.
\end{proof}

Finally, we prove the correctness of the hybrid algorithm. 

\HybridWeightApprox*

\begin{proof}
We consider the two possible ways the algorithm can terminate. First suppose that the cutting plane phase certifies $\diam_2(\cW_T) \le \frac{\eps}{\sqrt{\obj}}$.
The algorithm chooses $\bar{\vw}_T\in\cW_T$ and returns $\widehat{\vu} = \frac{\bar{\vw}}{\norm{\bar{\vw}}_2}$.
Let $\frac{\vu^*}{\norm{\vu^*}_1} = \vw^*$. Since $\vw^*\in\cW_T$, the diameter condition gives
\[
    \norm{\bar{\vw}_T-\vw^*}_2
    \le
    \frac{\eps}{\sqrt{\obj}}.
\]

For any $\vw,\vw'\in\cW$, Euclidean normalization satisfies
\[
    \left\|
        \frac{\vw}{\norm{\vw}_2}
        -
        \frac{\vw'}{\norm{\vw'}_2}
    \right\|_2
    \le
    \sqrt{\obj}\norm{\vw-\vw'}_2.
\]
Applying this with $\vw=\bar{\vw}_T$ and $\vw'=\vw^*$ gives the final bound. 

Now suppose the algorithm proceeds to the sieving phase. Before entering the sieving loop, the algorithm initializes
\[
    \vy
    =
    \argmax_{\vx\in\cX}\inner{\vx}{\bar{\vw}_T}
\]
and removes any point $\vx$ satisfying
\[
    \max_{\vw\in\cW_T}\inner{\vx-\vy}{\vw}\le 0.
\]
This pre-sieve elimination is safe since $\vw^*\in\cW_T$, and any
removed point satisfies
\[
    \inner{\vx-\vy}{\vw^*}\le 0.
\]
Because $\vu^*=\norm{\vu^*}_1\vw^*$, this implies
\[
    \inner{\vx-\vy}{\vu^*}
    =
    \norm{\vu^*}_1\inner{\vx-\vy}{\vw^*}
    \le 0.
\]
So the pre-sieve pruning step never removes a point that is strictly
better than the initialized incumbent $\vy$.

During the sieving iterations, the first elimination rule,  \Circled{1}, is exactly the ordinary sieving elimination rule. To complete the proof, it is sufficient to show that the additional elimination rule, \Circled{2}, is safe.

Consider one sieving iteration with sorted sequence
\[
    \sigma=(\vx_1,\ldots,\vx_m),
\]
ordered from largest to smallest, and let
\[
    \vy=\vx_1.
\]
The algorithm defines
\[
    \cW_\sigma
    =
    \left\{
        \vw\in\cW_T:
        \inner{\vx_i-\vx_{i+1}}{\vw}\ge 0
        \quad
        \forall i=1,\ldots,m-1
    \right\}.
\]
Because the sorted order is correct under the true preference direction $\vu^*$, we have
\[
    \inner{\vx_i-\vx_{i+1}}{\vu^*}\ge 0
    \qquad
    \forall i=1,\ldots,m-1.
\]
Since $\vw^*=\vu^*/\norm{\vu^*}_1$ is a positive rescaling of $\vu^*$,
this also implies
\[
    \inner{\vx_i-\vx_{i+1}}{\vw^*}\ge 0
    \qquad
    \forall i=1,\ldots,m-1.
\]
Together with $\vw^*\in\cW_T$, we obtain
\[
    \vw^*\in\cW_\sigma.
\]

Now suppose a candidate $\vx$ satisfies the second elimination rule:
\[
    \max_{\vw\in\cW_\sigma}
    \inner{\vx-\vy}{\vw}
    \le 0.
\]
Since $\vw^*$ is feasible for this maximization problem,
\[
    \inner{\vx-\vy}{\vw^*}\le 0.
\]
Using $\vu^*=\norm{\vu^*}_1\vw^*$ and
$\norm{\vu^*}_1>0$, we get
\[
    \inner{\vx-\vy}{\vu^*}
    =
    \norm{\vu^*}_1
    \inner{\vx-\vy}{\vw^*}
    \le 0.
\]
So the second elimination rule never removes a point that is strictly better than the current sampled maximum $\vy$ under the true Euclidean-normalized utility. Therefore, the hybrid elimination rule is at least as safe as the ordinary sieving elimination rule. The result then follows from~\Cref{lem:sieving-epsilon-approximation}.
\end{proof}

The proofs of~\Cref{cor:hybrid-utility-approximation} and~\Cref{cor:hybrid-query-complexity} follow immediately from~\Cref{cor:sieving-utility-approximation} and~\Cref{lem:sieving-query-complexity} respectively. 

%% file: text/appendix/appendix-heart-transplant-elicitation.tex
\section{Further details on US heart transplantation objectives}
\label{sec:appendix-objectives}

We provide additional details on the six objectives.

\begin{enumerate}
    \item \textit{Average life years gained per donor}: The average projected additional years of life a recipient will gain from a transplant compared to remaining on the waitlist averaged across all donor hearts. For example, if a patient would live only one year on the waitlist without a transplant, but receives a heart and lives 11 years, the life years gained are 10 years. The average projected additional years captures how effectively each donor heart is being used.

    \item \textit{One year waitlist survival rate}: The percentage of waitlisted patients who do not receive a transplant that survive one year after being listed. The survival rate among non-transplanted patients reflects how the most medically urgent candidates are prioritized to prevent death on the waitlist. However, reducing mortality on the waitlist does not always equate to better post-transplant outcomes; a sick patient receiving a transplant may live only a short time following the operation, which is reflected by a low additional years of life of the recipient.

    \item \textit{One year waitlist transplant rate}: The percentage of waitlisted patients who receive a heart transplant within one year of being listed. The transplant rate among patients within one year of listing reflects how efficient the system is at reducing wait times. A high one year waitlist transplant rate does not capture the effectiveness of those transplants, but rather prioritizes minimizing the time patients have to wait.

    \item \textit{Socio-economically disadvantaged patient transplant rate}: The percentage of waitlisted patients who are from a disadvantaged community that receive a transplant. A disadvantaged community is determined by a \textit{distressed community index (DCI)} $\ge 80$. The transplant rate among this community is a measure of fairness of the system, and a higher value prioritizes socio-economically disadvantaged patients with respect to the rest of the patient population.

    \item \textit{Biologically disadvantaged patient transplant rate}: The percentage of waitlisted patients who are biologically disadvantaged that receive a transplant. A biologically disadvantaged patient is defined as a patient with \textit{calculated panel reactive antibody (cPRA)} $> 50\%$ or blood type O. Due to biological factors, these patients are harder to match to donor hearts due to constraints on the donor biological characteristics such as blood type. The transplant rate among this community is a measure of fairness of the system, and a higher value prioritizes biologically disadvantaged patients with respect to the rest of the patient population.

    \item \textit{Average miles per donor}: The average distance a donor heart travels to the recipient in miles. A distance less than 100 miles typically uses ground transport, whereas greater distances rely on air transport. The distance is a measure of system efficiency and risk. The larger the transport distance, the more patients can potentially benefit from a donor heart, but at a higher risk of organ wastage or graft failure due to the limited ischemic time of hearts ($<4$ hours).
\end{enumerate}

%% file: text/appendix/appendix-figs-tables.tex
\section{Omitted tables and figures}

\ifthenelse{\boolean{isSingleColumn}}{%
        \renewcommand{\imgwidth}{0.7\linewidth}
    }{%
        \renewcommand{\imgwidth}{\linewidth}
    }%

\begin{figure}[h!]
    \centering
    \begin{subfigure}[b]{0.49\linewidth}
        \centering
        \includegraphics[width=\imgwidth]{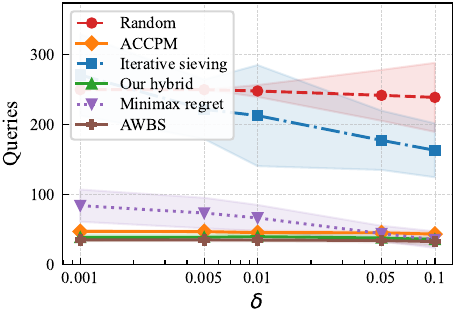}
        \caption{Queries per algorithm.}
        \label{fig:noisy_queries}
    \end{subfigure}
    \hfill
    \begin{subfigure}[b]{0.49\linewidth}
        \centering
        \includegraphics[width=\imgwidth]{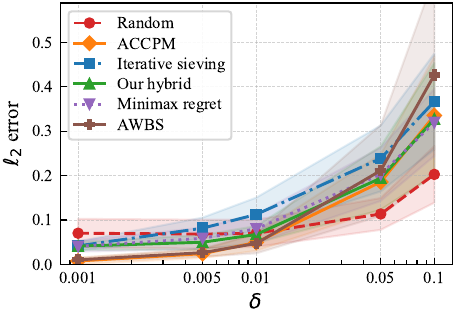}
        \caption{$\ell_2$ error per algorithm.}
        \label{fig:noisy_l2}
    \end{subfigure}
    \caption{Results with a noisy oracle parameterized by $\delta$, using random weight vectors with a max budget of 250 queries. Shaded area shows standard deviation across 20 trials per $\delta$.}
    \label{fig:noisy_results}
\end{figure}

\begin{table}[h!]
    \centering
    \small
    \begin{tabular}{ccccc}
        \toprule
        \textbf{Sample size} & \textbf{Mean queries} & \textbf{Std queries} \\
        \midrule
        19 & 7.6 & 10.9 \\
        \bottomrule
    \end{tabular}
    \caption{Distribution of queries asked among users who did not complete the elicitation.}
    \label{tab:human-dropout}
\end{table}

%We also provide a more detailed distribution of the elicited weight vectors in~\Cref{tab:human-weight-distribution}.

\ifthenelse{\boolean{isSingleColumn}}{%
        \renewcommand{\imgwidth}{0.5\linewidth}
    }{%
        \renewcommand{\imgwidth}{0.8\linewidth}
    }%
    
\begin{figure}[h!]
	\centering
    \includegraphics[width=\imgwidth]{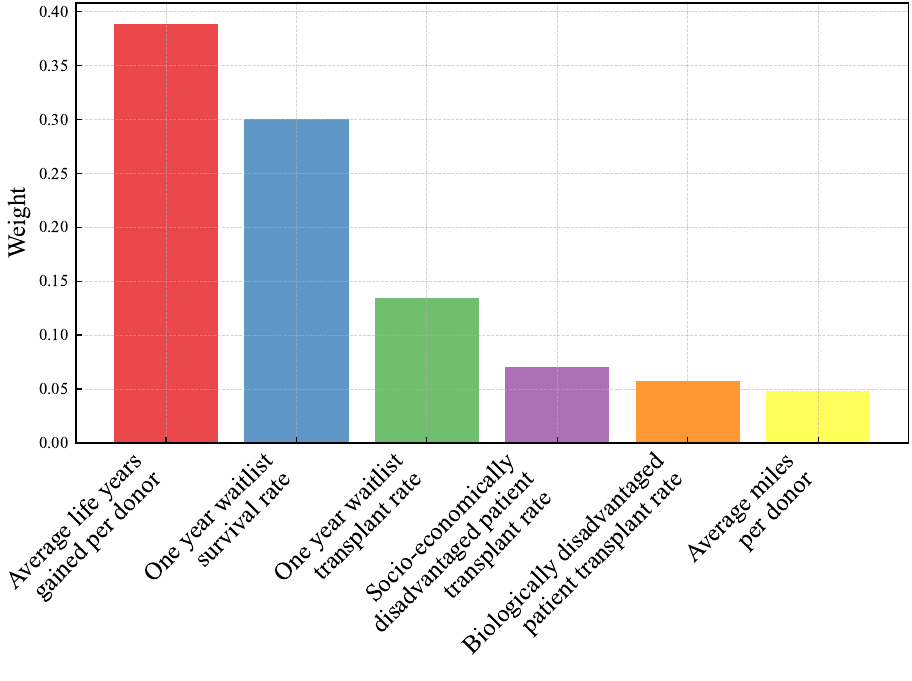}
	\caption{Average weight vector of user utility functions.}
	\label{fig:human-weights}
\end{figure}

%% file: text/appendix/appendix-hyperparameters.tex
\section{Hyperparameter settings and compute details for preference elicitation}
\label{sec:appendix-hyperparameters}

We detail the relevant hyperparameter settings used in experiments and for the final hybrid algorithm powering the preference elicitation with real users in~\Cref{tab:hybrid-hyperparameters}. The hyperparameters were set following a tuning phase in simulation. All simulated experiments are conducted on an M4 Pro processor with 24GB unified memory. For trial $i$ in the experiments, we utilize a random seed of $i \times 31 + 7$ to generate the weight vector and $i \times 97 + 13$ for the noisy oracle. The iterative sieving algorithm uses a fixed random seed of $1$. For the human experiments, we leverage a Linux cluster equipped with dual AMD EPYC 7252 CPUs (16 physical cores, 32 hardware threads) and 512 GB of RAM. 

\begin{table}[h!]
\centering
\begin{tabular}{@{}lcc@{}}
\toprule
\textbf{Hyperparameter} & \textbf{Symbol} & \textbf{Value} \\
\midrule

Target Euclidean accuracy
& $\eps$
& 0.05 \\

ACCPM query budget
& $T$
& 30 \\

ACCPM query separation
& $\gamma$
& 0.3 \\

ACCPM constraint slack
& $\delta$
& 0.01 \\

Iterative sieving sampling constant
& $c$
& 0.5 \\

Iterative sieving $\norm{\alpha}_1$ bound
& $B$
& 4.0 \\

\bottomrule
\end{tabular}
\caption{Hyperparameters for the hybrid cutting plane and sieving algorithm.}
\label{tab:hybrid-hyperparameters}
\end{table}

%% file: text/appendix/appendix-policy-optimization.tex
\section{Further details on policy optimization}
\label{sec:appendix-policy-optimization}

The simulation of historical data models the real dynamics of donor arrivals, patient waitlist additions and removals, and the progression of patient conditions (\textit{e.g.}, updating lab results). Standard constraints are enforced in simulation such as compatibility constraints (the donor and patient must have compatible blood types) and geographic constraints (the donor and patient must be located within 1,000 nautical miles of each other). We restrict the simulation to adult transplantation (age 18+). The simulator experiments are conducted on an M4 Pro processor with 24GB unified memory.

The simulation relies on estimates of the life years gained (for counterfactual allocations). As is standard in organ allocation works, we estimate the life years gained with a Cox proportional hazards models~\citep{Cox72:Regression}. The Cox estimators are fitted using data from 1987-2022 from the UNOS patient registry. The training set contains $60,055$ examples of transplant outcomes and $120,282$ examples of waitlist survival times. We utilize 120 covariates of a patient-donor pair for graft survival prediction, and 30 covariates of a patient for waitlist survival prediction.

For policy optimization, we train and evaluate using data from 2019. The training set is from January to March 2019, and is the data for which we perform policy optimization. All policies are then evaluated on the real, unseen data from April to December 2019.  

\subsection{Potential-based allocation policy}

We build on the notion of \textit{potentials} for our policy optimization. Potentials were first introduced by~\citet{Dickerson12:Dynamic} for kidney exchange. Potentials were originally introduced to optimize the \textit{population life years gained} (\textit{i.e.}, the first of our six objectives). A potential-based allocation policy assigns each patient on the current waitlist an additive score of their predicted life years gained and their potential value. The predicted life years gained quantifies the immediate utility of transplantation, while the potential value captures the long-term value of maintaining certain patients in the pool. We use potentials differently; since we are not just maximizing the population life years, but instead balancing multiple objectives, we use the potentials to learn a corrective term that accounts for the tradeoff between the life years gained objective and the other five objectives.  

To learn the potentials, we leverage the newer, more scalable framework from~\citet{Zilberstein26:Learning}.  Their method uses self-supervised imitation learning to fit the potential function. They compute, using a hindsight optimal integer program (\textit{i.e.}, the supervisor), the optimal (for life years gained) allocation over the historical training set. The potentials, which are used online with unseen data,  are trained to mimic the decision of the supervisor. We adapt the original framework by swapping the optimization objective from the utilitarian population life years gained to the elicited utility function from users. We use the Gurobi Optimizer version 12.0.3~\citep{Gurobi26:Gurobi} for the supervising integer program. Our potential function uses a \textit{multi-layer perceptron (MLP)}, and is trained using a listwise ListNet loss function (\textit{e.g.}, minimize the \textit{Kullback-Leibler (KL)} divergence between the predicted softmax distribution over patients on the waitlist and the ground-truth distribution of what the supervisor selected). We include the relevant information for the MLP in~\Cref{tab:hyperparams-nn}. Our architecture is based on the original model used in~\citet{Zilberstein26:Near}, and we refer the reader to that paper for further details. 

\begin{table}[hbtp!]
    \centering
    \small
    \begin{tabular}{llc}
    \toprule
    \textbf{Category} & \textbf{Hyperparameter} & \textbf{Value} \\
    \midrule
    \textbf{Architecture} & Hidden layers & (128, 64, 32) \\
     & Activation function & Leaky ReLU \\
     & Leak coefficient & $1 \times 10^{-2}$ \\
     & Output activation & Identity \\
     & Input features & 11 \\
    \midrule
    \textbf{Training} & Optimizer & Adam \\
     & Loss function & Listwise ListNet \\
     & Learning rate & $1 \times 10^{-4}$ \\
     & Batch size & 32 \\
     & Epochs & 25 \\
     & Gradient clipping (norm) & 1.0 \\
     & L2 regularization & $1 \times 10^{-3}$ \\
     & Dropout rate & 0.3 \\
    \bottomrule
    \end{tabular}
    \caption{Hyperparameters for neural network potential function.}
    \label{tab:hyperparams-nn}
\end{table}

\subsection{Greedy allocation policy}

The greedy algorithm is a baseline for maximizing the life years gained in isolation from the other objectives. The greedy policy selects the patient with the largest predicted life years gained to transplant at each donor arrival.

\subsection{\textit{Status quo} policy}

The \textit{status quo} policy in operation for adult heart transplants in the US uses a rule-based algorithm to assign patients into one of six tiers~\citep{Kilic21:Evolving}. 
The tier-determination is based on factors such as medical urgency, blood type, and geographic proximity. Within a tier, patients are prioritized based on wait time. When a donor is available, the allocation follows a lexicographic order of the tiers. The \textit{status quo} policy is agnostic to any utility function and was hand-crafted by domain experts.

We include the full tier list~\Cref{tab:priority_tiers}.  There are 68 total priority tiers (tier 1 is highest priority).  

\begin{table*}[htbp]
\footnotesize
\centering
\caption{\emph{Status quo} priority tiers.}
\label{tab:priority_tiers}
\begin{tabular}{clll@{\hspace{1em}}|@{\hspace{1em}}clll}
\toprule
\textbf{Tier} & \textbf{Status} & \textbf{Blood match} & \textbf{Distance (nm)} & \textbf{Tier} & \textbf{Status} & \textbf{Blood match} & \textbf{Distance (nm)} \\
\midrule
1 & 1 & Primary & $\leq 500$ & 35 & 2 & Primary & $\leq 2500$ \\
2 & 1 & Secondary & $\leq 500$ & 36 & 2 & Secondary & $\leq 2500$ \\
3 & 2 & Primary & $\leq 500$ & 37 & 3 & Primary & $\leq 2500$ \\
4 & 2 & Secondary & $\leq 500$ & 38 & 3 & Secondary & $\leq 2500$ \\
5 & 3 & Primary & $\leq 250$ & 39 & 4 & Primary & $\leq 1000$ \\
6 & 3 & Secondary & $\leq 250$ & 40 & 4 & Secondary & $\leq 1000$ \\
7 & 1 & Primary & $\leq 1000$ & 41 & 5 & Primary & $\leq 1000$ \\
8 & 1 & Secondary & $\leq 1000$ & 42 & 5 & Secondary & $\leq 1000$ \\
9 & 2 & Primary & $\leq 1000$ & 43 & 6 & Primary & $\leq 1000$ \\
10 & 2 & Secondary & $\leq 1000$ & 44 & 6 & Secondary & $\leq 1000$ \\
11 & 4 & Primary & $\leq 250$ & 45 & 1 & Primary & Any \\
12 & 4 & Secondary & $\leq 250$ & 46 & 1 & Secondary & Any \\
13 & 3 & Primary & $\leq 500$ & 47 & 2 & Primary & Any \\
14 & 3 & Secondary & $\leq 500$ & 48 & 2 & Secondary & Any \\
15 & 5 & Primary & $\leq 250$ & 49 & 3 & Primary & Any \\
16 & 5 & Secondary & $\leq 250$ & 50 & 3 & Secondary & Any \\
17 & 3 & Primary & $\leq 1000$ & 51 & 4 & Primary & $\leq 1500$ \\
18 & 3 & Secondary & $\leq 1000$ & 52 & 4 & Secondary & $\leq 1500$ \\
19 & 6 & Primary & $\leq 250$ & 53 & 5 & Primary & $\leq 1500$ \\
20 & 6 & Secondary & $\leq 250$ & 54 & 5 & Secondary & $\leq 1500$ \\
21 & 1 & Primary & $\leq 1500$ & 55 & 6 & Primary & $\leq 1500$ \\
22 & 1 & Secondary & $\leq 1500$ & 56 & 6 & Secondary & $\leq 1500$ \\
23 & 2 & Primary & $\leq 1500$ & 57 & 4 & Primary & $\leq 2500$ \\
24 & 2 & Secondary & $\leq 1500$ & 58 & 4 & Secondary & $\leq 2500$ \\
25 & 3 & Primary & $\leq 1500$ & 59 & 5 & Primary & $\leq 2500$ \\
26 & 3 & Secondary & $\leq 1500$ & 60 & 5 & Secondary & $\leq 2500$ \\
27 & 4 & Primary & $\leq 500$ & 61 & 6 & Primary & $\leq 2500$ \\
28 & 4 & Secondary & $\leq 500$ & 62 & 6 & Secondary & $\leq 2500$ \\
29 & 5 & Primary & $\leq 500$ & 63 & 4 & Primary & Any \\
30 & 5 & Secondary & $\leq 500$ & 64 & 4 & Secondary & Any \\
31 & 6 & Primary & $\leq 500$ & 65 & 5 & Primary & Any \\
32 & 6 & Secondary & $\leq 500$ & 66 & 5 & Secondary & Any \\
33 & 1 & Primary & $\leq 2500$ & 67 & 6 & Primary & Any \\
34 & 1 & Secondary & $\leq 2500$ & 68 & 6 & Secondary & Any \\
\bottomrule
\end{tabular}
\end{table*}

%% file: text/appendix/irb.tex
\section{IRB disclosure}
\label{sec:appendix:irb}

This study was deemed exempt by the Institutional Review Board(s) (IRB) of the home institution(s). Informed consent was obtained from all participants prior to data collection. All data were anonymized and securely stored to protect the privacy and confidentiality of participants.